\documentclass{article}

\usepackage[preprint]{neurips_2026}

\usepackage[utf8]{inputenc} 
\usepackage[T1]{fontenc}    
\usepackage{hyperref}       
\usepackage{url}            
\usepackage{booktabs}       
\usepackage{amsfonts}       
\usepackage{nicefrac}       
\usepackage{microtype}      
\usepackage{xcolor}         

\usepackage{graphicx}
\usepackage{subcaption}

\usepackage{amsmath}
\usepackage{amssymb}
\usepackage{mathtools}
\usepackage{amsthm}
\usepackage{tikz}

\usetikzlibrary{arrows.meta, calc, backgrounds, positioning}
\usepackage{thmtools}
\usepackage{thm-restate}
\usepackage{dsfont}
\usepackage{algorithm}
\usepackage{algorithmic,tcolorbox}

\usepackage[capitalize,noabbrev]{cleveref}

\theoremstyle{plain}
\newtheorem{theorem}{Theorem}[section]
\newtheorem{proposition}[theorem]{Proposition}
\newtheorem{lemma}[theorem]{Lemma}
\newtheorem{corollary}[theorem]{Corollary}
\theoremstyle{definition}

\theoremstyle{remark}

\newcommand{\bs}{\boldsymbol}

\newcommand{\RR}{\ensuremath{\mathbb{R}}}

\newcommand{\EE}{\ensuremath{\mathbb{E}}}
\newcommand{\PP}{\ensuremath{\mathbb{P}}}

\newcommand{\cZ}{\ensuremath{\mathcal{Z}}}
\newcommand{\cX}{\ensuremath{\mathcal{X}}}
\newcommand{\cN}{\ensuremath{\mathcal{N}}}

\newcommand{\cM}{\ensuremath{\mathcal{M}}}

\newcommand{\cP}{\ensuremath{\mathcal{P}}}

\newcommand{\cA}{\ensuremath{\mathcal{A}}}

\newcommand{\cD}{\ensuremath{\mathcal{D}}}

\renewcommand{\epsilon}{\varepsilon}
\renewcommand{\phi}{\varphi}

\renewcommand{\leq}{\leqslant}
\renewcommand{\geq}{\geqslant}

\newcommand{\emp}{\text{emp}}

\newcommand{\Memp}{\mathcal M^\emp}

\newcommand{\SeMIstar}{\mathrm{SeMI}^{\ast}}
\newcommand{\SeMISGD}{\mathrm{SeMI}^{\mathrm{SGD}}}
\newcommand{\SeMISGDmax}{\mathrm{SeMI}^{\mathrm{SGD}}_{\max}}
\newcommand{\SeMIunif}{\mathrm{SeMI}^{\mathrm{Unif}}}
\newcommand{\SeMImax}{\mathrm{SeMI}^{\max}}

\newcommand{\cU}{\mathcal{U}}

\newcommand{\bP}{\mathbb{P}}

\definecolor{Bleu}{HTML}{20bd89}
\definecolor{Red}{HTML}{f14e0c}
\hypersetup{colorlinks,citecolor=Bleu,linkcolor=Red}

\usepackage[disable,textsize=tiny]{todonotes}

\title{Sequential Membership Inference Attacks}

\author{%
  Thomas Michel\\
  Univ. Lille, Inria, CNRS, Centrale Lille\\
  UMR 9189-CRIStAL, France \\
  \And
  Debabrota Basu \\
  Univ. Lille, Inria, CNRS, Centrale Lille\\
  UMR 9189-CRIStAL, France \\
  \And
  Emilie Kaufmann\\
  Univ. Lille, Inria, CNRS, Centrale Lille\\
  UMR 9189-CRIStAL, France \\
}

\begin{document}

\maketitle

\begin{abstract}
  Modern AI models are not static. They go through multiple updates in their lifecycles.
  We propose to design Sequential Membership Inference (SeMI) attacks leading to tighter privacy audits by exploiting the sequence of models and injecting a target canary at a controlled insertion time.
  First, for empirical mean computation, we develop $\mathrm{SeMI}^{\ast}$, an {optimal SeMI attack to identify the presence of a target inserted at a specific insertion step}.
  We derive the power of $\mathrm{SeMI}^{\ast}$ to show that accessing the model sequence yields more powerful MI attacks than scrutinising only the final model.
  $\mathrm{SeMI}^{\ast}$ exhibits an isolation property-- its power depends on the statistics obtained right before and after insertion of the target.
  Leveraging this insight, we develop practical white-box (accessing model gradients) and black-box (accessing loss) SeMI attacks against models trained with (DP-)SGD.
  Across datasets and models trained with (DP-)SGD, our experiments show that SeMI attacks achieve higher powers than snapshot-independent baselines, and yield tighter privacy audits thanks to (a) control over the insertion time and (b) observations across the model sequence.
\end{abstract}

\section{Introduction}\label{sec:introduction}

Machine Learning (ML) models memorize training data, creating privacy risks for individuals whose data were used to develop them~\citep{shokri2017membership, carlini2022membership}. Research further establishes that even simple statistical models, like empirical mean estimation, leak membership information~\citep{homer2008resolving,azizeTargetsAreHarder2025}. 
While differential privacy~\citep{dwork2006calibrating, dpbook} aim to bound privacy leakage by algorithms, privacy auditing quantifies these risks by testing whether specific records, referred to as target or \textit{canary}, can be detected in the training set.
\textit{Membership Inference (MI) attacks} serve as the primary tool for auditing: an attacker, who distinguishes members from non-members with high confidence, establishes an empirical lower bound on the model's privacy leakage~\citep{jayaraman2019evaluating, jagielski2020auditing,nasrTightAuditingDifferentially2023}. MIs with higher distinguishing power yield tighter audits.
In addition, the target selection matters as points `far' from the data distribution leak more information than typical points~\citep{azizeTargetsAreHarder2025, carlini2022privacyonion}.

\cite{jayaraman2019evaluating} report 5--10$\times$ gaps between theoretical DP guarantees and empirical privacy estimates. Techniques for tight auditing~\citep{nasrTightAuditingDifferentially2023, steinkePrivacyAuditingOne2023} and debugging~\citep{tramer2022debugging} reduce this gap by carefully crafting adversarial targets and observing per-step gradients during training. While training access is confined to the developer, downstream auditors only see the released models, whether through full weight access or query-only APIs.
In addition, \textit{modern ML models rarely exist as isolated snapshots.} They are trained or fine-tuned in successive phases, with a new snapshot released after each: versioned foundation models, task-specific fine-tunes distributed on top of public backbones, continually-updated production systems, and federated models aggregated across rounds.
\textit{We show that scrutinizing the sequence of released models yields tighter audits than a single final snapshot},
while most of the auditors target a specific model snapshot. 

\cite{jagielskiHowCombineMembershipInference2023} demonstrate that combining the MI scores across model updates improves attack accuracy. They apply standard MI attacks separately to each model snapshot, and then, aggregate the scores via difference or ratio operations.
Similarly, the benefits of leveraging model snapshots are empirically shown for reconstruction attacks to recover data from online learning updates~\citep{salem2020updates}, machine unlearning attacks~\citep{chen2020machine}, and leakage analysis of snapshots in NLP~\citep{zanella2020analyzing}.
\textit{Our approach differs by deriving the optimal MI attacks via likelihood ratio tests and identifying the insertion time of a canary as a new actionable lever for tighter audits.}

\begin{figure}[t!]
	\centering\vspace*{-.8em}
	\includegraphics[width=.8\linewidth]{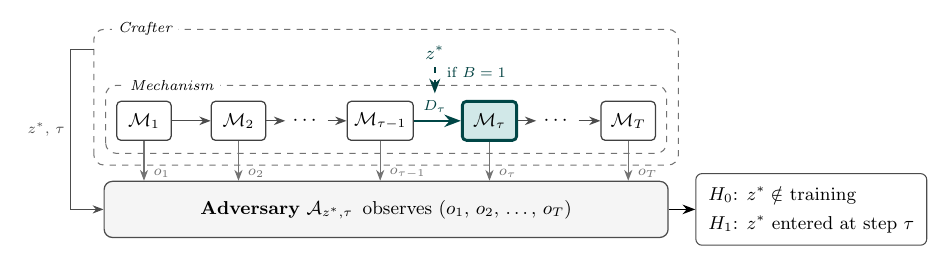}\vspace*{-1em}
	\caption{\textbf{Sequential MI game:} Crafter controls a mechanism that releases $T$ outputs $(o_1, \ldots, o_T)$. With probability $1/2$, the target $z^*$ is inserted into batch $D_\tau$ at step $\tau$. Otherwise, the mechanism runs unmodified. The adversary observes all outputs and decides whether $z^*$ is inserted ($H_1$) or not ($H_0$).}\label{fig:setup}\vspace*{-1.5em}
\end{figure}

Specifically, \textit{we study Membership Inference (MI) when the auditor observes a sequence of $T$ model snapshots released across training phases rather than a single static one.} We formalize this \textit{Sequential MI (SeMI)} setting as a hypothesis testing game between a crafter and an adversary over $T$ outputs with a choice of the insertion time $\tau$. We show how Type-I/II errors of SeMI yield a privacy lower bound for auditing.
The sequential setting offers two levers for designing tight audits: \textit{selecting targets that leak information, and choosing when to insert them}. Specifically, we ask:
\vspace*{-0.5em}\begin{tcolorbox}[top=2pt,bottom=2pt,left=1pt,right=1pt]
	1. \textit{Does SeMI with insertion-time control improve over a static attack on the final model?}\\
	2. \textit{Does theory-driven design of SeMI attacks yield stronger MI attacks against (DP)-SGD?}\\
	3. \textit{Do SeMI attacks with canary and insertion-time control lead to tighter privacy audits?}
\end{tcolorbox}

\textbf{Our contributions} address these questions affirmatively.


\textbf{(1) Optimal SeMI attack, isolation property, and knowledge of $\tau$.} Motivated by the optimal static MI attack~\cite{azizeTargetsAreHarder2025} derived from Neyman--Pearson lemma~\citep{neyman1933ix}, we derive the optimal SeMI attack $\SeMIstar$ as a likelihood-ratio (LR) test parameterized by the insertion time $\tau$. For empirical mean computation on Gaussian data, we obtain the closed-form Type-I and Type-II errors. The analysis exposes an \emph{isolation property}: consecutive outputs reveal the batch mean at $\tau$, so the LR depends on the data only through this batch statistic. The power of $\SeMIstar$ depends on the batch size $n$ and the target's distance from the mean, not on $\tau$ or $T$.
A single-observation attack on the final snapshot pools the target among $nT$ samples and dilutes as training data accumulates. We further study $\SeMImax$ and $\SeMIunif$, which assume $\tau$ is unknown or uniform, and find that $\SeMIstar$ dominates both.

\textbf{(2) White- and black-box SeMI attacks on (DP-)SGD.} Since (DP-)SGD~\citep{kiefer1952stochastic, abadi2016deep} is the standard for training private ML models, we extend the SeMI attack to (DP-)SGD via a batch-gradient Gaussianity assumption. This yields $\SeMISGD$ and shows that the isolation property holds for its per-transition log-LR. In the white-box setting, $\SeMISGD$ takes a Mahalanobis-weighted score between the parameter update and the target gradient. In the black-box setting, where the auditor observes only target losses, we propose two per-transition surrogates, Loss Diff and Loss Ratio, recovering Back-Front and Delta heuristics of~\cite{jagielskiHowCombineMembershipInference2023} as restrictions to the inserted phase.

\textbf{(3) From tighter SeMI attacks to sequential privacy audits.} We formalize privacy auditing of sequential mechanisms as a testing game that encodes the auditor's control over target selection and insertion time, and turn confidence intervals on the Type-I and Type-II errors of SeMI attacks on (DP-)SGD into $\varepsilon$ lower bounds. Auditing a softmax classifier trained with DP-SGD on Fashion-MNIST, we find that (i) the lower bound improves with the number of released snapshots $T$ at fixed total budget, (ii) at fixed $T$, $\SeMISGD$ produces tighter bounds than every loss-based and snapshot-independent baseline (Delta, Back-Front, LiRA) (iii) knowing the insertion time tightens the audit relative to its worst-case counterpart, and (iv) audit tightness depends strongly on $\tau$, giving the auditor a lever for tighter bounds beyond target selection.

\vspace*{-1em}\section{Auditing and Membership Inference for Sequential Mechanisms}\label{sec:mi_tests}\vspace*{-.8em}
We formalise MI attacks against sequential mechanisms and their privacy audits. Data is collected over $T$ steps: at step $t \in [T]$, a batch $D_t$ of $n$ i.i.d.\ samples from $\mathcal{D}_t$ arrives, and a sequential mechanism $\mathcal{M}_t$ is applied to the accumulated dataset, releasing $o_t \triangleq \mathcal{M}_t(\bigcup_{i=1}^t D_i)$. We write $\cM$ for the global mechanism with input $D \triangleq \bigcup_{i=1}^{T} D_i$ and output $o \triangleq (o_1,\dots,o_T)$. The mechanisms and distributions may vary over time, modelling an ML system trained or fine-tuned in successive phases with each version released to users.
\textit{We aim to assess the privacy of the global mechanism $\cM$ while observing the released outputs of the intermediate mechanisms.}

To quantify the privacy level of a mechanism, we adopt the classical $(\varepsilon, \delta)$-Differential Privacy (DP) definition~\citep{dpbook}. A mechanism $\cM$ satisfies $(\varepsilon, \delta)$-DP if for all measurable sets of output $S$ and datasets $D, D'$ differing in one entry, and some $\varepsilon>0$ and $\delta \in [0,1)$,
\begin{equation}
    \Pr[\mathcal{M}(D) \in S] \leq e^\varepsilon \Pr[\mathcal{M}(D') \in S] + \delta\,. \label{def:privacy}
\end{equation}
Given some $\delta$, \textit{privacy auditing amounts to finding a lower bound on $\varepsilon$ for which \eqref{def:privacy} is satisfied.}

Two dominant approaches to auditing exist. The first estimates the maximum $\epsilon$ for which the hockey-stick divergence $\max\{\Pr[\cM(D) \in S] - e^\varepsilon \Pr[\cM(D') \in S], 0\}$ is below $\delta$~\citep{koskelaAuditingDifferentialPrivacy2024,basu2026sublinear}. The second uses the hypothesis-testing interpretation of DP~\citep{wasserman2010statistical, kairouz2015composition}, running MI tests (that are called \emph{attacks}) and turning their error probabilities into a lower bound on $\varepsilon$. The second approach also guides the design of optimal MI attacks and privacy-leaking canaries~\citep{nasrTightAuditingDifferentially2023,azizeTargetsAreHarder2025}, and we adopt it for sequential mechanisms.

\setlength{\textfloatsep}{8pt}
\begin{figure}[t!]
    \begin{minipage}[t]{0.54\linewidth}
        \begin{algorithm}[H]
            \caption{SeMI Game}
            \label{alg:crafter}
            \begin{algorithmic}[1]
                \STATE \textbf{Input:} Mechanisms $(\mathcal{M}_1, \ldots, \mathcal{M}_T)$, data distributions $(\mathcal{D}_1, \ldots, \mathcal{D}_T)$, batch size $n$, target $z^*$, insertion time $\tau$, adversary $\cA$
                \STATE Sample $B \sim \nu_{B}$
                \FOR{$t = 1, \ldots, T$}
                \STATE Sample batch $D_t \sim \mathcal{D}_t^n$
                \IF{$B = 1$ and $t = \tau$}
                \STATE Sample $j \sim \mathrm{Uniform}(\{1, \ldots, n\})$
                \STATE Replace $D_t[j] \leftarrow z^*$
                \ENDIF
                \STATE Compute $o_t = \mathcal{M}_t\left(\bigcup_{i=1}^t D_i\right)$
                \ENDFOR
                \STATE Adversary guesses $\widehat{B} = \cA(z_\star,\tau,\{o_t\})$
                \STATE \textbf{Output:} $B, \widehat{B}$
            \end{algorithmic}
        \end{algorithm}
    \end{minipage}
    \hfill
    \begin{minipage}[t]{0.42\linewidth}\vspace*{.4em}
        \begin{algorithm}[H]
            \caption{SeMI Audit}
            \label{alg:mi-game}
            \begin{algorithmic}[1]
                \STATE \textbf{Input:} Mechanisms $(\mathcal{M}_1, \ldots, \mathcal{M}_T)$, data distributions $(\mathcal{D}_1, \ldots, \mathcal{D}_T)$, batch size $n$, target $z^*$, insertion time $\tau$, adversary $\cA$, rounds $R$, confidence $\xi$, privacy $\delta$
                \FOR{$r = 1, \ldots, R$}
                \STATE $B_r,\widehat{B}_r \gets \mathrm{SeMIGame}(\tau,\cA)$
                \ENDFOR
                \STATE Compute Clopper--Pearson upper bounds $\overline{\alpha}_R^{\xi}, \overline{\beta}_R^{\xi}$ at confidence level $\xi/2$ (Appendix~\ref{app:audit-cp-grid})
                \STATE \textbf{Output:} lower bound on $\varepsilon$ from~\eqref{eq:lb_epsilon} with $(\overline{\alpha}_R^{\xi},\overline{\beta}_R^{\xi})$
            \end{algorithmic}
        \end{algorithm}
    \end{minipage}
\end{figure}

\textbf{MI attacks} identify whether a target $z^*$ is present in the input dataset $D$ from the output of a mechanism~\citep{carlini2022membership,ye2022enhanced,leemann2023gaussian}. A crafter designs the canary $z^*$ and decides whether to insert it. With sequential access, we further ask whether $z^*$ belongs to a \textit{specific sub-dataset $D_{\tau}$, where $\tau \in \mathbb{N}$ is the insertion time}. A sequential MI attack tests two hypotheses parameterized by $z^*$:

$\mathbf{H_0}~ (\text{OUT}):$ $D \sim \cP_0$, if for all $t \in [T]$, $D_t \sim \cD_t^{\otimes n}$.

$\mathbf{H_1^{\tau}}~ (\text{IN}_{\tau}):$ $D \sim \cP_1^{\tau}$, if for all $t \in [T]$, $D_t \sim \cD_t^{\otimes n}$, and given $J \sim \mathcal{U}([n])$, the $J$-th entry of $D_{\tau}$ is replaced by $z^*$.

A \textbf{Sequential MI (SeMI) game}, stated in Algorithm~\ref{alg:crafter} and illustrated in Figure~\ref{fig:setup}, takes an insertion time $\tau$\footnote{For readability we suppress the dependence on $z_\star$, since our focus is on where to insert it.}. The crafter draws $B \sim \mathcal{B}(1/2)$ and samples $D$ from $\cP_0$ if $B=0$ or from $\cP_1^{\tau}$ otherwise. The mechanism $\cM$ produces $o_{1:T}:=(o_1,\dots,o_T)$. A \textbf{SeMI attack} is an adversary $\cA$ mapping $(\tau, z_\star, o_{1:T})$ to a guess $\widehat{B} = \cA(\tau,z_\star,o_{1:T})$ for $B$. The associated errors are
   \begin{align*}
        \alpha(\tau,\cA) & \triangleq \bP_{D \sim \cP_0}\!\left[\cA(\tau,z_\star,\cM(D)) = 1\right],      \quad
        \beta(\tau,\cA)  \triangleq \bP_{D \sim \cP_1^{\tau}}\!\left[\cA(\tau,z_\star,\cM(D)) = 0\right]\;.
    \end{align*}

A general adversary that can exploit the knowledge of $\tau$ coincides with a statistical test of
\begin{eqnarray}
    \mathbf{H_0}~(\text{OUT}): (D \sim \cP_0) \ \ \text{ against }  \ \ \mathbf{H_1^{\tau}}~(\text{IN}_{\tau}): (D \sim \cP_1^{\tau})\label{def:test_tau}
\end{eqnarray}
and $\alpha(\tau,\cA)$ and $\beta(\tau,\cA)$ are respectively the type I and type II errors of this test.
The following extension of~\cite[]{kairouz2015composition} relates these two testing errors to the privacy parameters (see Appendix~\ref{app:audit-lemma-proof}).


\begin{restatable}[Connecting Sequential MI Attack Error and DP Audits]{lemma}{kairouzaudit}\label{lem:kairouz-audit}
    Fix a target $z^* \in \cZ$ and an insertion time $\tau$. For any sequential MI attack $\cA$, if the mechanism $\cM$ is $(\varepsilon, \delta)$-DP, then
    \begin{align*}
        \alpha(\tau,\cA) + e^{\varepsilon} \beta(\tau,\cA) & \geq 1-\delta, \qquad
        \beta(\tau,\cA) + e^{\varepsilon} \alpha(\tau,\cA) \geq 1-\delta.
    \end{align*}
\end{restatable}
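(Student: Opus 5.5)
The plan is to reduce the statement to the pointwise $(\varepsilon,\delta)$-DP inequality~\eqref{def:privacy} by a coupling argument. The subtlety is that $\cP_0$ and $\cP_1^{\tau}$ are distributions over datasets, not fixed neighbouring datasets. I would couple them as follows. Draw $D \sim \cP_0$ and $J \sim \mathcal{U}([n])$ independently, and let $D'$ be $D$ with the $J$-th entry of $D_\tau$ replaced by $z^*$. Then $D' \sim \cP_1^{\tau}$, and $D, D'$ differ in at most one entry (exactly one, or zero if the entry already equals $z^*$, which DP trivially covers). Since the global mechanism $\cM$ acts on $D=\bigcup_t D_t$ and releases $o_{1:T}$, the DP guarantee of $\cM$ applies to every realisation of the pair $(D,D')$.

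For the first inequality, fix the attack $\cA$. I would first treat it as deterministic given $(\tau, z^*, o)$; randomised attacks are handled by conditioning on the internal randomness, or by absorbing it into the post-processing. Let $S = \{o : \cA(\tau,z^*,o)=1\}$, a measurable set of outputs. For each realisation $(d,d')$ of the coupled pair, DP gives both
\[
\Pr[\cM(d')\in S^c] \leq e^{\varepsilon}\Pr[\cM(d)\in S^c]+\delta
\quad\text{and}\quad
\Pr[\cM(d)\in S^c] \leq e^{\varepsilon}\Pr[\cM(d')\in S^c]+\delta .
\]
Taking expectation over the coupling, and using that $\cM$'s randomness is independent of the data sampling, yields
\[
\bP_{\cP_1^\tau}[o\in S^c]\leq e^\varepsilon \bP_{\cP_0}[o\in S^c]+\delta
\quad\text{and}\quad
\bP_{\cP_0}[o\in S^c]\leq e^\varepsilon \bP_{\cP_1^\tau}[o\in S^c]+\delta .
\]
The same holds with $S$ in place of $S^c$. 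So the pair of output laws $(P,Q) = (\mathrm{Law}_{\cP_0}(o),\mathrm{Law}_{\cP_1^\tau}(o))$ is itself $(\varepsilon,\delta)$-indistinguishable in both directions, since mixtures preserve this property.

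It then remains to read off the inequalities. By definition $\alpha = P(S)$ and $\beta = Q(S^c)$. Applying $P(S^c)\leq e^\varepsilon Q(S^c)+\delta$ gives $1-\alpha \leq e^\varepsilon\beta+\delta$, which is the first claim. Applying $Q(S)\leq e^\varepsilon P(S)+\delta$ gives $1-\beta\leq e^\varepsilon \alpha+\delta$, which is the second. For a randomised $\cA$ with internal coin $U$ independent of everything else, I would apply this argument to each fixed $u$ and average the resulting linear inequalities over $u$, since both sides are linear in $(\alpha,\beta)$.

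The main obstacle is justifying the averaging step rigorously. The events $S$ may depend on $\tau$ and $z^*$, but these are fixed, so $S$ is a fixed set. The key point is that DP is a worst-case statement over neighbouring pairs, so it survives taking expectations over any joint law supported on neighbouring pairs. I would state this as a one-line claim: if $\Pr[\cM(d)\in S]\leq e^\varepsilon\Pr[\cM(d')\in S]+\delta$ for all neighbouring $(d,d')$, then integrating against the coupling measure preserves the inequality by linearity and Fubini. Note that the sampled index $J$ and the position of the replaced entry do not matter, because DP quantifies over all neighbouring pairs.
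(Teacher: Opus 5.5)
Your proposal is correct and follows essentially the same route as the paper: couple $D\sim\cP_0$ with $D'\sim\cP_1^{\tau}$ through the uniformly drawn index $J$ in batch $\tau$, apply the $(\varepsilon,\delta)$-DP inequality to each neighbouring realisation in both directions, and average over the coupling to obtain $1-\alpha\leq e^{\varepsilon}\beta+\delta$ and $1-\beta\leq e^{\varepsilon}\alpha+\delta$. The differences are cosmetic. The paper handles randomised attacks by invoking post-processing for $\cA\circ\cM$, whereas you condition on the attack's coins and average the linear inequalities, which amounts to the same thing. Your remark that $D$ and $D'$ may coincide when the replaced entry already equals $z^*$ adds a bit of care that the paper's ``exactly one element'' claim glosses over.
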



\subsection{From SeMI Attacks to Privacy Audits}

Lemma~\ref{lem:kairouz-audit} yields a lower bound of privacy level for any insertion time and any adversary
\begin{equation}
    \varepsilon \geq \log\left(\max \left[\frac{1 - \delta - \alpha(\tau,\cA)}{\beta(\tau,\cA)},\frac{1 - \delta - \beta(\tau,\cA)}{\alpha(\tau,\cA)}\right]\right).\label{eq:lb_epsilon}
\end{equation}

A privacy audit (Algorithm~\ref{alg:mi-game}) runs $R$ independent rounds of the MI game and plugs high-confidence upper bounds on $\alpha(\tau,\cA)$ and $\beta(\tau,\cA)$ into~\eqref{eq:lb_epsilon}. In our experiments, we use a refinement of this scheme that can be applied to any attack comparing a statistic to a threshold. Instead of fixing $\cA$ (a threshold $\gamma$ on the attack statistic) in advance, we calibrate a grid of thresholds $\gamma_1,\dots,\gamma_K$ on separate runs, compute Clopper--Pearson confidence intervals~\citep{clopper1934use} at every grid point, and report the maximum lower bound across the grid with a union bound. 
Appendix~\ref{sec:audit_protocol} details this procedure and a calibration-free alternative based on the DKW inequality.
A tight lower bound now requires designing an insertion time and a SeMI attack that together achieve the smallest possible $\alpha$ and $\beta$.

\subsection{From Hypothesis Tests to SeMI Attacks}

The connection between SeMI attacks and statistical hypothesis testing guides the design of effective adversaries. For the theoretical analysis, we adopt the white-box setting~\citep{sablayrolles2019white,carlini2022membership,nasrTightAuditingDifferentially2023, maddockCANIFECraftingCanaries2023}, where the adversary knows the data distributions $(\mathcal{D}_t)$, mechanisms $(\mathcal{M}_t)$, batch size $n$, and total steps $T$. 
Our auditing procedure carries over to other threat models: the form of $o_t$ changes (e.g., query outputs under black-box API access) and the adversary's test adapts accordingly, while the overall procedure remains the same.

As noted earlier, a SeMI attack that knows the insertion time $\tau$ corresponds to a statistical test for \eqref{def:test_tau} based on the observation $o_{1:T}=\cM(D)$. When the data-generating distributions $\cD_1,\dots,\cD_T$ are known, the Neyman-Pearson lemma~\citep{neyman1933ix} establishes that a Likelihood Ratio (LR) test is optimal for this simple hypothesis testing problem. Computing the LR associated to that test, under some assumptions on the distributions, is the main approach that we advocate in this paper, leading to $\SeMIstar$ for the empirical mean mechanism (Section \ref{sec:sequential_test}) and to $\SeMISGD$ for the DP-SGD mechanism (Section \ref{sec:practical_attack}).

However, we also explore SeMI attacks that are agnostic to the insertion time but can exploit the sequential nature of data collection. SeMI attacks $\cA(z_\star,o_{1:T})$ only depend on $z_\star$ and on the output of the mechanism. We first propose to compute a Generalized Likelihood Ratio (GLR) to test $\mathbf{H_0}$ against $ \cup_t\mathbf{H}_1^{t}: (\exists \tau : D \sim \cP_{1}^{\tau})$
leading to the $\SeMImax$ and to $\SeMISGDmax$ attacks in our two settings. Finally, for the empirical mean mechanism, we also compute $\SeMIunif$, that is defined as a Likelihood Ratio test for $ \mathbf{H_0}$ against $\widetilde{\mathbf{H}_1}: \ ( D \sim \widetilde{\cP_{1}})$
where $\widetilde{\cP_1}$ is defined as: sample $\tau \sim \cU(\{1,\dots,T\})$ and then given $\tau$ sample $D$ from $\cP_1^{\tau}$.

In the next section, we see that both types of approaches lead to tighter bounds on $\varepsilon$ than the optimal MI attack with access to the final model only.

\vspace*{-0.3em}\section{SeMI Attacks for the Empirical Mean Mechanism}\label{sec:optimal_test}\vspace*{-0.3em}

For the empirical mean mechanism with Gaussian data, the likelihood ratio between $\mathbf{H_0}$ and $\mathbf{H_1^\tau}$ depends on the observed sequence only through the batch mean at the insertion step. This isolation property makes the power of the resulting attack, $\SeMIstar$, independent of $\tau$ and of $T$. The same argument carries over to distribution shifts, multiple insertions, and unknown insertion times.

\subsection{Optimal Test: $\SeMIstar$}\label{sec:sequential_test}

We analyze the empirical mean mechanism $\Memp$ with $\cD_t = \cD = \cN(\bs\mu, \bs\Sigma)$ for all $t$, mean vector $\bs\mu \in \RR^d$, and a positive definite covariance matrix $\bs\Sigma \in \RR^{d \times d}$. Each batch $D_t$ contains $n$ i.i.d.\ samples from $\cD$.\footnote{Though we fix the batch size for ease of exposition, SeMI tackles varying batch sizes.} At step $t$, $\Memp$ outputs the cumulative empirical mean $\hat{\bs\mu}_t \triangleq \frac{1}{nt} \sum_{j=1}^{t} \sum_{k=1}^{n} \mathbf{X}_{j,k}$, satisfying the recursion $\hat{\bs\mu}_t = (1-\frac{1}{t})\hat{\bs\mu}_{t-1} + \frac{1}{t}\bar{\mathbf{X}}_t$ with the $t$-th batch mean $\bar{\mathbf{X}}_t \triangleq \frac{1}{n}\sum_{k=1}^n X_{t,k}$.

\begin{restatable}[$\SeMIstar$ Score: Multivariate Log Likelihood Ratio (LR)]{theorem}{multivariateloglr}    \label{thm:multivariate-log-lr}
    Let $\mathbf{N_\tau} \triangleq \bar{\mathbf{X}}_\tau - \bs\mu$, and $c(n) \triangleq \frac{n}{n-1}$. The log-LR for testing $\mathbf{H_0}$ against $\mathbf{H_1^\tau}$ is
    \begin{align}\label{eq:semistar}
        \log \mathrm{LR}_\tau & = \frac{d}{2}\log c(n) - \frac{1}{2(n-1)}\mathbf{N_\tau}^\top \bs\Sigma^{-1} \mathbf{N_\tau} + c(n) \mathbf{N_\tau}^\top \bs\Sigma^{-1}(\mathbf{z}^* - \bs\mu) - \frac{m^*}{2(n-1)},\vspace*{-1em}
    \end{align}
    where $m^* = \|\mathbf{z}^* - \bs\mu\|_{ \bs\Sigma^{-1}}^2$ is squared Mahalanobis distance of the target from the population mean.
\end{restatable}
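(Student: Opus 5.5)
The plan is to reduce the full observation $o_{1:T}=(\hat{\bs\mu}_1,\dots,\hat{\bs\mu}_T)$ to the sequence of batch means, and then observe that the two hypotheses differ only in the law of a single batch mean. By the recursion $\hat{\bs\mu}_t=(1-\frac1t)\hat{\bs\mu}_{t-1}+\frac1t\bar{\mathbf X}_t$, the map $(\bar{\mathbf X}_1,\dots,\bar{\mathbf X}_T)\mapsto(\hat{\bs\mu}_1,\dots,\hat{\bs\mu}_T)$ is linear and invertible, since $\bar{\mathbf X}_t = t\hat{\bs\mu}_t-(t-1)\hat{\bs\mu}_{t-1}$. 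Its Jacobian does not depend on the hypothesis, so it cancels in the likelihood ratio. Hence $\mathrm{LR}_\tau$ equals the likelihood ratio of $(\bar{\mathbf X}_1,\dots,\bar{\mathbf X}_T)$ under $\mathbf{H_1^\tau}$ versus $\mathbf{H_0}$.

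Next I identify the two joint laws of the batch means. Under both hypotheses the batches are independent, so the batch means are independent. For $t\neq\tau$, $\bar{\mathbf X}_t\sim\cN(\bs\mu,\bs\Sigma/n)$ under both hypotheses, and these factors cancel in the ratio. This is the isolation property. For $t=\tau$:
\begin{itemize}
\item under $\mathbf{H_0}$, $\bar{\mathbf X}_\tau\sim\cN(\bs\mu,\bs\Sigma/n)$;
\item under $\mathbf{H_1^\tau}$, $\bar{\mathbf X}_\tau=\frac1n\big(\mathbf z^*+\sum_{k\neq J}\mathbf X_{\tau,k}\big)$.
\end{itemize}
The uniform index $J$ is irrelevant: by exchangeability of the i.i.d.\ samples, the law of the sum is the same for every $J$, so the mixture over $J$ collapses. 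Therefore, under $\mathbf{H_1^\tau}$,
\[
\mathbf N_\tau=\bar{\mathbf X}_\tau-\bs\mu\sim\cN\!\left(\tfrac1n(\mathbf z^*-\bs\mu),\,\tfrac{n-1}{n^2}\bs\Sigma\right).
\]
Consequently $\log\mathrm{LR}_\tau=\log\phi_1(\mathbf N_\tau)-\log\phi_0(\mathbf N_\tau)$, a ratio of two Gaussian densities in $d$ dimensions.

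The final step is to expand this difference. The log-determinant ratio is
\[
-\tfrac12\log\det\!\big(\tfrac{n-1}{n^2}\bs\Sigma\big)+\tfrac12\log\det\!\big(\tfrac1n\bs\Sigma\big)=\tfrac d2\log\tfrac{n}{n-1}=\tfrac d2\log c(n).
\]
Writing $\mathbf a=\mathbf z^*-\bs\mu$, the quadratic forms are
\[
-\tfrac{n^2}{2(n-1)}\big(\mathbf N_\tau-\tfrac{\mathbf a}{n}\big)^\top\bs\Sigma^{-1}\big(\mathbf N_\tau-\tfrac{\mathbf a}{n}\big)+\tfrac n2\,\mathbf N_\tau^\top\bs\Sigma^{-1}\mathbf N_\tau .
\]
Expanding yields three contributions:
\begin{itemize}
\item a pure quadratic term in $\mathbf N_\tau$;
\item the cross term $\frac{n}{n-1}\mathbf N_\tau^\top\bs\Sigma^{-1}\mathbf a=c(n)\,\mathbf N_\tau^\top\bs\Sigma^{-1}(\mathbf z^*-\bs\mu)$;
\item the constant $-\frac{m^*}{2(n-1)}$.
\end{itemize}
These match the last two terms of \eqref{eq:semistar}.

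I expect the main obstacle to be the pure quadratic coefficient. My direct computation gives $\frac n2-\frac{n^2}{2(n-1)}=-\frac{n}{2(n-1)}$, whereas \eqref{eq:semistar} displays $-\frac{1}{2(n-1)}$. I would recheck this against the paper's normalisation of $\mathbf N_\tau$, which would have to be rescaled by $\sqrt n$ for the two to agree, and then reconcile all three terms consistently. The structural claims, namely isolation and dependence only on $\mathbf N_\tau$ and $m^*$, are unaffected by this constant.
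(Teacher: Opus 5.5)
Your derivation is correct, and it is essentially the paper's argument. The paper factorizes the joint density of $(\hat{\bs\mu}_1,\dots,\hat{\bs\mu}_T)$ by the chain rule into conditionals $\hat{\bs\mu}_t\mid\hat{\bs\mu}_{t-1}$ and cancels the factors with $t\neq\tau$ (Lemma~\ref{lem:lr-simplification}). It then takes the Gaussian log-ratio for $\hat{\bs\mu}_\tau\mid\hat{\bs\mu}_{\tau-1}$, with covariances $\bs\Sigma/(\tau^2 n)$ and $(n-1)\bs\Sigma/(\tau^2 n^2)$, and lets the powers of $\tau$ cancel at the end.

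Your invertible linear change of variables to the batch means, with a hypothesis-free Jacobian, reaches the same single Gaussian ratio more directly. It avoids the $\tau$ bookkeeping and any convention for $\hat{\bs\mu}_0$ when $\tau=1$. It also makes explicit, via exchangeability, why the random index $J$ drops out, which the paper leaves implicit.

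On the coefficient, trust your computation. The paper's own appendix version, Theorem~\ref{thm:multivariate-log-lr-app}, has the quadratic term $-\frac{n}{2(n-1)}\mathbf{N}_\tau^\top\bs\Sigma^{-1}\mathbf{N}_\tau$, and so do Corollary~\ref{cor:multivariate-quadratic-form} and the univariate Theorem~\ref{thm:log-lr}. The $-\frac{1}{2(n-1)}$ in the main-text display is therefore a typo.

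Do not try to reconcile it by renormalising $\mathbf{N}_\tau$. Replacing $\mathbf{N}_\tau$ by $\sqrt{n}\,\mathbf{N}_\tau$ repairs the quadratic term, but it turns the cross-term coefficient into $\sqrt{n}/(n-1)\neq c(n)$, so no consistent rescaling exists. Commit to $-\frac{n}{2(n-1)}=-\frac{c(n)}{2}$. Two quick checks confirm it:
\begin{itemize}
\item It gives $\mathbb{E}_{\mathbf{H_0}}[\mathrm{LR}_\tau]=1$, as any likelihood ratio must, whereas $-\frac{1}{2(n-1)}$ does not.
\item For $d=1$, it makes the maximum of the log-LR, attained at $\mathbf{N}_\tau=\mathbf z^*-\bs\mu$, equal to the value $\gamma_{\max}=\frac12\bigl(m^*+\log c(n)\bigr)$ used in Lemma~\ref{lem:type1-sequential}.
\end{itemize}
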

\vspace*{-.5em}

The test statistic depends on the target only through its Mahalanobis distance $m^*$. Targets farther from the mean in Mahalanobis distance are more detectable. The $\SeMIstar$ recovers the asymptotic analysis of~\cite{azizeTargetsAreHarder2025} in the regime $d, n \to \infty$ with $d/n = \mathcal{O}(1)$. Our finite-sample expression  \eqref{eq:semistar} adds a quadratic term in $\mathbf{N_\tau}$ and a normalizer term ($\log c(n)$) that vanish as $n \to \infty$. We call $\SeMIstar$ the attack that thresholds \eqref{eq:semistar}. Appendix~\ref{app:multivariate} gives the full derivation and error analysis. 

\textbf{Isolation Property and Power of $\SeMIstar$}. The full LR over all the observations factorizes as a product of conditional densities over all transitions (Lemma~\ref{lem:lr-simplification}). For $t \neq \tau$, the conditional distribution of $\hat{\mu}_t$ given $\hat{\mu}_{t-1}$ is identical under both hypotheses: the target either has not yet been inserted ($t < \tau$) or its effect is absorbed into $\hat{\mu}_{t-1}$ ($t > \tau$). Those terms cancel, leaving the transition at $\tau$.
\begin{restatable}[Isolation Property]{lemma}{lrsimplification}
    \label{lem:lr-simplification}
    The LR for testing $\mathbf{H_0}$ against $\mathbf{H_1^\tau}$ based on $(\hat{\mu}_1, \ldots, \hat{\mu}_T)$ satisfies
    \begin{equation*}
        \frac{p(\hat{\mu}_1, \ldots, \hat{\mu}_T \mid \mathbf{H_1^\tau})}{p(\hat{\mu}_1, \ldots, \hat{\mu}_T \mid \mathbf{H_0})} = \frac{p(\hat{\mu}_\tau \mid \hat{\mu}_{\tau-1}, \mathbf{H_1^\tau})}{p(\hat{\mu}_\tau \mid \hat{\mu}_{\tau-1}, \mathbf{H_0})}\,.
    \end{equation*}
\end{restatable}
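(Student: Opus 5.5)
The plan is to write both joint densities by the chain rule and show that every factor except the one at $t=\tau$ is the same under $\mathbf{H_0}$ and $\mathbf{H_1^\tau}$. For $h\in\{0,1\}$, with the convention $\hat{\mu}_0$ trivial, we have
\begin{equation*}
p(\hat{\mu}_1,\ldots,\hat{\mu}_T\mid \mathbf{H}_h)=\prod_{t=1}^T p(\hat{\mu}_t\mid \hat{\mu}_{1:t-1},\mathbf{H}_h).
\end{equation*}
The first step is to prove that the sequence $(\hat{\mu}_t)$ is Markov under both hypotheses. The recursion $\hat{\mu}_t=(1-\frac1t)\hat{\mu}_{t-1}+\frac1t\bar{\mathbf{X}}_t$ shows that $\hat{\mu}_{1:t-1}$ is a function of $(\bar{\mathbf{X}}_1,\ldots,\bar{\mathbf{X}}_{t-1})$. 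Under either hypothesis the batches are independent, since the target replacement (and the uniform index $J$) only affects batch $\tau$. Hence $\bar{\mathbf{X}}_t$ is independent of the past. It follows that $p(\hat{\mu}_t\mid\hat{\mu}_{1:t-1},\mathbf{H}_h)$ equals the law of $(1-\frac1t)\hat{\mu}_{t-1}+\frac1t\bar{\mathbf{X}}_t$ with $\hat{\mu}_{t-1}$ held fixed. This is a shifted and scaled copy of the law of $\bar{\mathbf{X}}_t$ under $\mathbf{H}_h$, and it depends on the past only through $\hat{\mu}_{t-1}$.

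The second step is to compare the laws of $\bar{\mathbf{X}}_t$ under the two hypotheses. For $t\neq\tau$, the batch $D_t$ is $n$ i.i.d.\ draws from $\cD$ under both, so $\bar{\mathbf{X}}_t\sim\cN(\bs\mu,\bs\Sigma/n)$ in both cases. The transition densities therefore coincide and cancel in the ratio. This covers both $t<\tau$ and $t>\tau$: the target's effect for $t>\tau$ enters only through the conditioning value $\hat{\mu}_{t-1}$, which is the same observed quantity in numerator and denominator. At $t=\tau$ the laws differ. Under $\mathbf{H_0}$, $\bar{\mathbf{X}}_\tau\sim\cN(\bs\mu,\bs\Sigma/n)$. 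Under $\mathbf{H_1^\tau}$, by exchangeability of the batch, $\bar{\mathbf{X}}_\tau=\frac{1}{n}(\mathbf{z}^*+\sum_{k=1}^{n-1}\mathbf{X}_{\tau,k})\sim\cN\big(\frac{\mathbf{z}^*+(n-1)\bs\mu}{n},\frac{(n-1)\bs\Sigma}{n^2}\big)$, whatever the value of $J$. The surviving ratio is exactly $p(\hat{\mu}_\tau\mid\hat{\mu}_{\tau-1},\mathbf{H_1^\tau})/p(\hat{\mu}_\tau\mid\hat{\mu}_{\tau-1},\mathbf{H_0})$, which is the claim.

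The main obstacle is rigour in the Markov and conditional-density step. One must check that all conditional densities exist, which holds because $\bs\Sigma\succ0$ makes every transition a nondegenerate Gaussian; for $n=1$ under $\mathbf{H_1^\tau}$ the batch is deterministic and the statement should be read with $n\geq2$. One must also check that the Jacobian from $\bar{\mathbf{X}}_t$ to $\hat{\mu}_t$, namely the factor $t^{d}$, is the same under both hypotheses. I would handle both points at once by working with the bijective change of variables $(\hat{\mu}_1,\ldots,\hat{\mu}_T)\leftrightarrow(\bar{\mathbf{X}}_1,\ldots,\bar{\mathbf{X}}_T)$. Its Jacobian does not depend on the hypothesis and cancels in the LR. This reduces the ratio to $\prod_t p_1(\bar{\mathbf{X}}_t)/p_0(\bar{\mathbf{X}}_t)$, a product of independent factors in which only the $t=\tau$ factor is nontrivial.
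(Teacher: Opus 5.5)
Your proposal is correct and follows the same route as the paper: chain-rule factorization, the Markov property obtained from the recursion $\hat{\mu}_t = \frac{t-1}{t}\hat{\mu}_{t-1} + \frac{1}{t}\bar{X}_t$, and cancellation of every transition with $t \neq \tau$ because $\bar{X}_t$ has the same law under both hypotheses. Your extra steps are welcome refinements rather than a different argument; the paper's proof leaves them implicit. They are the change of variables to $(\bar{X}_1,\dots,\bar{X}_T)$ with a hypothesis-independent Jacobian, the independence of the batches (including $J$), and the caveat $n \geq 2$.
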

Specifically, for $d=1$ with $\cD = \cN(\mu, \sigma^2)$, the conditional distributions at time $\tau$ become
\begin{align*}
    \hat{\mu}_\tau \mid \hat{\mu}_{\tau-1}, \mathbf{H_0}       \sim \cN\left(\tfrac{\tau-1}{\tau} \hat{\mu}_{\tau-1} + \tfrac{\mu}{\tau}, \tfrac{\sigma^2}{\tau^2 n}\right),
    \hat{\mu}_\tau \mid \hat{\mu}_{\tau-1}, \mathbf{H_1^\tau} & \sim \cN\left(\tfrac{\tau-1}{\tau} \hat{\mu}_{\tau-1} + \tfrac{(n-1)\mu + z^*}{\tau n},\tfrac{\sigma^2}{\tau^2 n c(n)}\right).
\end{align*}
\textbf{Observations.} 1.\textit{ Mean shift.} Under $\mathbf{H_1^\tau}$ the mean shifts by $(z^*-\mu)/(\tau n)$ and the variance drops by $(n-1)/n$, since one sample is fixed.
2. \textit{Isolation.} Given consecutive observations $(\hat{\mu}_{\tau-1}, \hat{\mu}_\tau)$, the batch mean is recovered exactly as $\bar{X}_\tau = \tau \hat{\mu}_\tau - (\tau-1)\hat{\mu}_{\tau-1}$. The likelihood ratio depends on the data only through $\bar{X}_\tau$, independent of the batches before and after and of $T$ itself.

We further analytically present the Type I and Type II errors of $\SeMIstar$ for $d=1$ (Appendix~\ref{app:optimal-test-proofs}).

\begin{restatable}[$\SeMIstar$'s Type I and Type II Errors]{lemma}{typeisequential}\label{lem:type1-sequential}
    Define $\gamma_{\max} = \frac{1}{2}\left[\frac{(z^* - \mu)^2}{\sigma^2} + \log\left(c(n)\right)\right]$.
    (a) For $\gamma > \gamma_{\max}$, $\alpha(\gamma) = 0$.
    (b) For $\gamma \leq \gamma_{\max}$, we get $        \alpha(\gamma) = \Phi(a + b(\gamma)) - \Phi(a - b(\gamma))$ and $\beta(\gamma)   = \Phi\Big(\frac{a}{\sqrt{c(n)}} - b(\gamma)\sqrt{c(n)}\Big) + \Phi\Big(-\frac{a}{\sqrt{c(n)}} - b(\gamma)\sqrt{c(n)}\Big)$, where $m^* = (z^*-\mu)^2/\sigma^2$, $a = \sqrt{m^* n}$, and $b(\gamma) = \sqrt{(n-1)\left(m^* + \log c(n) - 2\gamma\right)}$.
\end{restatable}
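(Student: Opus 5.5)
The plan is to specialise Theorem~\ref{thm:multivariate-log-lr} to $d=1$ and rewrite $\log \mathrm{LR}_\tau$ as a concave quadratic in the single scalar statistic $N_\tau=\bar X_\tau-\mu$. By Lemma~\ref{lem:lr-simplification} the score depends on the data only through $N_\tau$, so both errors reduce to probabilities about one Gaussian variable. The attack rejects $\mathbf{H_0}$ when $\log\mathrm{LR}_\tau \ge \gamma$.

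\textbf{Step 1: complete the square.} I will write $u=N_\tau/\sigma$ and $s=(z^*-\mu)/\sigma$, so that $m^*=s^2$. Completing the square puts the score in the form
\begin{equation*}
\log\mathrm{LR}_\tau=\gamma_{\max}-\frac{(u-u_0)^2}{2(n-1)}
\end{equation*}
for some centre $u_0$ proportional to $s$. With the constants of Theorem~\ref{thm:multivariate-log-lr} taken literally, my quick computation gives a maximum of $\tfrac12\log c(n)+\tfrac{(n+1)m^*}{2}$ at $u_0=ns$, rather than $\gamma_{\max}$. Reconciling this with the stated $\gamma_{\max}$ and $a=\sqrt{m^*n}$ (centre $\sqrt n\,u_0=a$, i.e.\ $u_0=s$) is the first bookkeeping check, and I will redo the $d=1$ derivation directly from the two conditional Gaussians displayed after Lemma~\ref{lem:lr-simplification}. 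Once the score is in this form, part~(a) is immediate: the score never exceeds $\gamma_{\max}$, so for $\gamma>\gamma_{\max}$ the rejection region is empty and $\alpha=0$.

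\textbf{Step 2: identify the rejection region.} For $\gamma\le\gamma_{\max}$ the event $\{\log\mathrm{LR}_\tau\ge\gamma\}$ is the interval
\begin{equation*}
|u-u_0|\le\sqrt{2(n-1)(\gamma_{\max}-\gamma)}=\sqrt{(n-1)(m^*+\log c(n)-2\gamma)}/\sqrt{\cdot}\,,
\end{equation*}
where the last normalisation is to be fixed in Step~1. After scaling by $\sqrt n$ this is the interval $[a-b(\gamma),\,a+b(\gamma)]$ in standardized units. The distributions of $u$ under the two hypotheses are as follows.
\begin{itemize}
\item Under $\mathbf{H_0}$, $\sqrt n\,u\sim\cN(0,1)$, so $\alpha(\gamma)=\Phi(a+b)-\Phi(a-b)$.
\item Under $\mathbf{H_1^\tau}$, $N_\tau=(z^*-\mu)/n+\tfrac{1}{n}\sum_{k=1}^{n-1}(X_{\tau,k}-\mu)$, which has mean $s\sigma/n$ and variance $\sigma^2(n-1)/n^2$. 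In standardized units its variance is $1/c(n)$, and its mean sits at a position expressible through $a/\sqrt{c(n)}$.
\end{itemize}

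\textbf{Step 3: compute $\beta$.} Here $\beta$ is the probability under $\mathbf{H_1^\tau}$ of lying outside the interval, namely
\begin{equation*}
\beta=\Phi(\text{lower end})+1-\Phi(\text{upper end})\,.
\end{equation*}
I will rescale by $\sqrt{c(n)}$ and use $1-\Phi(x)=\Phi(-x)$; matching the two terms to $\Phi\bigl(\tfrac{a}{\sqrt{c(n)}}-b\sqrt{c(n)}\bigr)+\Phi\bigl(-\tfrac{a}{\sqrt{c(n)}}-b\sqrt{c(n)}\bigr)$ should then be algebra.

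\textbf{Main obstacle.} The main difficulty is purely the constant tracking. The centre and half-width of the interval must come out so that, simultaneously, the $\mathbf{H_0}$ standardisation yields $a\pm b$ and the $\mathbf{H_1}$ standardisation yields exactly $\pm a/\sqrt{c(n)}-b\sqrt{c(n)}$. Getting this right hinges on the correct form of the $d=1$ log-LR, so I would recompute it from scratch from the two conditional Gaussians before trusting any specialisation of the multivariate formula.
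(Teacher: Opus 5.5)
Your route is the paper's. The paper writes $\log\mathrm{LR}_\tau$ as a concave quadratic $c_0+c_1N+c_2N^2$ in the recovered batch statistic, and gets $\gamma_{\max}$ and the rejection interval from the discriminant and roots, which is equivalent to your completing the square. It then integrates the $\mathbf{H_0}$ and $\mathbf{H_1^\tau}$ Gaussians over that interval and its complement. Your suspicion about the constants is justified, and your quick computation is right for the formula as printed. The quadratic coefficient $\frac{1}{2(n-1)}$ in Theorem~\ref{thm:multivariate-log-lr} is a typo. Recomputing from the two conditional Gaussians, as in Theorem~\ref{thm:log-lr}, gives $-\frac{n}{2(n-1)\sigma^2}N_\tau^2$; the cross term $\frac{n(z^*-\mu)}{(n-1)\sigma^2}N_\tau$ and the constant terms are as printed.

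The one thing to fix is your Step~1 display, which is wrong as written. In your variables the score is $\log\mathrm{LR}_\tau=\gamma_{\max}-\frac{n}{2(n-1)}(u-s)^2$, so the centre is $u_0=s$ and the curvature is $\frac{n}{2(n-1)}$, not $\frac{1}{2(n-1)}$. With the latter, the half-width in $u$ would be $b(\gamma)$ and you would end up with $\Phi(a\pm\sqrt n\,b)$. With the correct curvature, the rejection region is $|\sqrt n\,u-a|\le b(\gamma)$, and this is exactly the normalisation you left open in Step~2. Take $z^*\ge\mu$ here; both error formulas are even in $a$, so the other sign follows.

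Your $\mathbf{H_1^\tau}$ moments give $\sqrt n\,u\sim\cN\bigl(a/n,\,1/c(n)\bigr)$. Since $1-\frac1n=\frac{1}{c(n)}$, one gets $\sqrt{c(n)}\,(a-a/n)=a/\sqrt{c(n)}$. Standardising the endpoints $a\mp b$ then yields exactly the stated $\beta(\gamma)$, and part~(a) is as you say.
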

Both errors depend on $n$ and the signal strength $m^*$ but neither on $\tau$ nor on $T$. This $\tau$-invariance follows from the isolation property: the test operates on the recovered batch mean $\bar{X}_\tau$, whose distribution does not depend on when insertion occurred.

\textbf{Cost of Observing the Final Release vs. the Sequential Releases.} A classic adversary sees only $\hat{\mu}_T$ and tests the membership hypotheses from a single observation. Here, the LR takes the same quadratic form as $\SeMIstar$ with $n$ replaced by $nT$ because The single release pools $nT$ samples, among which one is the target.
Thus, the per-sample contribution shrinks by a factor of $T$. We call this baseline the \emph{Final Observation} test, and denote its errors $\alpha_{\mathrm{FO}}, \beta_{\mathrm{FO}}$.
\begin{restatable}[Final Observation MI's Type I and Type II Errors]{lemma}{typeisingle}
    \label{lem:type1-single}
    With $\gamma_{\max}^{(T)} = \frac{1}{2}\left[m^* + \log\left(c(nT)\right)\right]$ and $\gamma \leq \gamma_{\max}^{(T)}$,
    $\alpha_{\mathrm{FO}}(\gamma) = \Phi(a_T + b_T(\gamma)) - \Phi(a_T - b_T(\gamma))$, and $\beta_{\mathrm{FO}}(\gamma) = \Phi\left(a_T/\sqrt{c(nT)} - b_T(\gamma)\sqrt{c(nT)}\right) + \Phi\left(-a_T/\sqrt{c(nT)} - b_T(\gamma)\sqrt{c(nT)}\right)$,
    with $a_T = \sqrt{m^* nT}$ and $b_T(\gamma) = \sqrt{(nT-1)\left(m^* + \log(c(nT)) - 2\gamma\right)}$.
\end{restatable}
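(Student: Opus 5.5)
The plan is to reduce the Final Observation test to the single-batch computation behind Lemma~\ref{lem:type1-sequential}, with $n$ replaced by $nT$. The key observation is that $\hat{\mu}_T$ is the mean of $N \triangleq nT$ i.i.d.\ samples from $\cN(\mu,\sigma^2)$ under $\mathbf{H_0}$. Under $\mathbf{H_1^\tau}$, one of these $N$ samples is replaced by $z^*$; since all batches are equally weighted in $\hat{\mu}_T$, the position of the target (batch $\tau$, index $J$) is irrelevant. Hence
\begin{align*}
\hat{\mu}_T \mid \mathbf{H_0} &\sim \cN\!\left(\mu, \tfrac{\sigma^2}{N}\right), &
\hat{\mu}_T \mid \mathbf{H_1^\tau} &\sim \cN\!\left(\tfrac{(N-1)\mu + z^*}{N}, \tfrac{(N-1)\sigma^2}{N^2}\right).
\end{align*}
This is exactly the pair of conditional laws of $\hat{\mu}_\tau$ given $\hat{\mu}_{\tau-1}$ in the sequential case, after the affine change of variables $\bar{X}_\tau = \tau\hat{\mu}_\tau - (\tau-1)\hat{\mu}_{\tau-1}$, with $n$ replaced by $N$. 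In particular the distributions do not depend on $\tau$.

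First I compute the log-LR by specialising Theorem~\ref{thm:multivariate-log-lr} with $d=1$, $n\to N$ and $N_T \triangleq \hat{\mu}_T-\mu$:
\[
\log \mathrm{LR} = \tfrac12\log c(N) - \tfrac{N_T^2}{2(N-1)\sigma^2} + c(N)\tfrac{N_T(z^*-\mu)}{\sigma^2} - \tfrac{m^*}{2(N-1)}.
\]
This follows either directly from the two Gaussian densities or by invoking the theorem. The test rejects $\mathbf{H_0}$ when $\log\mathrm{LR}\ge\gamma$. Completing the square in $N_T$ (multiplying by $-2(N-1)\sigma^2$) turns the rejection region into
\[
\bigl(N_T - N(z^*-\mu)\bigr)^2 \le \sigma^2 (N-1)\bigl(m^*N + \log c(N) - 2\gamma\bigr)\cdot(\text{a normalising factor}).
\]
I will track the constants carefully so that the right-hand side matches $b_T(\gamma)^2$ after standardisation. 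The region is an interval centred at $N(z^*-\mu)$, and it is nonempty iff $\gamma\le\gamma^{(T)}_{\max}$, which explains that threshold.

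Next I standardise. Under $\mathbf{H_0}$, $\sqrt{N}N_T/\sigma\sim\cN(0,1)$, and the interval's centre becomes $\sqrt{N}(z^*-\mu)/\sigma = \pm a_T$ with $a_T=\sqrt{m^*N}$. Using the symmetry of $\Phi$, the probability of the interval is $\Phi(a_T+b_T)-\Phi(a_T-b_T)$. Under $\mathbf{H_1^\tau}$, $N_T$ has mean $(z^*-\mu)/N$ and standard deviation $\sigma\sqrt{N-1}/N$. Rescaling by $\sqrt{c(N)}$ and taking the complement of the interval gives the two-tail sum $\Phi(a_T/\sqrt{c(N)}-b_T\sqrt{c(N)})+\Phi(-a_T/\sqrt{c(N)}-b_T\sqrt{c(N)})$.

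This mirrors the proof of Lemma~\ref{lem:type1-sequential} verbatim. Alternatively, I can simply observe that Lemma~\ref{lem:type1-sequential} applied with batch size $N$ and $\tau=1$ (where $\hat{\mu}_1=\bar X_1$) yields the claim, since the laws above coincide with that setting. I would present this reduction as the proof and keep the explicit computation as a check.

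The main obstacle is bookkeeping in the completing-the-square step. The interval centre involves $N(z^*-\mu)$ relative to the scale $\sigma/\sqrt N$, so I must verify that the quantity under the root is precisely $(N-1)(m^*+\log c(N)-2\gamma)$ after normalisation. I also need the signs in the $\Phi$ arguments to come out as stated. I expect to resolve this by following the identical computation already done for $n$, which is why the reduction to Lemma~\ref{lem:type1-sequential} is the cleanest route.
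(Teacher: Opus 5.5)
Your main route is correct and is essentially the paper's proof. The route is to observe that $\hat\mu_T$ has exactly the two laws of a single batch mean of size $N=nT$, so Lemma~\ref{lem:type1-sequential} with $n\to nT$ (e.g.\ at $\tau=1$, where $\hat\mu_1=\bar X_1$) gives both errors. The paper does the same thing: it reruns the sequential computation with $n$ replaced by $nT$.

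The explicit computation you plan to keep as a check, however, is wrong as written and would not reproduce the statement. You took the quadratic coefficient from the main-text statement of Theorem~\ref{thm:multivariate-log-lr}, which drops a factor of $n$. The correct log-LR (Theorem~\ref{thm:log-lr} with $n\to N$) is
\[
\log\mathrm{LR} = \tfrac12\log c(N) - \tfrac{N\,N_T^2}{2(N-1)\sigma^2} + c(N)\tfrac{N_T(z^*-\mu)}{\sigma^2} - \tfrac{m^*}{2(N-1)}.
\]
With your coefficient, completing the square gives $\bigl(N_T - N(z^*-\mu)\bigr)^2 \le (N-1)\sigma^2\bigl((N+1)m^* + \log c(N) - 2\gamma\bigr)$. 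The nonemptiness threshold would then be $\frac12[(N+1)m^*+\log c(N)]$ rather than $\gamma^{(T)}_{\max}$. Standardising under $\mathbf{H_0}$ would also put the centre at $N a_T$, not $a_T$, so your later claim that the centre becomes $\sqrt N(z^*-\mu)/\sigma$ silently contradicts the interval you just wrote down.

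With the correct coefficient, multiply by $-2(N-1)\sigma^2/N$ to get
\[
\bigl(N_T - (z^*-\mu)\bigr)^2 \le \tfrac{(N-1)\sigma^2}{N}\bigl(m^* + \log c(N) - 2\gamma\bigr).
\]
This interval is centred at $z^*-\mu$, not $N(z^*-\mu)$. Dividing by $\sigma/\sqrt N$ gives centre $\pm a_T$ and half-width exactly $b_T(\gamma)$, hence $\alpha_{\mathrm{FO}}$ by symmetry of $\Phi$. Under $\mathbf{H_1^\tau}$, standardise $N_T-(z^*-\mu)/N$ by $\sigma\sqrt{N-1}/N$. The endpoints become $a_T/\sqrt{c(N)} \mp b_T(\gamma)\sqrt{c(N)}$ (for $z^*>\mu$; the other sign follows by symmetry), and taking the two tails gives the stated $\beta_{\mathrm{FO}}$. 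Fix the coefficient and your check agrees with the reduction.
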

The target influence $(z^*-\mu)/(nT)$ vanishes as $T$ grows, so the power of Final Observation degrades with the number of training steps. $\SeMIstar$ operates on a recovered batch of size $n$ and keeps its power constant in $T$: sequential access is what prevents the signal from diluting.

\subsection{Relaxing the Assumptions of $\SeMIstar$}\label{sec:variations}\vspace*{-0.3em}

Each weakening of the known-$\tau$, stationary, and single-insertion setting leads to the same isolation argument applied to a modified transition. Table~\ref{tab:semi_variants} summarizes the resulting tests.

\textbf{Unknown $\tau$.} We propose two adversaries that do not know $\tau$ that are still grounded in hypothesis testing but consider different alternatives: $\bigcup_t \mathbf{H_1^t}$ for $\SeMImax$ and $\widetilde{\mathbf{H_1}}$ for $\SeMIunif$. Details on their computations are given in  Appendix~\ref{app:uniform-prior-proofs} and~\ref{app:unknown-tau-proofs}.


\textbf{Distribution Shift.} When $\cD_t = \cN(\mu_t, \sigma_t^2)$ varies with $t$, the same cancellation argument applies: for $t \neq \tau$ the conditional of $\hat{\mu}_t$ given $\hat{\mu}_{t-1}$ is identical under both hypotheses regardless of $(\mu_t, \sigma_t^2)$. The log-LR depends only on $(\mu_\tau, \sigma_\tau^2)$ and the power is unchanged when the auditor knows these at the insertion step (Appendix~\ref{app:shift-robustness-proof}). Intuitively, isolation makes the test blind to every distribution but the one at $\tau$, so the effect of shifts at other steps cancels.

\textbf{Multiple Insertion Times.} The same cancellation extends to a known set $S \subseteq [T]$ of insertion steps, with the alternative $\mathbf{H_1^\tau}$ replaced by a multi-insertion alternative $\mathbf{H_1^S}$.
We defer the formal definition and the resulting factorization $\log\mathrm{LR}_S = \sum_{t \in S} \log\mathrm{LR}_t$ to Appendix~\ref{app:multi-insertion}.

\begin{table}[h!]
    \centering\vspace*{-1.5em}
    \caption{SeMI variants. All attacks threshold a log-LR statistic on the observed snapshot sequence and rely on the isolation property of Lemma~\ref{lem:lr-simplification}. $\mathrm{LR}_t$ is the single-time likelihood ratio of Theorem~\ref{thm:multivariate-log-lr}. The SGD variant is introduced in Section~\ref{sec:practical_attack}. (NP=Neyman Pearson)}\label{tab:semi_variants}
    \resizebox{\textwidth}{!}{
        \begin{tabular}{llll}
            \toprule
            Variant     & Knowledge of $\tau$    & Test statistic                               & Optimality                                    \\
            \midrule
            $\SeMIstar$ & known $\tau$           & $\log \mathrm{LR}_\tau$                      & NP-optimal against $\mathbf{H_1^\tau}$        \\
            $\SeMIunif$ & $\tau \sim \cU([T])$   & $\log\frac{1}{T}\sum_{t=1}^{T}\mathrm{LR}_t$ & NP-optimal against $\widetilde{\mathbf{H_1}}$ \\
            $\SeMImax$  & unknown                & $\max_{t}\log \mathrm{LR}_t$                 & GLR against $\bigcup_{t}\mathbf{H_1^t}$       \\
            $\SeMISGD$  & known $\tau$, (DP-)SGD & Thm.~\ref{thm:sgd-log-lr}                    & NP-optimal for Gaussian-gradient approx.      \\
            \bottomrule
        \end{tabular}}\vspace*{-0.5em}
\end{table}

\vspace*{-0.5em}\subsection{Numerical Illustration: SeMI versus Final Observation MI}

We sample a base dataset of size $N = 20$ from $\cN(0,1)$ and form $T = 10$ batches of size $n = 10$ by subsampling with replacement from this base dataset, so consecutive batches can share entries and a single batch can contain duplicates. The target $z^*$ (Mahalanobis distance $5$) is inserted into batch $\tau = 5$ only. This subsampled-with-replacement protocol departs from the i.i.d.\ disjoint-batch assumption underlying $\SeMIstar$. The figures below show that the sequential advantage carries through. Errors are estimated from $R = 50\,000$ independent runs.

1. Figure~\ref{fig:lr_evolution} illustrates the \emph{isolation} property: the $\SeMIstar$ statistic exhibits a sharp step at $\tau$ that remains roughly constant in $T$, while the Final Observation statistic peaks around $\tau$ and decays as $T$ grows.

2. Figure~\ref{fig:exp2_advantage} quantifies the \emph{power gap}. $\SeMIstar$ keeps near-constant power across $T$, while $\SeMIunif$ and $\SeMImax$ degrade with $T$ but stay above Final Observation. The gap widens with the target's Mahalanobis distance.

The matching plots under the disjoint-batch protocol that exactly satisfies the test assumptions, ROC curves, and the multivariate generalization are reported in Appendices~\ref{app:exp-disjoint},~\ref{app:exp-unknown-tau}, and~\ref{app:exp-multivariate}.

\begin{figure*}[t!]
    \centering\vspace*{-1em}
    \begin{minipage}{0.495\textwidth}
        \begin{subfigure}{0.49\columnwidth}
            \includegraphics[width=\linewidth]{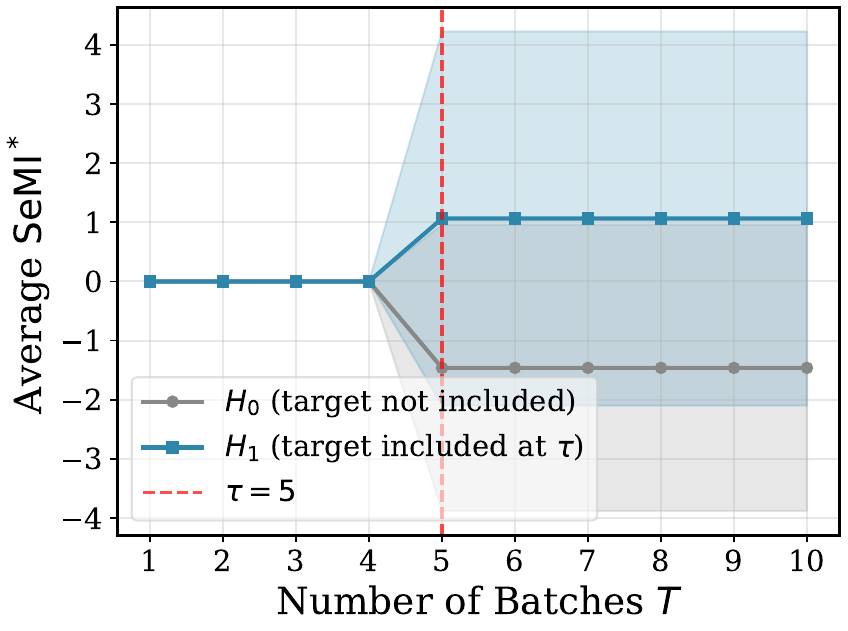}
            \caption{ $\SeMIstar$ statistic}
        \end{subfigure}
        \hfill
        \begin{subfigure}{0.49\columnwidth}
            \includegraphics[width=\linewidth]{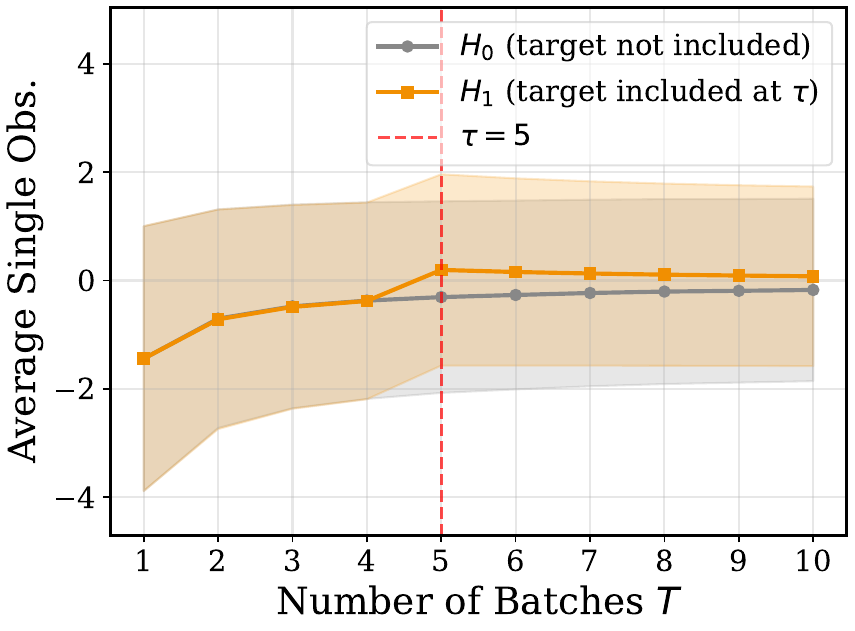}
            \caption{ Final Obs. statistic}
        \end{subfigure}
        \caption{Log-likelihood ratio as a function of observed snapshots ($n=10$, $\tau=5$). (a) $\SeMIstar$: separation appears at $\tau$ and remains constant. (b) Final Observation: separation appears at $\tau$ and decreases as $T$ grows.}
        \label{fig:lr_evolution}
    \end{minipage}
    \hfill
    \begin{minipage}{0.495\textwidth}
        \centering
        \begin{subfigure}[t]{0.49\columnwidth}
            \includegraphics[width=\linewidth]{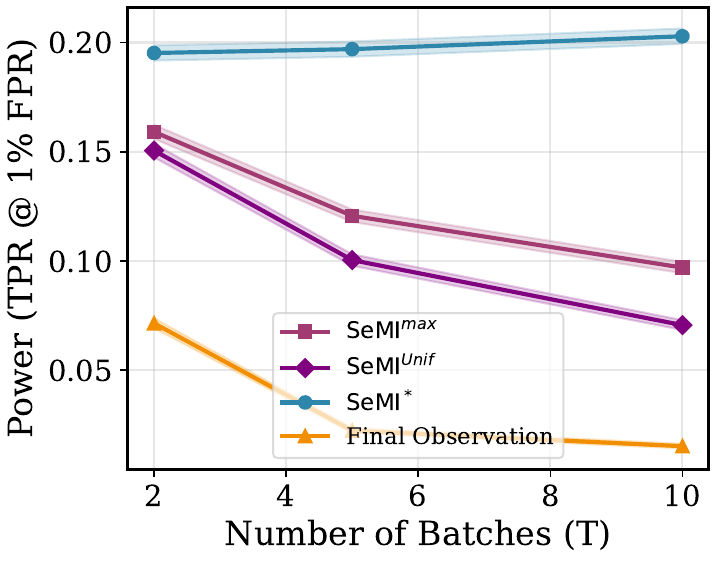}
            \caption{Power vs $T$}
        \end{subfigure}
        \hfill
        \begin{subfigure}[t]{0.49\columnwidth}
            \includegraphics[width=\linewidth]{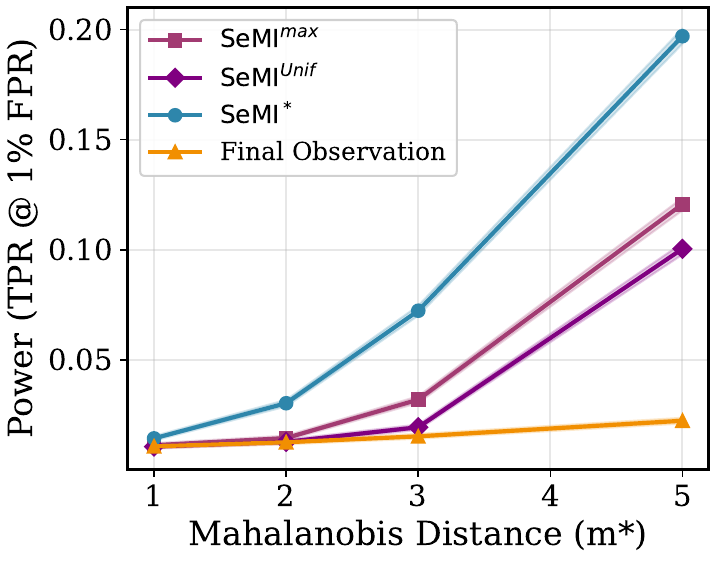}
            \caption{Power vs $z^\ast$ strength}
        \end{subfigure}\vspace*{-.5em}
        \caption{Comparison of sequential attacks ($n=10$). (a) Power (TPR at 1\% FPR) vs number of snapshots $T$ for $z^*$ with Mahalanobis distance 5. (b) Power vs target Mahalanobis distance as strength of chosen $z^\ast$ ($T=5$).}
        \label{fig:exp2_advantage}
    \end{minipage}
\end{figure*}

\section{White- and Black-box SeMI Attacks for Gradient Descent}\label{sec:practical_attack}\vspace*{-0.5em}
The empirical mean analysis establishes the sequential advantage in an idealized setting. To audit real ML models, we need attacks that apply to gradient descent. Prior sequential attacks on SGD rely on heuristic score combinations~\citep{jagielskiHowCombineMembershipInference2023}. We design a principled LR-based white-box attack and a loss-based black-box attack for SGD that exploit the isolation property observed in the empirical mean case.

\textbf{I. White-box Attack.} In the white-box setting, we observe model parameters $\theta_0, \theta_1, \ldots, \theta_T$ with $\theta_t \in \RR^d$. At each step $t$, $\theta_t = \theta_{t-1} - \eta_t g_t$, where $\eta_t > 0$ is the learning rate and $g_t = \frac{1}{n_t}\sum_{x \in D_t} \nabla_\theta \ell(\theta_{t-1}; x)$ is the batch gradient on $D_t$ of size $n_t$. As in Section~\ref{sec:optimal_test}, we test $\mathbf{H_0}$ (target $z^*$ not in the training data) against $\mathbf{H_1^\tau}$ (one sample in $D_\tau$ replaced by $z^*$).


We assume that, conditioned on $\theta_{t-1}$, the batch gradient $g_t$ is Gaussian with mean $\mu_g(\theta_{t-1})$ and covariance $\Sigma_g(\theta_{t-1})/n_t$. This approximation can be justified by the central limit theorem when the batch aggregates gradients from many data points.
Under this assumption, we observe that the conditional distributions mirror the empirical mean case. Specifically,
$ \mathbf{H_0}:    \theta_t \mid \theta_{t-1}  \sim \cN\left(\theta_{t-1} - \eta_t \mu_g(\theta_{t-1}), \frac{\eta_t^2 \Sigma_g(\theta_{t-1})}{n_t}\right)$ for all $t\in [T]$, and $\mathbf{H_1^\tau}: \theta_\tau \mid \theta_{\tau-1}\sim \cN\left(\theta_{\tau-1} - \eta_\tau \frac{(n_\tau-1)\mu_g + g^*}{n_\tau}, \frac{\eta_\tau^2(n_\tau-1)\Sigma_g}{n_\tau^2}\right)$.
Here, $g^* \triangleq \nabla_\theta \ell(\theta_{\tau-1}; z^*)$ is the target gradient, and we write $\mu_g$ and $\Sigma_g$ for $\mu_g(\theta_{\tau-1})$ and $\Sigma_g(\theta_{\tau-1})$.
The likelihood ratio further factorizes as in Lemma~\ref{lem:lr-simplification}, where only the transition at time $\tau$ contributes.

\begin{restatable}[$\SeMISGD$ score: Log Likelihood Ratio of Gradients]{theorem}{sgdloglr}
    \label{thm:sgd-log-lr}
    Define $\delta_g = \nabla_\theta \ell(\theta_{\tau-1}; z^*) - \mu_g(\theta_{\tau-1})$ and $N = \theta_\tau - \theta_{\tau-1} + \eta_\tau \mu_g(\theta_{\tau-1})$. The log likelihood ratio is:
    \begin{align*}
        \log \mathrm{LR} & = -\frac{d}{2}\log\left(\frac{n_\tau-1}{n_\tau}\right) - \frac{n_\tau}{2(n_\tau-1)\eta_\tau^2}N^\top \Sigma_g^{-1} N  - \frac{n_\tau}{(n_\tau-1)\eta_\tau}N^\top \Sigma_g^{-1}\delta_g - \frac{m^*}{2(n_\tau-1)}\,,
    \end{align*}
    where $m^* \triangleq \delta_g^\top \Sigma_g^{-1} \delta_g$ is the Mahalanobis distance of the target gradient from the population mean.
\end{restatable}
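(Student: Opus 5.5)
The plan is to follow the route used for Theorem~\ref{thm:multivariate-log-lr}: first reduce the full likelihood ratio over $(\theta_0,\dots,\theta_T)$ to the single transition at $\tau$, then compute a Gaussian log-density ratio in closed form.

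\textbf{Step 1: isolation.} Factor the joint density of $(\theta_1,\dots,\theta_T)$ given $\theta_0$ as $\prod_t p(\theta_t \mid \theta_{t-1})$. This uses the Markov structure of SGD: conditioned on $\theta_{t-1}$, the fresh batch $D_t$ alone determines $\theta_t$.
\begin{itemize}
\item For $t \neq \tau$, the batch $D_t$ has the same law under $\mathbf{H_0}$ and $\mathbf{H_1^\tau}$. The Gaussian-gradient assumption then gives the identical conditional $\cN(\theta_{t-1}-\eta_t\mu_g(\theta_{t-1}), \eta_t^2\Sigma_g(\theta_{t-1})/n_t)$ under both hypotheses.
\item Note that for $t>\tau$ the target's influence is carried only through the conditioning value $\theta_{t-1}$.
\end{itemize}
So these factors cancel exactly as in Lemma~\ref{lem:lr-simplification}, and $\mathrm{LR}=p(\theta_\tau\mid\theta_{\tau-1},\mathbf{H_1^\tau})/p(\theta_\tau\mid\theta_{\tau-1},\mathbf{H_0})$.

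\textbf{Step 2: the two conditionals at $\tau$.} Under $\mathbf{H_1^\tau}$ one batch element is the deterministic gradient $g^*$ and the other $n_\tau-1$ are i.i.d. This gives the stated law: mean shifted by $-\eta_\tau\delta_g/n_\tau$, and covariance $S_1=\eta_\tau^2(n_\tau-1)\Sigma_g/n_\tau^2$ instead of $S_0=\eta_\tau^2\Sigma_g/n_\tau$. Here I would adopt the Gaussian approximation for the $(n_\tau-1)$-sample partial gradient mean, consistent with the assumption stated before the theorem.

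Centring at the $\mathbf{H_0}$ mean, the $\mathbf{H_0}$ residual is exactly $N$, and the $\mathbf{H_1^\tau}$ residual is $N+\eta_\tau\delta_g/n_\tau$.

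\textbf{Step 3: algebra.} Write
\[
\log\mathrm{LR}=-\tfrac12\log\det S_1+\tfrac12\log\det S_0-\tfrac12 (N+\tfrac{\eta_\tau}{n_\tau}\delta_g)^\top S_1^{-1}(N+\tfrac{\eta_\tau}{n_\tau}\delta_g)+\tfrac12 N^\top S_0^{-1}N .
\]
Since $S_1=\frac{n_\tau-1}{n_\tau}S_0$, the determinant terms give $-\frac d2\log\frac{n_\tau-1}{n_\tau}$. Use $S_1^{-1}=\frac{n_\tau^2}{\eta_\tau^2(n_\tau-1)}\Sigma_g^{-1}$ and $S_0^{-1}=\frac{n_\tau}{\eta_\tau^2}\Sigma_g^{-1}$, and expand the quadratic into three parts:
\begin{itemize}
\item the $N^\top\Sigma_g^{-1}N$ coefficient is $-\frac{n_\tau^2}{2\eta_\tau^2(n_\tau-1)}+\frac{n_\tau}{2\eta_\tau^2}=-\frac{n_\tau}{2(n_\tau-1)\eta_\tau^2}$;
\item the cross term is $-\frac{n_\tau}{(n_\tau-1)\eta_\tau}N^\top\Sigma_g^{-1}\delta_g$;
\item the constant is $-\frac{m^*}{2(n_\tau-1)}$.
\end{itemize}
Together these give the stated expression.

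\textbf{Main obstacle.} The main obstacle is Step 1 rather than the algebra. One must argue carefully that the $t>\tau$ conditionals coincide under both hypotheses even though $\theta_{t-1}$ itself has different marginal laws. This holds because the factorization only compares the conditional kernels, not the marginals.

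A second point needing care is that $\Sigma_g(\theta_{\tau-1})$ must be invertible, as in Theorem~\ref{thm:multivariate-log-lr}. Otherwise the ratio has to be restricted to the range of $\Sigma_g$, and $d$ replaced by its rank.
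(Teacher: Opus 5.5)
Your proposal is correct and follows the paper's proof essentially step for step: the Markov factorization cancels every $t\neq\tau$ kernel (the paper's Theorem~\ref{thm:sgd-lr-simplification}), and then the Gaussian log-density ratio is computed using $S_1=\frac{n_\tau-1}{n_\tau}S_0$ and the mean shift $-\eta_\tau\delta_g/n_\tau$. Your explicit coefficient computation, including the negative sign of the cross term, is exactly what the paper abbreviates as ``collecting terms,'' and your remarks on the $(n_\tau-1)$-sample Gaussian approximation and on the invertibility of $\Sigma_g$ spell out assumptions the paper uses only implicitly.
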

\textbf{Consequences.} \textit{1. Mahalanobis Scores:} The test statistic $\SeMISGD$ has the same structure as Theorem~\ref{thm:multivariate-log-lr}, with the batch mean replaced by the scaled parameter update $N/\eta_\tau$. $\delta_g$ measures how much the target's gradient deviates from the population mean of the gradient: targets whose gradients differ substantially from typical gradients are more detectable. The Mahalanobis distance $m^*$ accounts for the gradient covariance structure, giving less weight to directions with high variance.

2. \textit{Isolation Property.} From consecutive parameter observations $(\theta_{\tau-1}, \theta_\tau)$, the scaled update $N$ can be computed exactly. The likelihood ratio depends only on this quantity, not on earlier or later parameters. This mirrors the isolation property for the empirical mean.

3. \textit{Extension to DP-SGD~\citep{abadi2016deep}.} For differentially private training via DP-SGD, each update includes clipped gradients and Gaussian noise: $\theta_t = \theta_{t-1} - \eta_t(g_t^{\mathrm{clip}} + \xi_t)$ where $\xi_t \sim \cN(0, \sigma_{\mathrm{DP}}^2 C^2 I/n_t^2)$ with clipping threshold $C$. The noise reduces attack power but preserves the sequential structure and the isolation property. We extend $\SeMISGD$ to DP-SGD by estimating and plugging-in three quantities: the population mean gradient $\mu_g(\theta_{\tau-1})$, the gradient covariance $\Sigma_g(\theta_{\tau-1})$, and the target gradient $\nabla_\theta \ell(\theta_{\tau-1}; z^*)$.


\textbf{II. Black-box Attacks.} In this setting, the auditor queries each released checkpoint and observes only the target loss $\ell(\theta_t; z^*)$. We introduce two per-transition scores at the insertion time:
\begin{equation}
    \Delta_\tau(z^*) \triangleq \ell(\theta_{\tau-1}; z^*) - \ell(\theta_\tau; z^*),
    \qquad
    R_\tau(z^*) \triangleq \frac{\ell(\theta_{\tau-1}; z^*)}{\ell(\theta_\tau; z^*)}.
    \label{eq:loss-score}
\end{equation}
The isolation property of Section~\ref{sec:optimal_test} concentrates the membership signal at the transition $\theta_{\tau-1} \to \theta_\tau$, so a loss-based statistic should contrast that pair. These scores are inspired by~\cite{jagielskiHowCombineMembershipInference2023}: Back-Front contrasts the loss of consecutive models, and Delta thresholds the maximum loss change across consecutive steps. Restricting Back-Front to $(\theta_{\tau-1}, \theta_\tau)$, or evaluating Delta at the inserted step, recovers~\eqref{eq:loss-score}. We call the resulting attacks Loss Diff and Loss Ratio. Taking the maximum over $\tau \in [T]$ recovers the Back-Front and Delta baselines of~\cite{jagielskiHowCombineMembershipInference2023} for the unknown-$\tau$ case.


\textbf{Numerical Results.}
Figure~\ref{fig:exp17_roc_T5} reports ROC curves at $T = 5$ for the attacks introduced above and the baselines of Sec.~\ref{sec:experiments}. We observe that \textit{without DP noise, $\SeMISGD$ and its max-over-$\tau$ variants display the best power}. Under DP-SGD at $\varepsilon = 8$, \textit{the noise compresses every curve toward the diagonal but $\SeMISGD$ retains a margin over Loss Diff and Loss Ratio} in the low false-positive-rate regime that drives audit tightness. An alternative pretrain-then-finetune setup is reported in Appendix~\ref{app:exp-dpsgd-roc}.
\begin{figure}[t!]
    \centering\vspace*{-0.5em}
    \begin{subfigure}{0.48\textwidth}
        \centering
        \includegraphics[width=0.7\linewidth]{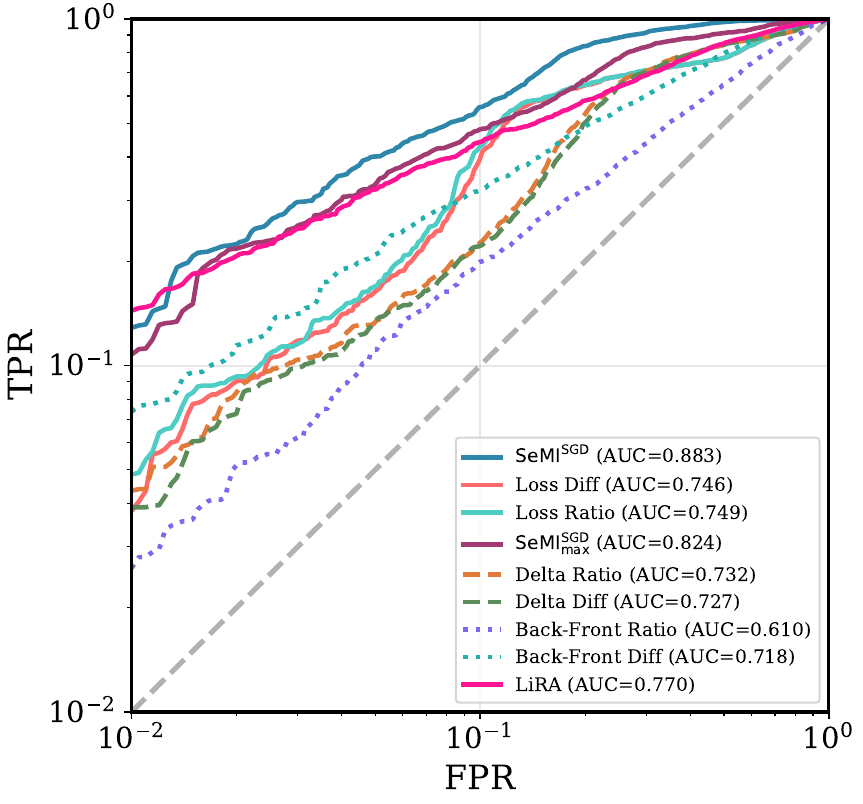}\vspace*{-.5em}
        \caption{Non-private ($\varepsilon = \infty$)}
    \end{subfigure}
    \begin{subfigure}{0.48\textwidth}
        \centering
        \includegraphics[width=0.7\linewidth]{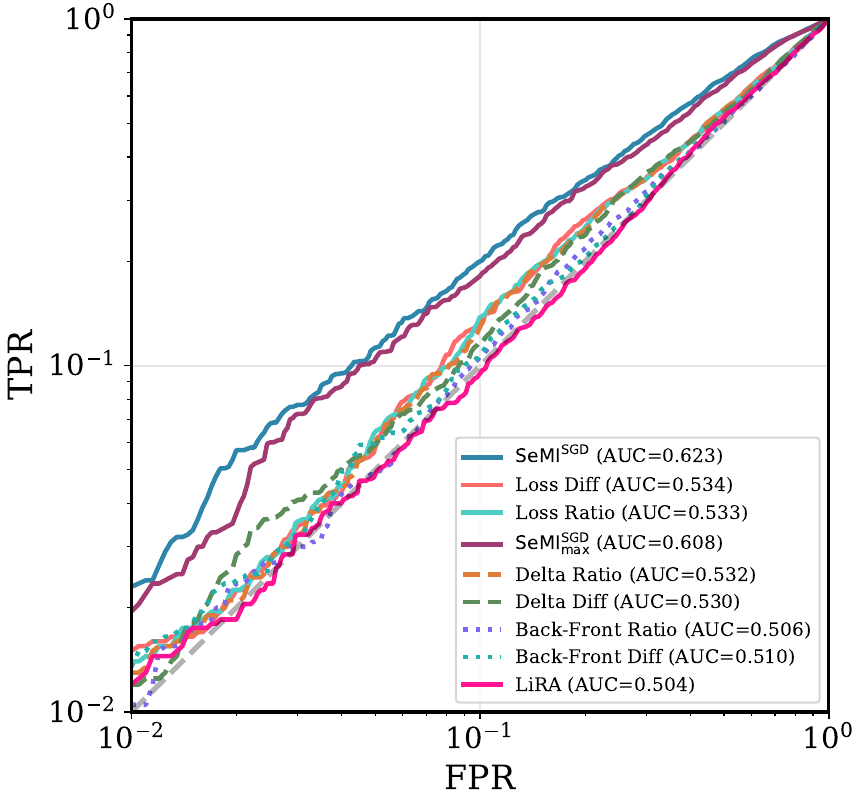}\vspace*{-.5em}
        \caption{Private ($\varepsilon = 8$)}
    \end{subfigure}\vspace*{-.5em}
    \caption{ROC curves at $T = 5$ snapshots on Fashion-MNIST of SeMI variants and attack baselines.}   \label{fig:exp17_roc_T5}
\end{figure}

\vspace*{-0.6em}\section{Sequential Privacy Audits with SeMI}
\label{sec:experiments}\vspace*{-0.5em}

The empirical mean-based experiments in Section~\ref{sec:optimal_test} establish the sequential advantage of $\SeMIstar$ in an idealized setting. Now, we test whether the same advantage transfers to auditing ML models trained with DP-SGD~\citep{abadi2016deep}. We organize the experiments around four questions.
\vspace*{-0.2em}
\begin{tcolorbox}[top=2pt,bottom=2pt,left=1pt,right=1pt]
    \textbf{(Q1)} Does observing the full sequence of released snapshots yield tighter $\varepsilon$ lower bounds than observing only the final model, and how does the gain scale with the number of releases $T$? \textbf{(Q2)} At fixed $T$, does the LR-based $\SeMISGD$ outperform heuristic scores built from losses or logits? \textbf{(Q3)} How much tightness is lost when the insertion time $\tau$ is unknown? \textbf{(Q4)} What is the impact of the insertion time $\tau$ on the audit tightness?
\end{tcolorbox}\vspace*{-0.2em}

\textbf{Experimental Setup.} A model is trained for $L$ total DP-SGD steps split into $T$ phases of $K = L/T$ steps, releasing $\theta_t$ at every phase boundary. The auditor observes $\theta_0,\dots,\theta_T$. The dataset is split into $T$ disjoint subsets and each phase is an independent DP-SGD run: weights carry over while the optimizer state and privacy accountant reset, so by parallel composition $\varepsilon_{\mathrm{phase}} = \varepsilon$. Each round inserts $z^*$ in a single fixed phase $\tau$ and not in any other. We set $\tau = \lfloor T/2 \rfloor$ for the snapshot-count sweep of Observation~1, and $\tau = 5$ for all other experiments at $T = 5$ in Sections~\ref{sec:practical_attack} and~\ref{sec:experiments}; Observation~4 sweeps $\tau$ to isolate its effect.

We consider $T \in \{2, 5, 10\}$, total budget $\varepsilon = 8$, $\delta = 10^{-4}$, clipping norm $c = 1$, and learning rate $\eta = 0.01$. In the main paper, we report softmax regression on Fashion-MNIST ($L = 2500$, $B = 4096$). A fine-tuning variant (pretrain on a public split, then $T=10$ DP-SGD updates on a private split) on Fashion-MNIST, CIFAR-10, and Purchase-100 is reported in Appendix~\ref{app:exp-dpsgd-roc}.
Each audit runs $R = 1900$ rounds of the SeMI game (Algorithm~\ref{alg:mi-game}).
We compute the empirical privacy lower bound $\hat\varepsilon$ via the calibrated-grid protocol of Appendix~\ref{app:audit-cp-grid}: $R_{\mathrm{cal}} = 100$ separate calibration rounds define a grid of $K = 99$ thresholds at the centiles of the test statistic, and we apply Clopper--Pearson upper bounds at level $\xi/(2K)$ at every grid point, taking the maximum lower bound across the grid at overall confidence $1 - \xi = 0.98$.

\textbf{Attack Baselines.}\label{sec:exp_baselines} We compare three types of methods. (a) \emph{Informed sequential} audits use the knowledge of the insertion time: $\SeMISGD$, and the black box heuristics \emph{Loss Diff} and \emph{Loss Ratio}.
(b) \emph{Uninformed sequential} audits use the full snapshot sequence but are oblivious to $\tau$: \emph{Delta} attacks score the maximum loss change across consecutive updates, $\max_t (\ell(\theta_{t-1}; z^*) - \ell(\theta_t; z^*))$ (Diff) or $\max_t \ell(\theta_{t-1}; z^*) / \ell(\theta_t; z^*)$ (Ratio) ~\citep{jagielskiHowCombineMembershipInference2023}.
\emph{Back-Front} attacks score the total change from initial to final model, $\ell(\theta_0; z^*) - \ell(\theta_T; z^*)$ (Diff) or its ratio~\citep{jagielskiHowCombineMembershipInference2023}.
(c) \emph{Snapshot-independent} audits use only the initial and final releases: LiRA~\citep{carlini2022membership}.

\noindent\textbf{Observation 1: Sequential Advantage from Released Snapshots.}\label{sec:exp_snapshots}
Figure~\ref{fig:Tsweep} reports $\hat\varepsilon$ versus the number of released snapshots $T$ at fixed total budget $\varepsilon = 8$ and fixed training steps $L$. $\SeMISGD$ dominates every baseline across $T$.

\begin{figure}
    \centering\vspace*{-0.7em}
    \includegraphics[width=.9\linewidth]{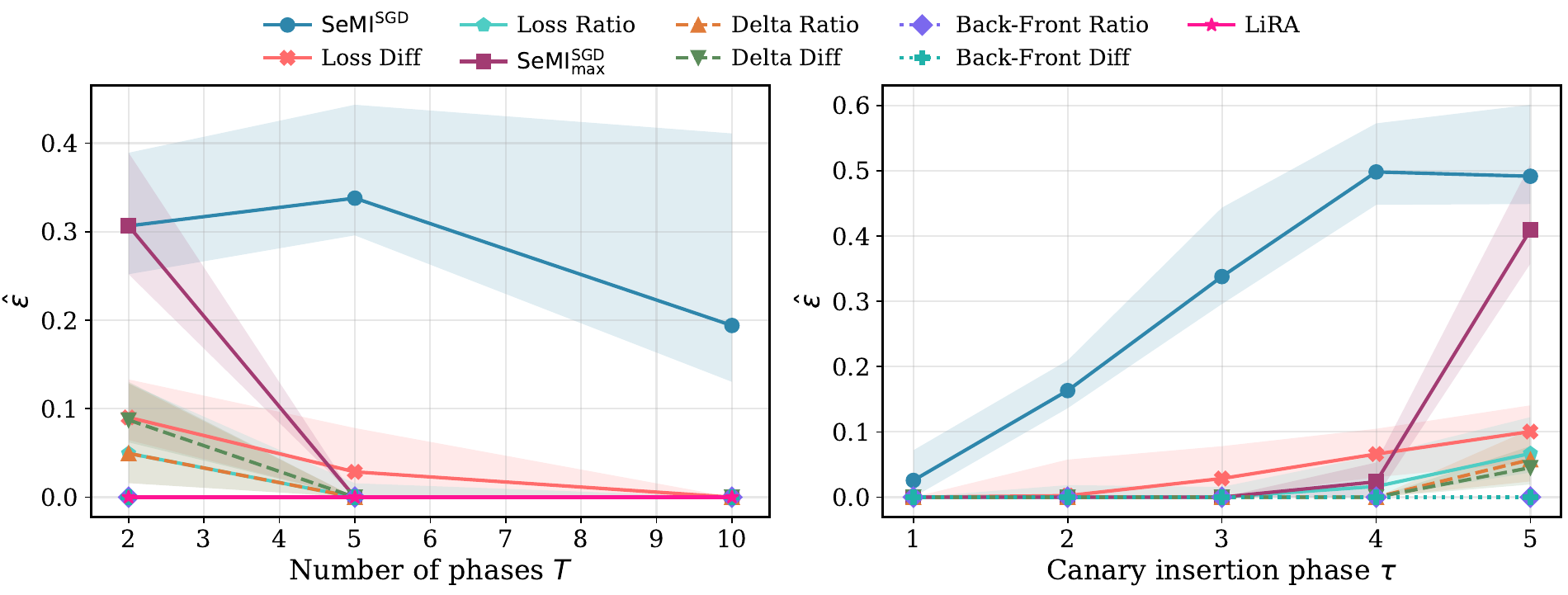}\vspace*{-1em}
    \begin{subfigure}[t]{0.49\linewidth}
        \centering
        \caption{$\hat\varepsilon$ vs.\ \#released snapshots $T$ at fixed $\tau$.}
        \label{fig:Tsweep}
    \end{subfigure}
    \hfill
    \begin{subfigure}[t]{0.49\linewidth}
        \centering
        \caption{$\hat\varepsilon$ vs.\ insertion time $\tau$ at $T=5$, when $\tau$ is fixed.}
        \label{fig:random_vs_fixed}
    \end{subfigure}\vspace*{-.5em}
    \caption{Empirical privacy bound $\hat\varepsilon$ at $\varepsilon = 8$, $\delta = 10^{-4}$ with fixed training steps $L$. Higher is tighter.}  \label{fig:Tsweep_and_fixed_tau}
\end{figure}

\noindent\textbf{Observation 2: Likelihood Ratio vs.\ Heuristic Score Functions.}
\label{sec:exp_finetune}
Figure~\ref{fig:epsilon_distrib} shows violin plots of the empirical lower bound $\hat\varepsilon$ for every attack at $T=5$ and total budget $\varepsilon = 8$. Each violin is obtained by subsampling $90\%$ of the $R = 1900$ audit rounds without replacement and recomputing $\hat\varepsilon$ on each subsample. Among the informed attacks (those that use the insertion time $\tau$), $\SeMISGD$ yields the tightest median bound, and its distribution sits above those of Loss Diff and Loss Ratio. All three informed attacks contrast $\theta_{\tau-1}$ and $\theta_\tau$ to use the isolation property of Section~\ref{sec:optimal_test}. The gap reflects that the LR statistic uses more of the information in this pair of snapshots than the loss-difference and loss-ratio surrogates.
\begin{figure}
    \centering\vspace*{-1.45em}
    \includegraphics[width=.7\linewidth]{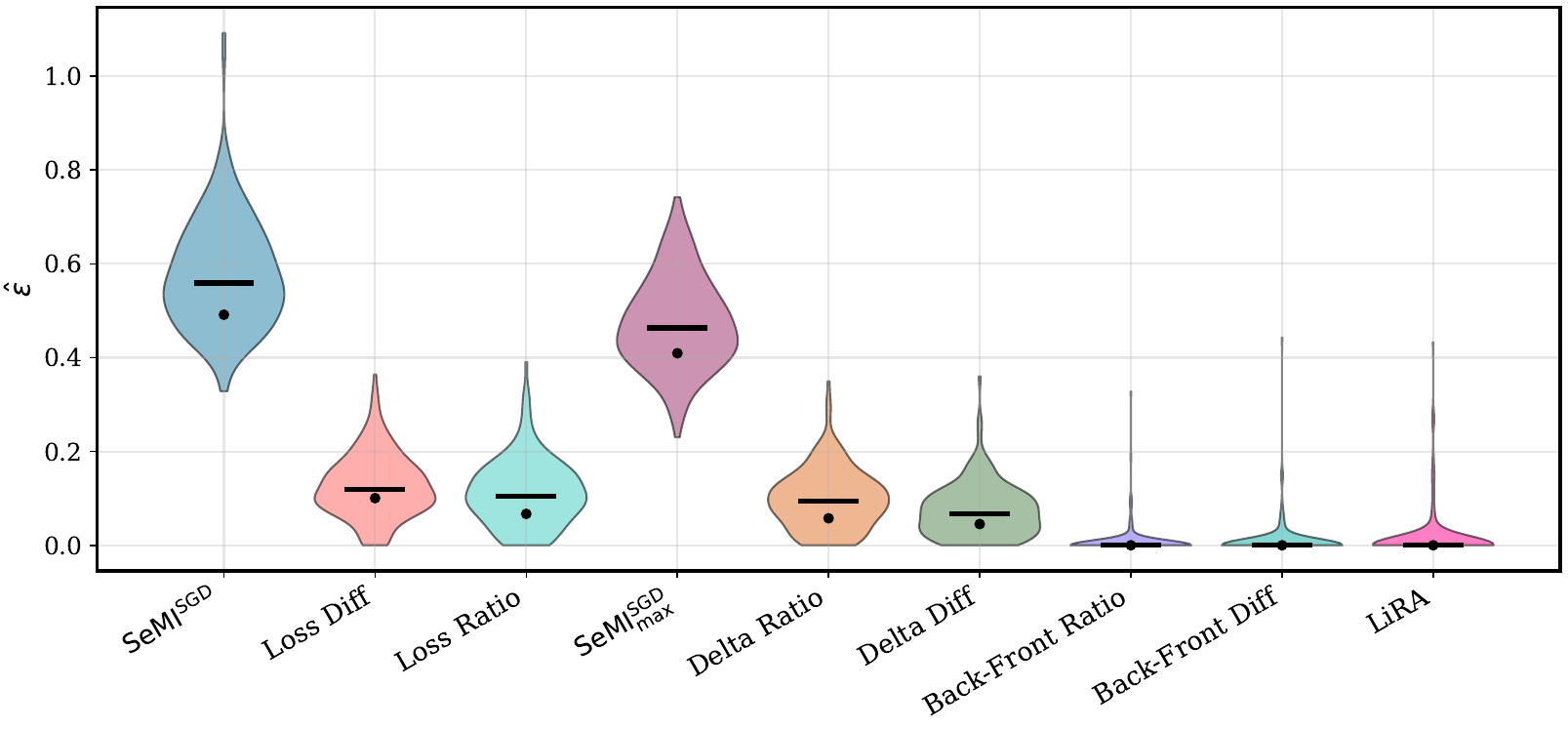}\vspace*{-0.5em}
    \caption{Distribution of empirical epsilon lower bounds at $\delta=10^{-4}$.}\label{fig:epsilon_distrib}
\end{figure}

\noindent\textbf{Observation 3: Knowing the Insertion Time Yields Tighter Audits.}
\label{sec:exp_unknown_tau}
Figure~\ref{fig:epsilon_distrib} pairs each informed attack with its uninformed counterpart: $\SeMISGD$ with $\SeMISGDmax$, Loss Diff with Delta Diff, and Loss Ratio with Delta Ratio. In all three pairs the known-$\tau$ variant gives a tighter $\hat\varepsilon$, mirroring Section~\ref{sec:optimal_test}: the informed attack targets $\mathbf{H_1^\tau}$, while the uninformed counterpart tests the mixture $\widetilde{\mathbf{H_1}}$, whose Neyman--Pearson power is at most that of any component.

\noindent\textbf{Observation 4: Impact of the Insertion Time on Audit.}
\label{sec:exp_fixed_tau}
Sweeping $\tau$ across the $T=5$ phases (Figure~\ref{fig:random_vs_fixed}) shows that audit tightness depends strongly on $\tau$. For $\SeMISGD$, later insertion times yield tighter bounds: the per-transition signal is largest once the model has moved away from initialization and the population gradient is small. Choosing a good $\tau$ a priori for a given attack and training protocol remains open.

\vspace*{-.6em}\section{Discussion and Future Work}\label{sec:conclusion}\vspace*{-.5em}

We derived the optimal MI attack for sequential model observations and characterized its power in closed form under the empirical mean mechanism. The isolation property shows that sequential access recovers the batch statistic at insertion time, removing the dilution that weakens static attacks as training data accumulates. The extension to SGD relies on batch-gradient Gaussianity, yet experiments on DP-SGD fine-tuning show that $\SeMISGD$ yields tighter privacy lower bounds than heuristic baselines, with the strongest gains in low-privacy regimes.

A promising direction is to adapt single-run audits~\citep{steinkePrivacyAuditingOne2023} to the sequential setting by assigning each canary its own insertion phase. Distribution shift is another aspect of successive model releases warranting further study: $\SeMIstar$ retains its power when $\tau$ is known, but quantifying the effect of an insertion-time distribution chosen by a crafter remains open.

\begin{ack}
  This work has been partially supported by the French National Research Agency (ANR) in the framework of the PEPR IA FOUNDRY project (ANR-23-PEIA-0003). We also acknowledge the Inria-ISI, Kolkata Associate Team ``SeRAI'', and the ANR JCJC for the REPUBLIC project (ANR-22-CE23-0003-01) for partially supporting the project.
\end{ack}

\bibliographystyle{alpha}
\bibliography{references}


\newpage
\appendix

\section{Connection to Literature}

\paragraph{Connection to \cite{jagielskiHowCombineMembershipInference2023}.} Appendix B.2 of \cite{jagielskiHowCombineMembershipInference2023} contains a Neyman--Pearson-optimal distinguisher for membership inference on mean estimation, overlapping in spirit with our derivation of $\SeMIstar$. Their analysis considers two released estimates $(\hat\mu_0, \hat\mu_1)$ and yields asymptotic concentration bounds; we treat the full sequence of $T$ snapshots, derive the joint likelihood ratio from which the isolation property follows, and obtain exact non-asymptotic expressions for the Type-I and Type-II errors. Their tests ask whether a target was included in any update, or when it was included; the Appendix B.2 distinguisher addresses only inclusion. Because our auditing procedure inserts the target at a chosen time, our primary test $\SeMIstar$ asks whether insertion occurred at that specific $\tau$. We also address the inclusion question through $\SeMIunif$ (prior on $\tau$) and $\SeMImax$ (no prior). Their likelihood-ratio analysis is confined to mean estimation; we extend the principled derivation to SGD via a batch-gradient Gaussianity approximation, at the cost of an additional modelling assumption that their distribution-free bounds avoid.

\paragraph{Connection to \cite{chang2024efficient}.} \cite{chang2024efficient} propose the free-training attack (FTA) for membership inference in federated learning, thresholding a closed-form (least-squares slope) aggregation of per-sample performance metrics across FL rounds. The statistic is heuristic and carries no likelihood-ratio optimality guarantee. Intuitively, our isolation property explains why slope aggregation across many rounds dilutes the signal that is in fact concentrated at the insertion round. FTA asks only whether a target was ever included, and its analysis is specific to FL with held-out evaluation; $\SeMIstar$'s isolation argument applies to any sequential mechanism satisfying the Gaussian-batch model, at the price of that modelling assumption.

\paragraph{Connection to \cite{nasrTightAuditingDifferentially2023}.} \cite{nasrTightAuditingDifferentially2023} audit DP-SGD by inserting a canary at a chosen batch and showing empirically that observing per-step gradients or model snapshots tightens the resulting $\varepsilon$ estimates. Our analysis explains the empirical gain through the isolation property: consecutive snapshots isolate the per-step update at $\tau$ (exactly in the empirical-mean case, up to the per-step DP noise under DP-SGD). Intuitively, audit power then scales with the per-step noise rather than the aggregate noise over training (Theorem~\ref{thm:sgd-log-lr}). SeMI also covers settings their procedure does not: it does not require per-step gradient access and applies to any sequence of released models, such as public checkpoints or pretrained-then-finetuned releases.

\paragraph{Dot-product scores.} The leading term $N^\top \Sigma_g^{-1} \delta_g$ in Theorem~\ref{thm:sgd-log-lr} is a $\Sigma_g^{-1}$-weighted inner product between the scaled parameter update and the target gradient deviation. This parallels gradient-canary auditing methods such as CANIFE~\citep{maddockCANIFECraftingCanaries2023}, which craft a canary gradient and score its inner product with the observed parameter update; the use of the Mahalanobis weighting connects to the asymptotically optimal MI analysis of~\citep{azizeTargetsAreHarder2025}. The sequential adaptation restricts the inner product to the per-transition update $N$ between $\theta_{\tau-1}$ and $\theta_\tau$, concentrating the score on the snapshot pair where the target's signal sits. $\mathrm{SeMI}^{\mathrm{SGD}}$ derives the precise weighting and target gradient deviation from the Gaussian-gradient model, recovering the heuristic dot-product score as its leading term and giving non-asymptotic guarantees in finite samples rather than only in the limit.\newpage

\section{Proof of the Audit Lemma}
\label{app:audit-lemma-proof}

This appendix proves Lemma~\ref{lem:kairouz-audit} in the sequential setting. 

Note that fixing $\tau$ does not weaken the DP guarantee. By assumption, $\cM$ is $(\varepsilon,\delta)$-DP with respect to the joint dataset $D=\bigcup_t D_t$: inputs differing in one element yield close outputs, regardless of the batch that element sits in. The crafter's choice of $\tau$ isolates a specific neighbouring pair, since under $B=0$ and $B=1$ the datasets differ in one element located in batch $D_\tau$. DP holds for this pair, and Lemma~\ref{lem:kairouz-audit} bounds the errors of \emph{any} test distinguishing the two cases, including tests whose decision rule depends on $\tau$.

\kairouzaudit*

\begin{proof}
    Fix $z^* \in \cZ$ and $\tau \in [T]$. 

    Sample $X_{t,k} \stackrel{\text{iid}}{\sim} \cD_t$ for $t \in [T]$, $k \in [n]$, and $J \sim \cU([n])$, jointly independent. Set $D_t = (X_{t,1}, \dots, X_{t,n})$ and $D = \bigcup_{t=1}^{T} D_t$. Define $D'$ to coincide with $D$ except that the $J$-th entry of $D_\tau$ is replaced by $z^*$. By construction, $D \sim \cP_0$ and $D' \sim \cP_1^{\tau}$, and the two datasets differ in exactly one element.

    Let $\Phi \triangleq \cA(z_\star,\tau,\cdot) \circ \cM$ denote the composed (and possibly randomized) mapping from a dataset to $\widehat{B} \in \{0,1\}$. DP is preserved under post-processing~\citep{dpbook}, so $\Phi$ is $(\varepsilon, \delta)$-DP whenever $\cM$ is.

    By construction, for every realization $\omega$, the pair $(D(\omega), D'(\omega))$ differs in exactly one element (the $J(\omega)$-th entry of batch $\tau$), so it is a valid pair of neighboring datasets. Applying the $(\varepsilon,\delta)$-DP guarantee of $\Phi$ to the event $\{\widehat{B} = 0\}$ at the fixed neighboring pair $(D(\omega), D'(\omega))$ gives, for every $\omega$,
    \begin{equation*}
        \bP_{\Phi}\!\left[\Phi(D(\omega)) = 0\right] \leq e^{\varepsilon}\, \bP_{\Phi}\!\left[\Phi(D'(\omega)) = 0\right] + \delta,
    \end{equation*}
    where the probability is over $\Phi$'s internal randomness only. Taking expectation over $\omega$ (i.e., over the coupling $(D, D', J)$) and using $D \sim \cP_0$, $D' \sim \cP_1^{\tau}$ marginally,
    \begin{equation*}
        \bP_{D \sim \cP_0}\!\left[\Phi(D) = 0\right] \leq e^{\varepsilon}\, \bP_{D' \sim \cP_1^{\tau}}\!\left[\Phi(D') = 0\right] + \delta.
    \end{equation*}

    By definition, $\bP_{\cP_0}[\Phi = 0] = 1 - \alpha(\tau,\cA)$ and $\bP_{\cP_1^{\tau}}[\Phi = 0] = \beta(\tau,\cA)$. Substituting yields
    \begin{equation*}
        1 - \alpha(\tau,\cA) \leq e^{\varepsilon}\, \beta(\tau,\cA) + \delta,
    \end{equation*}
    which rearranges to $\alpha(\tau,\cA) + e^{\varepsilon} \beta(\tau,\cA) \geq 1 - \delta$.

    For the second inequality, apply DP in the reverse direction: the neighboring pair is $(D'(\omega), D(\omega))$ and the event is $\{\widehat{B} = 1\}$. For every $\omega$,
    \begin{equation*}
        \bP_{\Phi}\!\left[\Phi(D'(\omega)) = 1\right] \leq e^{\varepsilon}\, \bP_{\Phi}\!\left[\Phi(D(\omega)) = 1\right] + \delta.
    \end{equation*}
    Taking expectation over $\omega$ gives $\bP_{\cP_1^{\tau}}[\Phi = 1] \leq e^{\varepsilon}\, \bP_{\cP_0}[\Phi = 1] + \delta$, i.e.,
    \begin{equation*}
        1 - \beta(\tau,\cA) \leq e^{\varepsilon}\, \alpha(\tau,\cA) + \delta,
    \end{equation*}
    which rearranges to $\beta(\tau,\cA) + e^{\varepsilon} \alpha(\tau,\cA) \geq 1 - \delta$.
\end{proof}

\section{Audit Protocol and Lower Bound on $\varepsilon$}
\label{sec:audit_protocol}

This appendix details the audit protocol used in our experiments (\Cref{app:audit-cp-grid}) and a calibration-free DKW alternative (\Cref{app:audit-dkw}). Both produce a high-probability lower bound on $\varepsilon$ from rounds of the SeMI game at a fixed insertion time $\tau$, by plugging concentration-based upper confidence bounds on the test errors into~\eqref{eq:lb_epsilon}.

\paragraph{Test family and setup.} All tests that we consider have the form $\cA^{\gamma}(z_\star,\tau,o_{1:T}) = \mathds{1}(S > \gamma)$, where $S = s(z_\star,\tau,o_{1:T})$ is a real-valued test statistic and $\gamma$ a threshold. We run $R$ rounds of the SeMI game (Algorithm~\ref{alg:crafter}) and record membership indicators $B_1,\dots,B_R$ and statistic values $S_1,\dots,S_R$ rather than the decisions $\widehat{B}^{\gamma}_1,\dots,\widehat{B}^{\gamma}_R$ at a fixed $\gamma$. Let $N_i(R) \triangleq \sum_{r=1}^{R}\mathds{1}(B_r=i)$ count rounds with membership label $i \in \{0,1\}$. For any $\gamma \in \RR$, the empirical Type-I and Type-II errors are
\begin{align*}
    \widehat{\alpha}_{R}(\gamma) =  \frac{1}{N_0(R)}\sum_{r=1}^{R}\mathds{1}(S_r > \gamma)\,\mathds{1}(B_r = 0),
    \qquad
    \widehat{\beta}_{R}(\gamma) = \frac{1}{N_1(R)}\sum_{r=1}^{R}\mathds{1}(S_r \leq \gamma)\,\mathds{1}(B_r = 1).
\end{align*}

\subsection{Clopper--Pearson with a calibrated threshold grid}\label{app:audit-cp-grid}

This is the protocol used in our experiments. It splits the audit budget into a calibration phase and an evaluation phase, so the thresholds at which Clopper--Pearson intervals are computed are independent of the data used to estimate the errors.

\paragraph{Calibration.} We run $R_{\text{cal}}$ independent SeMI rounds and collect the resulting statistic values. The threshold grid $\gamma_1 < \dots < \gamma_K$ is the set of $K$ empirical quantiles of these values at levels $1/(K+1),\,\dots,\,K/(K+1)$. The calibration runs are disjoint from the audit runs.

\paragraph{Pointwise upper bounds.} For $x \in \{0,\dots,m\}$ successes out of $m$ Bernoulli trials and confidence level $\eta \in (0,1)$, the Clopper--Pearson upper limit~\citep{clopper1934use} is
\begin{equation}
    \mathrm{CP}_{\mathrm{up}}(x, m, \eta) \triangleq \inf\bigl\{ p \in [0,1] :\; \bP_{Y \sim \mathrm{Binomial}(m, p)}(Y \leq x) \leq \eta \bigr\}, \label{eq:cp_up}
\end{equation}
with the convention $\mathrm{CP}_{\mathrm{up}}(m, m, \eta) = 1$. Equivalently, $\mathrm{CP}_{\mathrm{up}}(x, m, \eta)$ is the $1-\eta$ quantile of $\mathrm{Beta}(x+1,\,m-x)$. By construction, $\bP_{Y \sim \mathrm{Binomial}(m,p)}(\mathrm{CP}_{\mathrm{up}}(Y, m, \eta) < p) \leq \eta$ for every $p \in [0,1]$ and every $m \geq 1$.

Fix $\xi \in (0,1)$. Conditional on the calibration set and on $(N_0(R), N_1(R))$, the count $N_0(R)\,\widehat{\alpha}_R(\gamma_k)$ is $\mathrm{Binomial}(N_0(R),\, \alpha(\gamma_k))$ and the count $N_1(R)\,\widehat{\beta}_R(\gamma_k)$ is $\mathrm{Binomial}(N_1(R),\, \beta(\gamma_k))$. We apply Clopper--Pearson upper limits at level $\xi/(2K)$,
\begin{align*}
    \overline{\alpha}_{R}^{\xi}(\gamma_k) = \mathrm{CP}_{\mathrm{up}}\bigl(N_0(R)\,\widehat{\alpha}_R(\gamma_k),\,N_0(R),\,\xi/(2K)\bigr), \\[2pt]
    \overline{\beta}_{R}^{\xi}(\gamma_k)  = \mathrm{CP}_{\mathrm{up}}\bigl(N_1(R)\,\widehat{\beta}_R(\gamma_k),\,N_1(R),\,\xi/(2K)\bigr),
\end{align*}
with the convention $\overline{\alpha}_{R}^{\xi}(\gamma_k) = 1$ when $N_0(R) = 0$ and $\overline{\beta}_{R}^{\xi}(\gamma_k) = 1$ when $N_1(R) = 0$, so that the bound is trivially valid on these zero-probability events. By~\eqref{eq:cp_up}, $\bP\bigl(\alpha(\gamma_k) > \overline{\alpha}_R^{\xi}(\gamma_k) \bigm| \text{calibration}\bigr) \leq \xi/(2K)$, and likewise for $\beta(\gamma_k)$.

\paragraph{Audit lower bound.} The factor $1/(2K)$ leaves room for a union bound over the $K$ thresholds and the two error types. Plugging the upper bounds into~\eqref{eq:lb_epsilon} and maximizing over the grid yields
\begin{equation}
    \underline{\varepsilon}_{R}^{\xi,\mathrm{CP}}(\delta) = \log\left(\max_{k \in [K]}\max\left[\frac{1 - \delta - \overline{\alpha}_R^{\xi}(\gamma_k)}{\overline{\beta}_R^\xi(\gamma_k)},\;\frac{1 - \delta - \overline{\beta}_R^\xi(\gamma_k)}{\overline{\alpha}_R^{\xi}(\gamma_k)}\right]\right). \label{eq:lb_epsilon_R_cp}
\end{equation}

\begin{restatable}{lemma}{epsilonlbcp}\label{lem:epsilonlb-cp}
    If the mechanism used in SeMI satisfies $(\varepsilon,\delta)$-DP, then $\bP\bigl(\varepsilon \geq \underline{\varepsilon}_{R}^{\xi,\mathrm{CP}}(\delta)\bigr) \geq 1 - \xi$, where the probability is jointly over the calibration rounds and the audit rounds.
\end{restatable}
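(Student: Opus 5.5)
The plan is to work on a single ``good event'' on which every Clopper--Pearson bound is valid simultaneously, and then to apply Lemma~\ref{lem:kairouz-audit} to each fixed threshold test. Throughout I condition on the calibration rounds. Since these rounds are disjoint from and independent of the audit rounds, the grid $\gamma_1<\dots<\gamma_K$ is deterministic given the calibration data. For each $k$ the test $\cA^{\gamma_k}=\mathds{1}(S>\gamma_k)$ is then a fixed SeMI attack, and its true errors $\alpha(\gamma_k)=\alpha(\tau,\cA^{\gamma_k})$ and $\beta(\gamma_k)$ are well-defined numbers.

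Next I condition further on the labels $(B_1,\dots,B_R)$, and hence on $(N_0(R),N_1(R))$. The audit rounds are i.i.d., and given $B_r=0$ the statistic $S_r$ has the law of $S$ under $\cP_0$. So $N_0(R)\widehat\alpha_R(\gamma_k)\sim\mathrm{Binomial}(N_0(R),\alpha(\gamma_k))$, and likewise for $\beta$ under $\cP_1^\tau$. The coverage property of \eqref{eq:cp_up} then gives $\bP(\alpha(\gamma_k)>\overline\alpha_R^\xi(\gamma_k)\mid\cdot)\le\xi/(2K)$, and the same for $\beta$. When $N_i(R)=0$ the convention that the bound equals $1$ makes it trivially valid. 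Let $E$ be the event that $\alpha(\gamma_k)\le\overline\alpha_R^\xi(\gamma_k)$ and $\beta(\gamma_k)\le\overline\beta_R^\xi(\gamma_k)$ for all $k\in[K]$. A union bound over these $2K$ events gives $\bP(E^c\mid\cdot)\le\xi$. Integrating over the labels and the calibration data gives $\bP(E)\ge1-\xi$ jointly.

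On $E$, fix $k$. Lemma~\ref{lem:kairouz-audit} applied to $\cA^{\gamma_k}$ gives $1-\delta-\alpha(\gamma_k)\le e^\varepsilon\beta(\gamma_k)$. Because $\overline\alpha\ge\alpha$ and $\overline\beta\ge\beta$,
\[
1-\delta-\overline\alpha_R^\xi(\gamma_k)\;\le\;1-\delta-\alpha(\gamma_k)\;\le\;e^\varepsilon\beta(\gamma_k)\;\le\;e^\varepsilon\,\overline\beta_R^\xi(\gamma_k).
\]
So the first ratio in \eqref{eq:lb_epsilon_R_cp} is at most $e^\varepsilon$. The symmetric inequality of the lemma bounds the second ratio the same way. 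Taking the maximum over $k$ and then the logarithm gives $\underline\varepsilon_R^{\xi,\mathrm{CP}}(\delta)\le\varepsilon$ on $E$, which proves the claim.

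The main obstacle is the handling of degenerate cases, together with making the conditioning rigorous. For the degenerate cases: if the numerator is nonpositive, the ratio is at most $0<e^\varepsilon$ (or the log is $-\infty$), so it is harmless. If a denominator $\overline\beta$ or $\overline\alpha$ equals $0$, note that Clopper--Pearson upper limits are strictly positive whenever $m\ge1$ and $\eta<1$, so this cannot happen. For the conditioning, the key point is that the grid depends only on calibration data, so the thresholds can be treated as fixed when the audit-round binomial laws are invoked. The interplay with the random counts $N_i(R)$ is handled by the tower property, as above.
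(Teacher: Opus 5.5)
Your proposal is correct and follows the paper's proof. You condition on the calibration set so the grid is fixed, get $\bP(E)\geq 1-\xi$ from Clopper--Pearson coverage plus a union bound over the $2K$ events, and on $E$ apply Lemma~\ref{lem:kairouz-audit} to each fixed threshold test. Your degenerate-case handling uses the strict positivity of the Clopper--Pearson upper limits, so the chain $1-\delta-\overline{\alpha}\leq e^{\varepsilon}\overline{\beta}$ bounds each ratio directly; this is a slightly cleaner substitute for the paper's case split on whether $\beta(\gamma_k)>0$.
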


\begin{proof}
    Condition on the calibration set, so that $\gamma_1,\dots,\gamma_K$ are fixed. By the Clopper--Pearson coverage property and a union bound over the $2K$ events,
    \[
        \bP\bigl(E \,\bigm|\, \text{calibration}\bigr) \geq 1 - \xi,
        \qquad
        E \triangleq \bigl\{\forall\,k \in [K]:\; \alpha(\gamma_k) \leq \overline{\alpha}_R^{\xi}(\gamma_k),\; \beta(\gamma_k) \leq \overline{\beta}_R^{\xi}(\gamma_k)\bigr\}.
    \]
    The conditional bound holds for every realization of the calibration set, so it holds marginally: $\bP(E) \geq 1 - \xi$. On $E$, fix any $k \in [K]$. Lemma~\ref{lem:kairouz-audit} gives $\alpha(\gamma_k) + e^{\varepsilon}\beta(\gamma_k) \geq 1-\delta$. If $\beta(\gamma_k) > 0$, dividing through and using $\alpha(\gamma_k) \leq \overline{\alpha}_R^{\xi}(\gamma_k)$, $\beta(\gamma_k) \leq \overline{\beta}_R^{\xi}(\gamma_k)$ yields $e^\varepsilon \geq (1-\delta-\overline{\alpha}_R^{\xi}(\gamma_k))/\overline{\beta}_R^{\xi}(\gamma_k)$. If $\beta(\gamma_k) = 0$, the right-hand side is non-positive on $E$ and the inequality holds vacuously. The symmetric inequality of Lemma~\ref{lem:kairouz-audit} treats the case with the roles of $\alpha$ and $\beta$ exchanged. Taking the maximum over $k$ and over the two forms, then applying $\log$, gives $\varepsilon \geq \underline{\varepsilon}_R^{\xi,\mathrm{CP}}(\delta)$ on $E$.
\end{proof}

\subsection{DKW envelope: a calibration-free alternative}\label{app:audit-dkw}

The protocol above pays a $\log K$ factor in each Clopper--Pearson width and requires a separate calibration budget. An alternative we have not seen used in privacy auditing to our knowledge replaces the pointwise bounds and the union bound over $\gamma$ with a uniform DKW envelope, which gives a single confidence band valid for every $\gamma \in \RR$ and removes the need to calibrate.

Given $\xi > 0$, define DKW upper bounds on the errors
\begin{align*}
    \overline{\alpha}_{R}^{\xi,\mathrm{DKW}}(\gamma)  = \widehat{\alpha}_{R}(\gamma) + \sqrt{\frac{\log(2/\xi)}{2N_0(R)}}, \qquad
    \overline{\beta}_{R}^{\xi,\mathrm{DKW}}(\gamma)   = \widehat{\beta}_{R}(\gamma) + \sqrt{\frac{\log(2/\xi)}{2N_1(R)}},
\end{align*}
and the audit lower bound
\begin{equation}
    \underline{\varepsilon}_{R}^{\xi,\mathrm{DKW}}(\delta) =  \log\left(\max_{\gamma \in \RR}\max\left[\frac{1 - \delta - \overline{\alpha}_R^{\xi,\mathrm{DKW}}(\gamma)}{\overline{\beta}_R^{\xi,\mathrm{DKW}}(\gamma)},\;\frac{1 - \delta - \overline{\beta}_R^{\xi,\mathrm{DKW}}(\gamma)}{\overline{\alpha}_R^{\xi,\mathrm{DKW}}(\gamma)}\right]\right). \label{eq:lb_epsilon_R_dkw}
\end{equation}

\begin{restatable}{lemma}{epsilonlbdkw}\label{lem:epsilonlb-dkw}
    If the mechanism used in SeMI satisfies $(\varepsilon,\delta)$-DP, then $\bP\bigl(\varepsilon \geq \underline{\varepsilon}_{R}^{\xi,\mathrm{DKW}}(\delta)\bigr) \geq 1 - \xi$.
\end{restatable}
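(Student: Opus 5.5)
The plan follows the proof of Lemma~\ref{lem:epsilonlb-cp}, with the pointwise Clopper--Pearson union bound replaced by the Dvoretzky--Kiefer--Wolfowitz inequality with Massart's constant. We condition on the labels $B_1,\dots,B_R$, so that $N_0(R)$ and $N_1(R)$ are fixed. Conditionally, the statistics $\{S_r : B_r = 0\}$ are $N_0(R)$ i.i.d.\ draws from the law of $S$ under $\mathbf{H_0}$. Likewise, $\{S_r : B_r = 1\}$ are $N_1(R)$ i.i.d.\ draws from its law under $\mathbf{H_1^\tau}$, and the two groups are independent. This holds because the rounds of Algorithm~\ref{alg:crafter} are independent and $B_r$ is drawn first in each round.

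Write $F_0$ and $F_1$ for the CDFs of $S$ under the two hypotheses. Then $\alpha(\gamma) = 1 - F_0(\gamma)$ and $\widehat{\alpha}_R(\gamma) = 1 - \widehat F_0(\gamma)$, where $\widehat F_0$ is the empirical CDF of the label-$0$ sample. Hence $\alpha(\gamma) - \widehat{\alpha}_R(\gamma) = \widehat F_0(\gamma) - F_0(\gamma)$. Similarly, $\beta(\gamma) = F_1(\gamma)$ and $\widehat{\beta}_R(\gamma) = \widehat F_1(\gamma)$.

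The one-sided DKW--Massart bound $\bP(\sup_\gamma (\widehat F - F)(\gamma) > \lambda) \leq e^{-2m\lambda^2}$ holds for arbitrary, possibly discontinuous, $F$. We apply it to each sample, or use the two-sided version $2e^{-2m\lambda^2}$ to be safe. With $\lambda_i = \sqrt{\log(2/\xi)/(2N_i(R))}$, each failure probability is at most $\xi/2$ in the one-sided version. A union bound over the two samples then shows that, conditionally on the labels, the event
\[
E \triangleq \bigl\{\forall \gamma \in \RR:\ \alpha(\gamma) \leq \overline{\alpha}_R^{\xi,\mathrm{DKW}}(\gamma),\ \beta(\gamma) \leq \overline{\beta}_R^{\xi,\mathrm{DKW}}(\gamma)\bigr\}
\]
has probability at least $1-\xi$. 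As in the CP proof, we set the bounds to $+\infty$ (or $1$) when $N_i(R)=0$. Integrating over the labels gives $\bP(E) \geq 1-\xi$ marginally.

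The main point to check is the constant. The one-sided Massart inequality gives exactly $\xi/2$ per sample with $\log(2/\xi)$. If one insists on the two-sided form, each sample costs $\xi$, which would require $\log(4/\xi)$. We would therefore cite the one-sided version explicitly, which is valid for all $m\geq1$ under Massart's condition $\lambda \geq \sqrt{\log 2/(2m)}$ or its later refinements, or simply note the side.

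On $E$, the argument is pointwise and identical to the Clopper--Pearson case. Fix any $\gamma$. Lemma~\ref{lem:kairouz-audit}, applied to the thresholded test $\cA^\gamma$, gives $\alpha(\gamma) + e^\varepsilon\beta(\gamma) \geq 1-\delta$ and its symmetric counterpart. Monotonicity then yields
\[
e^\varepsilon \geq \frac{1-\delta-\overline{\alpha}_R^{\xi,\mathrm{DKW}}(\gamma)}{\overline{\beta}_R^{\xi,\mathrm{DKW}}(\gamma)}
\]
whenever the denominator is positive, and the case where the numerator is nonpositive is vacuous. The same reasoning applies with the roles of $\alpha$ and $\beta$ exchanged. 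Because $E$ holds uniformly in $\gamma$, we may take the supremum over $\gamma\in\RR$ and apply $\log$ to obtain $\varepsilon \geq \underline{\varepsilon}_R^{\xi,\mathrm{DKW}}(\delta)$ on $E$. The uniform band is exactly what allows the data-dependent maximization over $\gamma$ without any calibration split.
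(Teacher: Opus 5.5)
Your proposal is correct and follows the paper's proof. The structure is the same: a one-sided DKW--Massart bound on each label group conditional on the labels (the paper conditions only on $N_1(R)$, which is equivalent), a union bound giving the uniform event $E$ with probability at least $1-\xi$, then Lemma~\ref{lem:kairouz-audit} applied pointwise on $E$ before maximizing over $\gamma$. Your explicit check that $\xi\le 1$ puts the deviation inside Massart's validity range is a useful addition in rigor. So is your monotonicity step, which carries the inequality directly to $\overline{\alpha}+e^{\varepsilon}\overline{\beta}\geq 1-\delta$ and avoids the paper's case split on $\beta(\gamma)=0$.
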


\begin{proof}
    Let $F_i(\gamma) \triangleq \bP(S_r \leq \gamma \mid B_r = i)$ for $i \in \{0,1\}$ be the population CDFs of the test statistic under each hypothesis, and let
    \begin{equation*}
        \hat F_i(\gamma) \triangleq \frac{1}{N_i(R)}\sum_{r=1}^{R} \mathds{1}(S_r \leq \gamma)\,\mathds{1}(B_r = i)
    \end{equation*}
    be their empirical counterparts on the rounds with $B_r = i$. Then $\alpha(\gamma) = 1 - F_0(\gamma)$ and $\beta(\gamma) = F_1(\gamma)$, with $\widehat\alpha_R(\gamma) = 1 - \hat F_0(\gamma)$ and $\widehat\beta_R(\gamma) = \hat F_1(\gamma)$.

    \textbf{Step 1: uniform concentration.}
    Fix $n_1 \geq 1$. Given $N_1(R) = n_1$, the $n_1$ statistics $\{S_r : B_r = 1\}$ are i.i.d.\ with CDF $F_1$, and $\hat F_1$ reduces on this conditioning event to the empirical CDF of these $n_1$ samples, which we denote $\hat F_1^{n_1}$. By the Dvoretzky--Kiefer--Wolfowitz (DKW) inequality~\citep{massart1990},
    \[
        \bP\!\left(\sup_{\gamma}\bigl(F_1(\gamma) - \hat{F}_1^{n_1}(\gamma)\bigr) > t \;\middle|\; N_1(R) = n_1\right) \leq e^{-2n_1 t^2}.
    \]
    Setting $t = t(n_1) \triangleq \sqrt{\log(2/\xi)/(2n_1)}$ makes the bound equal $\xi/2$. Marginalizing over $N_1(R)$,
    \[
        \bP\!\left(\sup_\gamma\bigl(\beta(\gamma) - \widehat{\beta}_R(\gamma)\bigr) > t(N_1(R))\right)
        = \sum_{n_1=1}^{\infty} \bP(N_1(R) = n_1)\cdot e^{-2n_1 t(n_1)^2}
        = \frac{\xi}{2}\,\bP(N_1(R) \geq 1) \leq \frac{\xi}{2}.
    \]
    (When $N_1(R) = 0$ we set $\overline{\beta}_R^{\xi,\mathrm{DKW}}(\gamma) = 1$. Since $\beta(\gamma) \in [0,1]$, the event $\{\beta(\gamma) > \overline{\beta}_R^{\xi,\mathrm{DKW}}(\gamma)\}$ is empty, and the resulting lower bound on $\varepsilon$ is trivial. The same convention applies to $\overline{\alpha}_R^{\xi,\mathrm{DKW}}(\gamma)$ when $N_0(R) = 0$.)
    Since $\alpha(\gamma) - \widehat{\alpha}_R(\gamma) = \hat{F}_0(\gamma) - F_0(\gamma)$, the same argument applied to the group $\{S_r : B_r = 0\}$ gives $\bP(\sup_\gamma(\alpha(\gamma) - \widehat{\alpha}_R(\gamma)) > t(N_0(R))) \leq \xi/2$. A union bound over the two groups yields
    \[
        \bP(E) \geq 1 - \xi, \qquad
        E \triangleq \bigl\{\forall\,\gamma \in \RR:\; \alpha(\gamma) \leq \overline{\alpha}_R^{\xi,\mathrm{DKW}}(\gamma),\; \beta(\gamma) \leq \overline{\beta}_R^{\xi,\mathrm{DKW}}(\gamma)\bigr\}.
    \]

    \textbf{Step 2: from event $E$ to the lower bound.}
    On $E$, fix any $\gamma \in \RR$. By Lemma~\ref{lem:kairouz-audit},
    \begin{equation}\label{eq:kairouz-pointwise}
        \alpha(\gamma) + e^\varepsilon \beta(\gamma) \geq 1 - \delta.
    \end{equation}
    We split on the sign of $\beta(\gamma)$. If $\beta(\gamma) > 0$, dividing through gives $e^\varepsilon \geq (1-\delta-\alpha(\gamma))/\beta(\gamma)$. On $E$, $\alpha(\gamma) \leq \overline{\alpha}_R^{\xi,\mathrm{DKW}}(\gamma)$ and $\beta(\gamma) \leq \overline{\beta}_R^{\xi,\mathrm{DKW}}(\gamma)$, with $\overline{\beta}_R^{\xi,\mathrm{DKW}}(\gamma) > 0$ whenever $N_1(R) \geq 1$, so
    \begin{equation}\label{eq:lb-from-alphabar}
        e^\varepsilon \;\geq\; \frac{1-\delta-\alpha(\gamma)}{\beta(\gamma)} \;\geq\; \frac{1-\delta-\overline{\alpha}_R^{\xi,\mathrm{DKW}}(\gamma)}{\overline{\beta}_R^{\xi,\mathrm{DKW}}(\gamma)}.
    \end{equation}
    If instead $\beta(\gamma) = 0$, \eqref{eq:kairouz-pointwise} reduces to $\alpha(\gamma) \geq 1-\delta$, and on $E$ the upper bound $\overline{\alpha}_R^{\xi,\mathrm{DKW}}(\gamma) \geq \alpha(\gamma) \geq 1-\delta$ makes the right-hand side of~\eqref{eq:lb-from-alphabar} non-positive. The inequality then holds vacuously, so~\eqref{eq:lb-from-alphabar} is valid for every $\gamma$.

    The symmetric inequality of Lemma~\ref{lem:kairouz-audit} yields $e^\varepsilon \geq (1-\delta-\overline{\beta}_R^{\xi,\mathrm{DKW}}(\gamma))/\overline{\alpha}_R^{\xi,\mathrm{DKW}}(\gamma)$ by the same argument with the roles of $\alpha$ and $\beta$ exchanged. Taking the maximum over $\gamma$ and over the two forms, then applying $\log$, gives $\varepsilon \geq \underline{\varepsilon}_R^{\xi,\mathrm{DKW}}(\delta)$ on $E$. Hence $\bP(\varepsilon \geq \underline{\varepsilon}_R^{\xi,\mathrm{DKW}}(\delta)) \geq \bP(E) \geq 1 - \xi$.
\end{proof}

\section{Proofs for $\SeMIstar$}
\label{app:optimal-test-proofs}

We consider $T$ batches of size $n$ drawn from $\cD = \cN(\mu, \sigma^2)$ with known parameters $(\mu, \sigma^2)$. The target $z^* \in \RR$ is fixed. We test $\mathbf{H_0}$ against $\mathbf{H_1^\tau}$, where $\mathbf{H_0}$ is the null hypothesis that $z^*$ is not in $D_\tau$ and $\mathbf{H_1^\tau}$ is the alternative hypothesis that $z^*$ is in $D_\tau$. The insertion time $\tau \in \{1, \ldots, T\}$ is assumed known.

\textbf{Notation.} We write $\bar{X}_t = \frac{1}{n}\sum_{k=1}^n X_{t,k}$ for the batch mean and $\hat{\mu}_t = \frac{1}{tn}\sum_{j=1}^t\sum_{k=1}^n X_{j,k}$ for the cumulative empirical mean, which satisfies the recursion $\hat{\mu}_t = \frac{t-1}{t}\hat{\mu}_{t-1} + \frac{1}{t}\bar{X}_t$.

\subsection{Proof of Lemma~\ref{lem:lr-simplification}}
\label{app:proof-lr-simplification}

\lrsimplification*

\begin{proof}
    The likelihood ratio factorizes using the chain rule for conditional densities:
    \begin{equation*}
        \mathrm{LR}_{\tau} = \frac{p(\hat{\mu}_1, \ldots, \hat{\mu}_T \mid \mathbf{H_1^\tau})}{p(\hat{\mu}_1, \ldots, \hat{\mu}_T \mid \mathbf{H_0})} = \prod_{t=1}^{T} \frac{p(\hat{\mu}_t \mid \hat{\mu}_1, \ldots, \hat{\mu}_{t-1}, \mathbf{H_1^\tau})}{p(\hat{\mu}_t \mid \hat{\mu}_1, \ldots, \hat{\mu}_{t-1}, \mathbf{H_0})}.
    \end{equation*}

    The recursive relation $\hat{\mu}_t = \frac{t-1}{t}\hat{\mu}_{t-1} + \frac{1}{t}\bar{X}_t$ implies that, given $\hat{\mu}_{t-1}$, the distribution of $\hat{\mu}_t$ depends only on the distribution of the new batch mean $\bar{X}_t$. Earlier observations $\hat{\mu}_1, \ldots, \hat{\mu}_{t-2}$ provide no additional information once $\hat{\mu}_{t-1}$ is known.

    \textbf{Terms for $t \neq \tau$ cancel.}
    For $t < \tau$, the target $z^*$ has not yet been inserted into any batch. Under both $\mathbf{H_0}$ and $\mathbf{H_1^\tau}$, all samples in batches $D_1, \ldots, D_{t}$ are drawn i.i.d.\ from $\mathcal{D}$. The conditional distribution of $\hat{\mu}_t$ given $\hat{\mu}_{t-1}$ is therefore identical under both hypotheses, and the corresponding factor in the product equals 1.

    For $t > \tau$, the batch $D_t$ contains no target sample under either hypothesis. Although the effect of $z^*$ is present in $\hat{\mu}_{t-1}$ under $\mathbf{H_1^\tau}$, conditioning on $\hat{\mu}_{t-1}$ absorbs this effect. The new batch $\bar{X}_t$ is drawn from $\mathcal{N}(\mu, \sigma^2/n)$ under both hypotheses, so the conditional distributions of $\hat{\mu}_t$ given $\hat{\mu}_{t-1}$ are again identical. These factors also equal 1.

    The only factor that differs between the two hypotheses is the one at $t = \tau$, where the batch $D_\tau$ contains $z^*$ under $\mathbf{H_1^\tau}$ but not under $\mathbf{H_0}$.
\end{proof}

\subsection{Univariate Log Likelihood Ratio}
\label{app:proof-log-lr}

\begin{theorem}[Log Likelihood Ratio, univariate]
    \label{thm:log-lr}
    The log likelihood ratio for univariate samples has the form
    \begin{align*}
        \log \mathrm{LR}_\tau & = -\frac{1}{2}\log\left(\frac{n-1}{n}\right) - \frac{n}{2(n-1)\sigma^2}(\bar{X}_\tau - \mu)^2 + \frac{(z^* - \mu) n}{(n-1)\sigma^2}(\bar{X}_\tau - \mu) - \frac{(z^* - \mu)^2}{2(n-1)\sigma^2}.
    \end{align*}
\end{theorem}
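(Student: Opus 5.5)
By Lemma~\ref{lem:lr-simplification}, $\mathrm{LR}_\tau$ reduces to the ratio of the conditional densities of $\hat{\mu}_\tau$ given $\hat{\mu}_{\tau-1}$ under the two hypotheses. The plan is to compute these two Gaussian densities explicitly and take the log of their ratio. The map $\hat{\mu}_\tau \mapsto \bar{X}_\tau = \tau\hat{\mu}_\tau - (\tau-1)\hat{\mu}_{\tau-1}$ is affine with Jacobian $\tau$, the same under both hypotheses. The conditional density ratio therefore equals the ratio of the densities of $\bar{X}_\tau$ evaluated at the recovered batch mean. We can thus work directly with $\bar{X}_\tau$, which is independent of $\hat{\mu}_{\tau-1}$ under both hypotheses.

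\textbf{Step 1: the two densities.} Under $\mathbf{H_0}$, $\bar{X}_\tau \sim \cN(\mu, \sigma^2/n)$. Under $\mathbf{H_1^\tau}$, $\bar{X}_\tau = \frac{1}{n}\big(z^* + \sum_{k \neq J} X_{\tau,k}\big)$. Conditional on $J$, this is $\cN\big(\frac{(n-1)\mu + z^*}{n}, \frac{(n-1)\sigma^2}{n^2}\big)$, and the law does not depend on $J$, so the uniform mixture over $J$ is trivial. In terms of $N = \bar{X}_\tau - \mu$, this means $N \sim \cN(0, \sigma^2/n)$ under $\mathbf{H_0}$ and $N \sim \cN\big(\frac{z^*-\mu}{n}, \frac{(n-1)\sigma^2}{n^2}\big)$ under $\mathbf{H_1^\tau}$. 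These match the conditional laws displayed after Lemma~\ref{lem:lr-simplification} once rescaled by $1/\tau$.

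\textbf{Step 2: the log-ratio.} The normalizing constants give $\frac12\log\frac{\sigma^2/n}{(n-1)\sigma^2/n^2} = -\frac12\log\frac{n-1}{n}$. The exponents give
\[
\frac{nN^2}{2\sigma^2} - \frac{n^2\big(N - \frac{z^*-\mu}{n}\big)^2}{2(n-1)\sigma^2}.
\]
Expanding the square yields three contributions. The quadratic term is $N^2\big(\frac{n}{2\sigma^2} - \frac{n^2}{2(n-1)\sigma^2}\big) = -\frac{n}{2(n-1)\sigma^2}N^2$. The cross term is $+\frac{n(z^*-\mu)}{(n-1)\sigma^2}N$. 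The constant is $-\frac{(z^*-\mu)^2}{2(n-1)\sigma^2}$. Together these give exactly the stated formula.

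The only delicate point is the bookkeeping in Step 2, in particular the coefficient of $N^2$. There one must combine $n/2$ and $n^2/(2(n-1))$ correctly into $-n/(2(n-1))$. It is also necessary to justify that the change of variables to $\bar{X}_\tau$ does not alter the ratio, since the common Jacobian cancels. Everything else is routine Gaussian algebra.
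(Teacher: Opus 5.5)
Your proof is correct and follows essentially the same route as the paper. Both arguments reduce to the single Gaussian transition at $\tau$ via the isolation lemma and then compute the log-ratio of two Gaussians from the variance ratio $(n-1)/n$ and the mean shift of the batch mean. The only cosmetic difference is that you change variables to $\bar{X}_\tau$ up front, noting that the common Jacobian $\tau$ cancels. The paper instead works with $\hat{\mu}_\tau$ through $m_0,m_1,v_0,v_1$ and lets the $\tau$ factors cancel after substituting $N\tau = \bar{X}_\tau - \mu$.
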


\begin{proof}
    We derive the log likelihood ratio from the conditional distributions at time $\tau$.

    \textbf{Conditional distributions.}
    From the main text, the conditional distributions are:
    \begin{align*}
        \hat{\mu}_\tau \mid \hat{\mu}_{\tau-1}, \mathbf{H_0}      & \sim \cN\left(m_0, v_0\right) \\
        \hat{\mu}_\tau \mid \hat{\mu}_{\tau-1}, \mathbf{H_1^\tau} & \sim \cN\left(m_1, v_1\right)
    \end{align*}
    where
    \begin{align*}
        m_0 & = \frac{\tau-1}{\tau} \hat{\mu}_{\tau-1} + \frac{\mu}{\tau},              & v_0 & = \frac{\sigma^2}{\tau^2 n},        \\
        m_1 & = \frac{\tau-1}{\tau} \hat{\mu}_{\tau-1} + \frac{(n-1)\mu + z^*}{\tau n}, & v_1 & = \frac{(n-1)\sigma^2}{\tau^2 n^2}.
    \end{align*}

    \textbf{Log likelihood ratio.}
    Let $N = \hat{\mu}_\tau - m_0$. The log likelihood ratio between two Gaussians is:
    \begin{equation*}
        \log \mathrm{LR}_\tau = -\frac{1}{2}\log\left(\frac{v_1}{v_0}\right) + \frac{N^2}{2v_0} - \frac{(N - (m_1-m_0))^2}{2v_1}.
    \end{equation*}
    Expanding $(N - (m_1-m_0))^2$:
    \begin{align*}
        \log \mathrm{LR}_\tau & = -\frac{1}{2}\log\left(\frac{v_1}{v_0}\right) + N^2\left(\frac{1}{2v_0} - \frac{1}{2v_1}\right) + \frac{N(m_1-m_0)}{v_1} - \frac{(m_1-m_0)^2}{2v_1}.
    \end{align*}

    \textbf{Variance ratio and mean difference.}
    The variance ratio is:
    \begin{equation*}
        \frac{v_1}{v_0} = \frac{(n-1)\sigma^2 / (\tau^2 n^2)}{\sigma^2 / (\tau^2 n)} = \frac{n-1}{n}.
    \end{equation*}
    The mean difference is:
    \begin{equation*}
        m_1 - m_0 = \frac{(n-1)\mu + z^*}{\tau n} - \frac{\mu}{\tau} = \frac{(n-1)\mu + z^* - n\mu}{\tau n} = \frac{z^* - \mu}{\tau n}.
    \end{equation*}

    \textbf{Simplification.}
    Using $N = \frac{1}{\tau}(\bar{X}_\tau - \mu)$ and computing the coefficients:
    \begin{align*}
        \frac{1}{2v_0} - \frac{1}{2v_1} & = \frac{\tau^2 n}{2\sigma^2} - \frac{\tau^2 n^2}{2(n-1)\sigma^2} = -\frac{\tau^2 n}{2(n-1)\sigma^2}, \\
        \frac{m_1-m_0}{v_1}             & = \frac{(z^* - \mu)\tau n}{(n-1)\sigma^2},                                                           \\
        \frac{(m_1-m_0)^2}{2v_1}        & = \frac{(z^* - \mu)^2}{2(n-1)\sigma^2}.
    \end{align*}
    Substituting $N\tau = \bar{X}_\tau - \mu$, the $\tau$ factors cancel in the quadratic and linear terms, yielding:
    \begin{equation*}
        \log \mathrm{LR}_\tau = -\frac{1}{2}\log\left(\frac{n-1}{n}\right) - \frac{n}{2(n-1)\sigma^2}(\bar{X}_\tau - \mu)^2 + \frac{(z^* - \mu)n}{(n-1)\sigma^2}(\bar{X}_\tau - \mu) - \frac{(z^* - \mu)^2}{2(n-1)\sigma^2}.
    \end{equation*}
\end{proof}

\subsection{Proof of Lemma~\ref{lem:type1-sequential}}
\label{app:proof-type1-sequential}

\typeisequential*

\begin{proof}
    The test statistic has the quadratic form $\log \mathrm{LR}_\tau = c_0 + c_1 N + c_2 N^2$ where $c_2 < 0$.

    The condition $\log \mathrm{LR}_\tau \geq \gamma$ is equivalent to $c_2 N^2 + c_1 N + (c_0 - \gamma) \geq 0$. Since $c_2 < 0$, this downward-opening parabola is non-negative only when $N$ lies between its two roots (if they exist).

    The discriminant of the quadratic is:
    \begin{equation*}
        \Delta = c_1^2 - 4c_2(c_0 - \gamma) = \frac{\tau^2 n}{\sigma^2(n-1)}\left[\frac{(z^* - \mu)^2}{\sigma^2} - \log\left(\frac{n-1}{n}\right) - 2\gamma\right].
    \end{equation*}

    \textbf{Case 1:} If $\Delta < 0$, the parabola never reaches zero, so $\alpha = 0$. This occurs when $\gamma > \gamma_{\max}$.

    \textbf{Case 2:} For $\Delta \geq 0$, the roots are:
    \begin{equation*}
        N_{1,2} = \frac{z^* - \mu}{\tau} \mp \frac{\sigma\sqrt{n-1}}{\tau\sqrt{n}}\sqrt{\frac{(z^* - \mu)^2}{\sigma^2} - \log\left(\frac{n-1}{n}\right) - 2\gamma}.
    \end{equation*}
    Under $\mathbf{H_0}$, $N \sim \mathcal{N}(0, \sigma^2/(\tau^2 n))$. The Type I error is:
    \begin{equation*}
        \alpha(\gamma) = P_0(N_1 \leq N \leq N_2) = \Phi\left(\frac{N_2}{\sigma/(\tau\sqrt{n})}\right) - \Phi\left(\frac{N_1}{\sigma/(\tau\sqrt{n})}\right).
    \end{equation*}
    Substituting the expressions for $N_1$ and $N_2$ and simplifying (noting that $\tau$ cancels) yields the stated formula with $a = \frac{(z^* - \mu)\sqrt{n}}{\sigma}$ and $b(\gamma) = \sqrt{n-1}\sqrt{\frac{(z^* - \mu)^2}{\sigma^2} - \log\left(\frac{n-1}{n}\right) - 2\gamma}$.

    \textbf{Type II error.} The Type II error is $\beta(\gamma) = P_1(\log \mathrm{LR}_\tau < \gamma) = P_1(N < N_1) + P_1(N > N_2)$. Under $\mathbf{H_1^\tau}$, the batch mean is $\bar{X}_\tau \sim \mathcal{N}\left(\frac{(n-1)\mu + z^*}{n}, \frac{(n-1)\sigma^2}{n^2}\right)$, so
    \begin{equation*}
        N = \frac{1}{\tau}(\bar{X}_\tau - \mu) \sim \mathcal{N}\left(\frac{z^* - \mu}{\tau n}, \frac{(n-1)\sigma^2}{\tau^2 n^2}\right).
    \end{equation*}
    Let $\mu_1 = \frac{z^* - \mu}{\tau n}$ and $\sigma_1 = \frac{\sigma\sqrt{n-1}}{\tau n}$. Then
    \begin{equation*}
        \beta(\gamma) = \Phi\left(\frac{N_1 - \mu_1}{\sigma_1}\right) + \Phi\left(\frac{-(N_2 - \mu_1)}{\sigma_1}\right),
    \end{equation*}
    and substituting the expressions for $N_{1,2}$ gives
    \begin{align*}
        \frac{N_{1,2} - \mu_1}{\sigma_1} = \frac{(z^* - \mu)\sqrt{n-1}}{\sigma} \mp \sqrt{n}\sqrt{\frac{(z^* - \mu)^2}{\sigma^2} - \log\left(\frac{n-1}{n}\right) - 2\gamma},
    \end{align*}
    yielding the stated formula.
\end{proof}

\subsection{Proof of Lemma~\ref{lem:type1-single}}
\label{app:proof-type1-single}

\typeisingle*

\begin{proof}
    The proof follows the same structure as Lemma~\ref{lem:type1-sequential}, with $n$ replaced by $nT$ throughout.

    For the single-observation case, the test statistic is $\log \mathrm{LR} = c_0^{(T)} + c_1^{(T)} W + c_2^{(T)} W^2$ where $W = \hat{\mu}_T - \mu$ and:
    \begin{align*}
        c_0^{(T)} & = -\frac{1}{2}\log\left(\frac{nT-1}{nT}\right) - \frac{(z^* - \mu)^2}{2(nT-1)\sigma^2}, \\
        c_1^{(T)} & = \frac{nT(z^* - \mu)}{(nT-1)\sigma^2},                                                 \\
        c_2^{(T)} & = -\frac{nT}{2(nT-1)\sigma^2}.
    \end{align*}

    The discriminant is:
    \begin{equation*}
        \Delta = \frac{nT}{\sigma^2(nT-1)}\left[\frac{(z^* - \mu)^2}{\sigma^2} - \log\left(\frac{nT-1}{nT}\right) - 2\gamma\right].
    \end{equation*}
    For $\Delta \geq 0$, the roots are:
    \begin{equation*}
        W_{1,2} = \frac{z^* - \mu}{nT} \mp \frac{\sigma\sqrt{nT-1}}{nT}\sqrt{\frac{(z^* - \mu)^2}{\sigma^2} - \log\left(\frac{nT-1}{nT}\right) - 2\gamma}.
    \end{equation*}
    Under $\mathbf{H_0}$, $W \sim \mathcal{N}(0, \sigma^2/(nT))$, giving the stated Type I error.

    \textbf{Type II error.} Under $\mathbf{H_1^\tau}$, one sample among $nT$ is replaced by $z^*$, so
    \begin{equation*}
        W = \hat{\mu}_T - \mu \sim \mathcal{N}\left(\frac{z^* - \mu}{nT}, \frac{(nT-1)\sigma^2}{(nT)^2}\right).
    \end{equation*}
    The Type II error $\beta_{\mathrm{FO}}(\gamma) = P_1(W < W_1) + P_1(W > W_2)$ follows by the same calculation as the sequential case in Lemma~\ref{lem:type1-sequential}, with $n$ replaced by $nT$.
\end{proof}

\subsection{Extension to Multiple Insertion Times}
\label{app:multi-insertion}

The MI game of Section~\ref{sec:mi_tests} inserts the target at a single step $\tau$. The cancellation argument of Lemma~\ref{lem:lr-simplification} extends to a known set of insertion steps, at the cost of redefining the alternative hypothesis. Let $S \subseteq \{1, \ldots, T\}$ be a known set of insertion steps and let $\mathbf{H_1^S}$ denote the hypothesis under which a copy of the target $z^*$ replaces one sample of $D_t$ for each $t \in S$, while each $D_t$ with $t \notin S$ is drawn i.i.d.\ from $\cD$.

\begin{lemma}[Factorization under multiple insertions]
    \label{lem:multi-insertion}
    The likelihood ratio for $\mathbf{H_0}$ against $\mathbf{H_1^S}$ based on $(\hat\mu_1, \ldots, \hat\mu_T)$ satisfies
    \begin{equation*}
        \log \mathrm{LR}_S = \sum_{t \in S} \log \mathrm{LR}_t,
    \end{equation*}
    where $\log \mathrm{LR}_t$ is the single-step log-LR from Theorem~\ref{thm:log-lr}.
\end{lemma}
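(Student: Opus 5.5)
The argument mirrors the proof of Lemma~\ref{lem:lr-simplification}: factorize the joint likelihood by the chain rule, then compare the transitions one at a time. First, by the chain rule for conditional densities,
\begin{equation*}
    \mathrm{LR}_S = \prod_{t=1}^{T} \frac{p(\hat\mu_t \mid \hat\mu_1,\dots,\hat\mu_{t-1}, \mathbf{H_1^S})}{p(\hat\mu_t \mid \hat\mu_1,\dots,\hat\mu_{t-1}, \mathbf{H_0})}.
\end{equation*}
By the recursion $\hat\mu_t = \frac{t-1}{t}\hat\mu_{t-1} + \frac{1}{t}\bar X_t$, the map from $(\bar X_1,\dots,\bar X_T)$ to $(\hat\mu_1,\dots,\hat\mu_T)$ is a bijection with a triangular, nonsingular Jacobian. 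Moreover, $\bar X_t$ is independent of $(\bar X_1,\dots,\bar X_{t-1})$ under both hypotheses, because batches are drawn independently. Under $\mathbf{H_1^S}$ this still holds, since the insertion indices $J_t$ for different $t\in S$ are drawn independently and each one only affects its own batch. It follows that, under either hypothesis, the conditional law of $\hat\mu_t$ given the past equals the law of $\frac{t-1}{t}\hat\mu_{t-1} + \frac{1}{t}\bar X_t$ with $\bar X_t$ taking its marginal distribution.

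Second, I would classify each transition. For $t\notin S$, $\bar X_t \sim \cN(\mu,\sigma^2/n)$ under both hypotheses, so the factor equals $1$. This holds regardless of how many earlier insertions have already been absorbed into $\hat\mu_{t-1}$. For $t\in S$, under $\mathbf{H_1^S}$ we have $\bar X_t \sim \cN\bigl(\frac{(n-1)\mu+z^*}{n},\frac{(n-1)\sigma^2}{n^2}\bigr)$, while under $\mathbf{H_0}$ it is $\cN(\mu,\sigma^2/n)$. The pair of conditional Gaussians for $\hat\mu_t$ given $\hat\mu_{t-1}$ is therefore exactly the pair $(m_0,v_0)$, $(m_1,v_1)$ used in the proof of Theorem~\ref{thm:log-lr}, with $\tau$ replaced by $t$. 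Hence the factor at step $t$ equals $\mathrm{LR}_t$, the single-step ratio of Theorem~\ref{thm:log-lr} evaluated at $\bar X_t = t\hat\mu_t-(t-1)\hat\mu_{t-1}$.

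Taking logarithms of the product gives $\log\mathrm{LR}_S=\sum_{t\in S}\log\mathrm{LR}_t$. The main point requiring care is the factor at a step $t\in S$ that follows an earlier insertion: I must check that the conditional law of $\bar X_t$ under $\mathbf{H_1^S}$ does not depend on the previous batches. This reduces to the independence of the batch draws and of the indices $J_t$ across $t$, which I would state explicitly as part of the definition of $\mathbf{H_1^S}$.
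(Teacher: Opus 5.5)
Your proposal is correct and follows the same route as the paper. Both factorize the joint likelihood over transitions. For $t\notin S$, the conditional law of $\hat\mu_t$ given $\hat\mu_{t-1}$ is identical under both hypotheses, because earlier insertions are absorbed into $\hat\mu_{t-1}$, so those factors equal $1$. Each factor with $t\in S$ is the single-step ratio $\mathrm{LR}_t$.

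Your explicit justification of the Markov reduction, via the triangular bijection and the independence of the batch means, spells out a step the paper leaves implicit. The independence of the indices $J_t$ that you plan to add to the definition of $\mathbf{H_1^S}$ is not actually needed. $\bar X_t$ is invariant to which entry of $D_t$ is replaced, so the batch means are independent regardless of how the $J_t$ are coupled.
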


\begin{proof}
    The likelihood factorizes as $\prod_{t=1}^{T} p(\hat\mu_t \mid \hat\mu_{t-1}, H)$. For $t \notin S$, the conditional at step $t$ matches the $t \neq \tau$ argument of Lemma~\ref{lem:lr-simplification}: either no insertion has occurred by step $t$, or all past insertions are absorbed into $\hat\mu_{t-1}$. Those factors equal $1$. Each $t \in S$ contributes the single-step ratio $\mathrm{LR}_t$, and the logarithm is additive.
\end{proof}

\section{Multivariate Gaussian Extension}
\label{app:multivariate}

We extend the sequential membership inference approach to the multivariate Gaussian setting where datapoints lie in $\RR^d$. We consider $T$ batches of size $n$ drawn from $\cD = \cN(\bs\mu, \bs\Sigma)$ where $\bs\mu \in \RR^d$ and $\bs\Sigma \in \RR^{d \times d}$ is positive definite and known. The target $\mathbf{z}^* \in \RR^d$ and $\tau$ are known. We test $\mathbf{H_0}$ versus $\mathbf{H_1^\tau}$ as defined in \eqref{def:test_tau}.

\textbf{Notation.} Bold symbols denote vectors ($\mathbf{x} \in \RR^d$) and matrices ($\bs\Sigma \in \RR^{d \times d}$). The Mahalanobis distance is $\|\mathbf{x}\|_{\bs\Sigma} = \sqrt{\mathbf{x}^\top \bs\Sigma^{-1} \mathbf{x}}$.

\subsection{Conditional Distributions}

The recursive relation $\hat{\bs\mu}_t = \frac{t-1}{t}\hat{\bs\mu}_{t-1} + \frac{1}{t}\bar{\mathbf{X}}_t$ extends naturally.

\begin{lemma}[Multivariate Conditional Distributions]
    \label{lem:multivariate-conditional}
    The conditional distributions at time $\tau$ are:
    \begin{align*}
        \hat{\bs\mu}_\tau \mid \hat{\bs\mu}_{\tau-1}, \mathbf{H_0} & \sim \cN\left(\frac{\tau-1}{\tau}\hat{\bs\mu}_{\tau-1} + \frac{\bs\mu}{\tau}, \frac{\bs\Sigma}{\tau^2 n}\right),                              \\
        \hat{\bs\mu}_\tau \mid \hat{\bs\mu}_{\tau-1}, \mathbf{H_1^\tau} & \sim \cN\left(\frac{\tau-1}{\tau}\hat{\bs\mu}_{\tau-1} + \frac{(n-1)\bs\mu + \mathbf{z}^*}{\tau n}, \frac{(n-1)\bs\Sigma}{\tau^2 n^2}\right).
    \end{align*}
\end{lemma}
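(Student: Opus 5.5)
The approach is to reduce everything to the distribution of the batch mean $\bar{\mathbf{X}}_\tau$, using the recursion $\hat{\bs\mu}_\tau = \frac{\tau-1}{\tau}\hat{\bs\mu}_{\tau-1} + \frac{1}{\tau}\bar{\mathbf{X}}_\tau$. This recursion is an affine map of $\bar{\mathbf{X}}_\tau$ once $\hat{\bs\mu}_{\tau-1}$ is fixed. So it suffices to show two things: that $\bar{\mathbf{X}}_\tau$ is independent of $\hat{\bs\mu}_{\tau-1}$ under both hypotheses, and what its law is under each.

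\textbf{Independence.} The quantity $\hat{\bs\mu}_{\tau-1}$ is a measurable function of the batches $D_1,\dots,D_{\tau-1}$ only. Under both $\mathbf{H_0}$ and $\mathbf{H_1^\tau}$, these batches are drawn independently of $D_\tau$. Under $\mathbf{H_1^\tau}$, $D_\tau$ also involves the index $J$, but $J$ is independent of the earlier batches as well. Hence conditioning on $\hat{\bs\mu}_{\tau-1}$ leaves the law of $\bar{\mathbf{X}}_\tau$ unchanged. The conditional law of $\hat{\bs\mu}_\tau$ is then the pushforward of the law of $\bar{\mathbf{X}}_\tau$ under $\mathbf{x}\mapsto \frac{\tau-1}{\tau}\hat{\bs\mu}_{\tau-1} + \frac{1}{\tau}\mathbf{x}$. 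This pushforward is Gaussian with the mean shifted by $\frac{\tau-1}{\tau}\hat{\bs\mu}_{\tau-1}$, the mean scaled by $1/\tau$, and the covariance scaled by $1/\tau^2$.

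\textbf{Law of the batch mean.} Under $\mathbf{H_0}$, $\bar{\mathbf{X}}_\tau$ is the average of $n$ i.i.d. $\cN(\bs\mu,\bs\Sigma)$ vectors, so it is $\cN(\bs\mu, \bs\Sigma/n)$. Under $\mathbf{H_1^\tau}$, for each value $J=j$ we have $\bar{\mathbf{X}}_\tau = \frac{1}{n}\big(\mathbf{z}^* + \sum_{k\neq j} \mathbf{X}_{\tau,k}\big)$. This is the deterministic $\mathbf{z}^*/n$ plus $\frac1n$ times a sum of $n-1$ i.i.d. Gaussians. It is therefore distributed as $\cN\big(\frac{(n-1)\bs\mu + \mathbf{z}^*}{n}, \frac{(n-1)\bs\Sigma}{n^2}\big)$. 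This law does not depend on $j$, so averaging over $J\sim\cU([n])$ leaves it unchanged; the mixture over $J$ collapses by exchangeability.

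\textbf{Assembling.} Plugging these laws into the affine map gives the two stated conditionals. The mean under $\mathbf{H_1^\tau}$ is $\frac{\tau-1}{\tau}\hat{\bs\mu}_{\tau-1} + \frac{(n-1)\bs\mu+\mathbf{z}^*}{\tau n}$, with covariance $\frac{(n-1)\bs\Sigma}{\tau^2 n^2}$. The only delicate point is the rigorous justification of the independence and mixture-collapse steps, which I would handle exactly as described above. The rest is linear-Gaussian bookkeeping. The edge case $\tau=1$ is covered by the same argument with $\hat{\bs\mu}_0$ treated as a constant, since its coefficient vanishes.
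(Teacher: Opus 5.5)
Your proposal is correct and takes the same route as the paper: find the law of the batch mean $\bar{\mathbf{X}}_\tau$ under each hypothesis, then push it through the affine recursion $\hat{\bs\mu}_\tau = \frac{\tau-1}{\tau}\hat{\bs\mu}_{\tau-1} + \frac{1}{\tau}\bar{\mathbf{X}}_\tau$. You additionally spell out two steps the paper leaves implicit: the independence of $\bar{\mathbf{X}}_\tau$ from $\hat{\bs\mu}_{\tau-1}$, and the collapse of the mixture over the insertion index $J$.
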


\begin{proof}
    \textbf{Batch mean distributions.}
    Under $\mathbf{H_0}$, the batch mean is $\bar{\mathbf{X}}_\tau \sim \cN(\bs\mu, \bs\Sigma/n)$.

    Under $\mathbf{H_1^\tau}$, the batch contains $(n-1)$ samples from $\cD$ plus $\mathbf{z}^*$:
    \begin{equation*}
        \bar{\mathbf{X}}_\tau = \frac{1}{n}\left[(n-1)\bs\mu_{\text{sample}} + \mathbf{z}^*\right]
    \end{equation*}
    where $\bs\mu_{\text{sample}} \sim \cN(\bs\mu, \bs\Sigma/(n-1))$. The result follows.

    The recursion $\hat{\bs\mu}_\tau = \frac{\tau-1}{\tau}\hat{\bs\mu}_{\tau-1} + \frac{1}{\tau}\bar{\mathbf{X}}_\tau$ is an affine transformation, yielding the stated conditional distributions.
\end{proof}

\subsection{Log Likelihood Ratio}

\begin{theorem}[Multivariate Log Likelihood Ratio]
    \label{thm:multivariate-log-lr-app}
    Define $\mathbf{N_\tau} = \tau(\hat{\bs\mu}_\tau - \frac{\tau-1}{\tau}\hat{\bs\mu}_{\tau-1} - \frac{\bs\mu}{\tau}) = \bar{\mathbf{X}}_\tau - \bs\mu$. The log likelihood ratio is:
    \begin{align*}
        \log \mathrm{LR}_\tau & = -\frac{d}{2}\log\left(\frac{n-1}{n}\right) - \frac{n}{2(n-1)}\mathbf{N_\tau}^\top \bs\Sigma^{-1} \mathbf{N_\tau} \notag                                               \\
                              & \quad + \frac{n}{n-1}\mathbf{N_\tau}^\top \bs\Sigma^{-1}(\mathbf{z}^* - \bs\mu) - \frac{(\mathbf{z}^* - \bs\mu)^\top \bs\Sigma^{-1}(\mathbf{z}^* - \bs\mu)}{2(n-1)}.
    \end{align*}
\end{theorem}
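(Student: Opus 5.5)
The plan is to follow the univariate proof of Theorem~\ref{thm:log-lr} in matrix form. First, by Lemma~\ref{lem:lr-simplification} (whose cancellation argument carries over verbatim to $\RR^d$, since the recursion is affine and batches at $t\neq\tau$ are identically distributed under both hypotheses), the full likelihood ratio reduces to the ratio of the two conditional densities of $\hat{\bs\mu}_\tau$ given $\hat{\bs\mu}_{\tau-1}$. Lemma~\ref{lem:multivariate-conditional} gives these as $\cN(\mathbf{m}_0,\mathbf{V}_0)$ and $\cN(\mathbf{m}_1,\mathbf{V}_1)$ with $\mathbf{V}_0=\bs\Sigma/(\tau^2 n)$, $\mathbf{V}_1=\frac{n-1}{n}\mathbf{V}_0$, and $\mathbf{m}_1-\mathbf{m}_0=(\mathbf{z}^*-\bs\mu)/(\tau n)$.

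Next I will write the Gaussian log-density ratio with $\mathbf{u}\triangleq\hat{\bs\mu}_\tau-\mathbf{m}_0=\mathbf{N_\tau}/\tau$:
\begin{equation*}
\log\mathrm{LR}_\tau=-\tfrac12\log\frac{\det\mathbf{V}_1}{\det\mathbf{V}_0}+\tfrac12\mathbf{u}^\top\mathbf{V}_0^{-1}\mathbf{u}-\tfrac12(\mathbf{u}-\mathbf{\Delta})^\top\mathbf{V}_1^{-1}(\mathbf{u}-\mathbf{\Delta}),
\end{equation*}
where $\mathbf{\Delta}=\mathbf{m}_1-\mathbf{m}_0$. Since $\mathbf{V}_1$ is a scalar multiple of $\mathbf{V}_0$, the determinant ratio is $((n-1)/n)^d$, giving $-\frac d2\log\frac{n-1}{n}$. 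Expanding the last quadratic form yields three pieces. The quadratic term in $\mathbf{u}$ has coefficient matrix $\frac12(\mathbf{V}_0^{-1}-\mathbf{V}_1^{-1})=-\frac{\tau^2 n}{2(n-1)}\bs\Sigma^{-1}$. The cross term is $\mathbf{u}^\top\mathbf{V}_1^{-1}\mathbf{\Delta}=\frac{\tau n}{n-1}\mathbf{u}^\top\bs\Sigma^{-1}(\mathbf{z}^*-\bs\mu)$. The constant is $-\frac12\mathbf{\Delta}^\top\mathbf{V}_1^{-1}\mathbf{\Delta}=-\frac{m^*}{2(n-1)}$.

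Finally, substituting $\tau\mathbf{u}=\mathbf{N_\tau}$ removes every factor of $\tau$ and gives the stated expression. It also matches \eqref{eq:semistar}, since $-\frac{d}{2}\log\frac{n-1}{n}=\frac d2\log c(n)$ and $\frac{n}{n-1}=c(n)$.

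No step is a real obstacle; the only care needed is bookkeeping. I need the correct scalar factors $\tau^2 n$ versus $\tau^2 n^2/(n-1)$ in the inverse covariances, and the symmetry of $\bs\Sigma^{-1}$ so that the two cross terms combine into one. As a sanity check, the formula should reduce to Theorem~\ref{thm:log-lr} when $d=1$. (The main-text display's quadratic coefficient $\frac{1}{2(n-1)}$ should read $\frac{n}{2(n-1)}$; the computation above produces the latter.)
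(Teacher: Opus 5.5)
Your proposal is correct and follows the paper's proof essentially step for step. Both reduce to the two conditional Gaussians at $\tau$, use $\mathbf{V}_1=\frac{n-1}{n}\mathbf{V}_0$ for the determinant and the inverse, expand the quadratic form in $\mathbf{N_\tau}/\tau$, and let the factors of $\tau$ cancel; your coefficients check out. You are also right that the quadratic coefficient in the main-text display \eqref{eq:semistar} should be $\frac{n}{2(n-1)}$, which agrees with this appendix statement and with Theorem~\ref{thm:log-lr}.
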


\begin{proof}
    \textbf{Conditional parameters.}
    Let $\mathbf{m}_0, \mathbf{m}_1$ be the conditional means and $\mathbf{V}_0 = \frac{\bs\Sigma}{\tau^2 n}$, $\mathbf{V}_1 = \frac{(n-1)\bs\Sigma}{\tau^2 n^2}$ the conditional covariances.

    \textbf{Covariance ratio and mean difference.}
    We have $\mathbf{V}_1 = \frac{n-1}{n}\mathbf{V}_0$, so $\det(\mathbf{V}_1) = \left(\frac{n-1}{n}\right)^d \det(\mathbf{V}_0)$ and $\mathbf{V}_1^{-1} = \frac{n}{n-1}\mathbf{V}_0^{-1}$. The mean difference is:
    \begin{equation*}
        \mathbf{m}_1 - \mathbf{m}_0 = \frac{(n-1)\bs\mu + \mathbf{z}^*}{\tau n} - \frac{\bs\mu}{\tau} = \frac{\mathbf{z}^* - \bs\mu}{\tau n}.
    \end{equation*}

    \textbf{Log likelihood ratio.}
    The multivariate Gaussian log likelihood ratio is:
    \begin{equation*}
        \log \mathrm{LR}_\tau = -\frac{1}{2}\log\frac{\det(\mathbf{V}_1)}{\det(\mathbf{V}_0)} + \frac{1}{2}\mathbf{L_\tau}^\top \mathbf{V}_0^{-1}\mathbf{L_\tau} - \frac{1}{2}(\mathbf{L_\tau} - (\mathbf{m}_1 - \mathbf{m}_0))^\top \mathbf{V}_1^{-1}(\mathbf{L_\tau} - (\mathbf{m}_1 - \mathbf{m}_0))
    \end{equation*}
    where $\mathbf{L_\tau} = \hat{\bs\mu}_\tau - \mathbf{m}_0 = \frac{1}{\tau}\mathbf{N_\tau}$.
    The determinant term gives $-\frac{d}{2}\log\left(\frac{n-1}{n}\right)$. Expanding the quadratic terms with $\mathbf{V}_0^{-1} = \tau^2 n \bs\Sigma^{-1}$ and simplifying yields the stated result.
\end{proof}

\begin{corollary}[Quadratic Form]
    \label{cor:multivariate-quadratic-form}
    The log likelihood ratio has the form $\log \mathrm{LR}_\tau = c_0 + \mathbf{c}_1^\top \mathbf{N_\tau} + \mathbf{N_\tau}^\top \mathbf{C}_2 \mathbf{N_\tau}$ where:
    \begin{align*}
        c_0          & = -\frac{d}{2}\log\left(\frac{n-1}{n}\right) - \frac{m^*}{2(n-1)}, \\
        \mathbf{c}_1 & = \frac{n}{n-1}\bs\Sigma^{-1}(\mathbf{z}^* - \bs\mu),         \\
        \mathbf{C}_2 & = -\frac{n}{2(n-1)}\bs\Sigma^{-1},
    \end{align*}
    and $m^* = (\mathbf{z}^* - \bs\mu)^\top \bs\Sigma^{-1}(\mathbf{z}^* - \bs\mu)$ is the squared Mahalanobis distance.
\end{corollary}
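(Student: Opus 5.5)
The plan is to read off the three coefficients directly from the expanded multivariate log-LR of Theorem~\ref{thm:multivariate-log-lr-app}, which is already written as a polynomial of degree two in $\mathbf{N_\tau}$. No new probabilistic argument is needed. The only work is grouping terms by their order in $\mathbf{N_\tau}$ and checking that the coefficient expressions match the ones claimed.

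\textbf{Step 1: constant term.} Collect the terms of Theorem~\ref{thm:multivariate-log-lr-app} that do not involve $\mathbf{N_\tau}$. These are the determinant normalizer $-\frac{d}{2}\log\frac{n-1}{n}$ and the last term $-\frac{(\mathbf{z}^*-\bs\mu)^\top\bs\Sigma^{-1}(\mathbf{z}^*-\bs\mu)}{2(n-1)}$. By the definition of $m^*$, the latter equals $-\frac{m^*}{2(n-1)}$, which gives $c_0$.

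\textbf{Step 2: linear term.} The term $\frac{n}{n-1}\mathbf{N_\tau}^\top\bs\Sigma^{-1}(\mathbf{z}^*-\bs\mu)$ is a scalar, so it equals its transpose. Since $\bs\Sigma^{-1}$ is symmetric, it can be rewritten as $\mathbf{c}_1^\top\mathbf{N_\tau}$ with $\mathbf{c}_1=\frac{n}{n-1}\bs\Sigma^{-1}(\mathbf{z}^*-\bs\mu)$.

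\textbf{Step 3: quadratic term.} The remaining term is $-\frac{n}{2(n-1)}\mathbf{N_\tau}^\top\bs\Sigma^{-1}\mathbf{N_\tau}$, which gives $\mathbf{C}_2=-\frac{n}{2(n-1)}\bs\Sigma^{-1}$. I would remark that $\mathbf{C}_2$ is negative definite because $\bs\Sigma$ is positive definite, since later error analyses rely on this.

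\textbf{Main obstacle.} The only real check is that the expansion in Theorem~\ref{thm:multivariate-log-lr-app} is correct, in particular that the $\tau$ factors cancel. To verify this, substitute $\mathbf{L_\tau}=\mathbf{N_\tau}/\tau$, $\mathbf{V}_0^{-1}=\tau^2 n\bs\Sigma^{-1}$, $\mathbf{V}_1^{-1}=\tau^2\frac{n^2}{n-1}\bs\Sigma^{-1}$ and $\mathbf{m}_1-\mathbf{m}_0=(\mathbf{z}^*-\bs\mu)/(\tau n)$, and confirm the following:
\begin{itemize}
\item the quadratic coefficient is $\frac{n}{2}-\frac{n^2}{2(n-1)}=-\frac{n}{2(n-1)}$;
\item the cross term is $\frac{n}{n-1}$;
\item the constant is $\frac{1}{2(n-1)}$.
\end{itemize}
With that confirmed, the corollary follows immediately.
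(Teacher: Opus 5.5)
Your proposal is correct and matches the paper, which gives no separate proof and treats the corollary as an immediate regrouping of the terms of Theorem~\ref{thm:multivariate-log-lr-app} by their order in $\mathbf{N_\tau}$. Your extra verification of that expansion is also right: the quadratic coefficient is $-\frac{n}{2(n-1)}$ and the cross coefficient is $\frac{n}{n-1}$, and the constant enters with a minus sign, as $-\frac{m^*}{2(n-1)}$.
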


\subsection{Type I and II Error Analysis}

\begin{lemma}[Distributions of the Test Statistic]
    Under $\mathbf{H_0}$: $\mathbf{N_\tau} \sim \cN\left(\mathbf{0}, \frac{\bs\Sigma}{n}\right)$.

    Under $\mathbf{H_1^\tau}$: $\mathbf{N_\tau} \sim \cN\left(\frac{\mathbf{z}^* - \bs\mu}{n}, \frac{(n-1)\bs\Sigma}{n^2}\right)$.
\end{lemma}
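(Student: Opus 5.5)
The plan is to obtain both claims directly from the definition $\mathbf{N_\tau} = \bar{\mathbf{X}}_\tau - \bs\mu$, given in Theorem~\ref{thm:multivariate-log-lr-app}. By the recursion, $\mathbf{N_\tau} = \tau\hat{\bs\mu}_\tau - (\tau-1)\hat{\bs\mu}_{\tau-1} - \bs\mu$ is exactly this quantity. So its law is determined by the law of the batch mean $\bar{\mathbf{X}}_\tau$ alone, and no conditioning on earlier snapshots is needed. Under both hypotheses the batch $D_\tau$ is independent of the other batches. I would therefore just compute the marginal law of $\bar{\mathbf{X}}_\tau$ under each hypothesis and shift it by $-\bs\mu$.

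\textbf{Under $\mathbf{H_0}$.} The $n$ entries of $D_\tau$ are i.i.d.\ $\cN(\bs\mu,\bs\Sigma)$. Their average is Gaussian, being an affine image of a jointly Gaussian vector, with mean $\bs\mu$ and covariance $\frac{1}{n^2}\cdot n\bs\Sigma = \bs\Sigma/n$. Subtracting $\bs\mu$ gives $\mathbf{N_\tau}\sim\cN(\mathbf{0},\bs\Sigma/n)$.

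\textbf{Under $\mathbf{H_1^\tau}$.} The batch consists of the fixed vector $\mathbf{z}^*$ at position $J$ and $n-1$ i.i.d.\ $\cN(\bs\mu,\bs\Sigma)$ entries elsewhere. The one point needing care is the randomness of $J\sim\cU([n])$. The batch mean is a symmetric function of the entries, so its value does not depend on which position is replaced. Hence, conditionally on $J=j$, the law is the same for every $j$, and it therefore equals the marginal law. Conditionally on $J$, we have $\bar{\mathbf{X}}_\tau = \frac{1}{n}\big(\mathbf{z}^* + \sum_{k\neq J} \mathbf{X}_{\tau,k}\big)$. This is Gaussian with mean $\frac{(n-1)\bs\mu + \mathbf{z}^*}{n}$ and covariance $\frac{(n-1)\bs\Sigma}{n^2}$, exactly as in the proof of Lemma~\ref{lem:multivariate-conditional}. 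Subtracting $\bs\mu$ turns the mean into $\frac{(n-1)\bs\mu+\mathbf{z}^*-n\bs\mu}{n} = \frac{\mathbf{z}^*-\bs\mu}{n}$, which is the claim.

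There is no real obstacle here; the only subtlety is the random index $J$, handled by exchangeability as above. As a cross-check, the statement also follows from Lemma~\ref{lem:multivariate-conditional}. Since $\mathbf{N_\tau} = \tau(\hat{\bs\mu}_\tau - \mathbf{m}_0)$, the conditional law given $\hat{\bs\mu}_{\tau-1}$ is obtained by scaling the conditional covariances by $\tau^2$ and the mean shift $\mathbf{m}_1-\mathbf{m}_0$ by $\tau$. This conditional law does not depend on $\hat{\bs\mu}_{\tau-1}$, so it coincides with the unconditional law.
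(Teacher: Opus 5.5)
Your proof is correct and takes the same route as the paper. The paper states this lemma without a separate proof, but it is exactly the batch-mean computation in the proof of Lemma~\ref{lem:multivariate-conditional} shifted by $-\bs\mu$, and your cross-check via $\mathbf{N_\tau}=\tau(\hat{\bs\mu}_\tau-\mathbf{m}_0)$ is the same identity used in Theorem~\ref{thm:multivariate-log-lr-app}. One wording fix: the \emph{value} of $\bar{\mathbf{X}}_\tau$ does depend on which index $J$ is overwritten. What is invariant in $j$ is its conditional \emph{law}, because the original entries are i.i.d.\ and independent of $J$, and that is all your argument needs.
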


\begin{lemma}[Type I Error Analysis - Diagonal Case]
    For the log likelihood ratio test with threshold $\gamma$ in the multivariate Gaussian case with diagonal covariance $\bs\Sigma = \mathrm{diag}(\sigma_1^2, \ldots, \sigma_d^2)$, let:
    \begin{equation*}
        \gamma_{\max} = \frac{1}{2}\left[m^* - d\log\left(\frac{n-1}{n}\right)\right]
    \end{equation*}
    where $m^* = \sum_{j=1}^d \frac{(z^*_j - \mu_j)^2}{\sigma_j^2}$ is the squared Mahalanobis distance. For $\gamma \leq \gamma_{\max}$, the Type I error is:
    \begin{equation*}
        \alpha(\gamma) = \PP_0\left(\|\mathbf{U}\|_{\bs\Sigma^{-1}}^2 \leq R^2(\gamma)\right)
    \end{equation*}
    where $\mathbf{U} \sim \mathcal{N}\left(-(\mathbf{z}^* - \bs\mu), \frac{\bs\Sigma}{n}\right)$ and $R^2(\gamma) = \frac{2(n-1)}{n} \left[\frac{n^2 m^*}{2} - \gamma - \frac{d}{2}\log\left(\frac{n-1}{n}\right)\right]$.
\end{lemma}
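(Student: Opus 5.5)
The plan is to reduce the rejection region $\{\log \mathrm{LR}_\tau \geq \gamma\}$ to a Mahalanobis ball by completing the square in the quadratic form of Corollary~\ref{cor:multivariate-quadratic-form}, and then push forward the $\mathbf{H_0}$ law of $\mathbf{N_\tau}$ from the preceding lemma.

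\textbf{Completing the square.} Write $\log \mathrm{LR}_\tau = c_0 + \mathbf{c}_1^\top \mathbf{N_\tau} + \mathbf{N_\tau}^\top \mathbf{C}_2 \mathbf{N_\tau}$. The key observation is $\mathbf{c}_1 = -2\mathbf{C}_2(\mathbf{z}^*-\bs\mu)$, which gives
\begin{equation*}
\log \mathrm{LR}_\tau = -\frac{n}{2(n-1)}\,(\mathbf{N_\tau}-(\mathbf{z}^*-\bs\mu))^\top \bs\Sigma^{-1}(\mathbf{N_\tau}-(\mathbf{z}^*-\bs\mu)) + \frac{n\,m^*}{2(n-1)} + c_0 .
\end{equation*}
Substituting $c_0$, the constant collapses to $\frac{m^*}{2} - \frac{d}{2}\log\frac{n-1}{n}$, since $\frac{n m^*}{2(n-1)} - \frac{m^*}{2(n-1)} = \frac{m^*}{2}$. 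Hence $\log\mathrm{LR}_\tau \geq \gamma$ holds iff the squared Mahalanobis norm of $\mathbf{N_\tau}-(\mathbf{z}^*-\bs\mu)$ is at most
\begin{equation*}
\frac{2(n-1)}{n}\Bigl[\frac{m^*}{2} - \gamma - \frac{d}{2}\log\frac{n-1}{n}\Bigr].
\end{equation*}
This radius is nonnegative exactly when $\gamma \leq \gamma_{\max}$. For $\gamma>\gamma_{\max}$ the region is empty and $\alpha=0$, mirroring case (a) of Lemma~\ref{lem:type1-sequential}.

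\textbf{Distribution under $\mathbf{H_0}$.} By the distribution lemma, $\mathbf{N_\tau}\sim\cN(\mathbf 0,\bs\Sigma/n)$ under $\mathbf{H_0}$. Therefore $\mathbf U \triangleq \mathbf{N_\tau}-(\mathbf{z}^*-\bs\mu)\sim\cN(-(\mathbf{z}^*-\bs\mu),\bs\Sigma/n)$, and $\alpha(\gamma)=\PP_0(\mathbf U^\top\bs\Sigma^{-1}\mathbf U\leq R^2(\gamma))$.

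\textbf{Reconciling with the statement.} I expect the main obstacle to be matching this with the displayed $R^2(\gamma)$. Two conventions need checking. First, the notation $\|\cdot\|_{\bs\Sigma^{-1}}$ must be read as the $\bs\Sigma^{-1}$-weighted norm $\mathbf U^\top\bs\Sigma^{-1}\mathbf U$. Second, the factor multiplying $m^*$ inside $R^2$ depends on how $\mathbf{N_\tau}$ and $m^*$ are scaled. The statement's $n^2 m^*/2$ would arise if the target shift were written in the rescaled variable $n\mathbf{N_\tau}$, or from the $1/\tau$-scaled $\mathbf{L_\tau}$ bookkeeping. I would redo the square completion in whichever normalization the statement uses, i.e.\ in $\mathbf{L_\tau}$ or $\mathbf{N_\tau}$ with the means of Lemma~\ref{lem:multivariate-conditional}, and verify that the constants agree.

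\textbf{Diagonal case.} When $\bs\Sigma$ is diagonal, $\mathbf U^\top\bs\Sigma^{-1}\mathbf U=\frac1n\sum_{j}\bigl(\sqrt n\,U_j/\sigma_j\bigr)^2$ is a sum of independent unit-variance squared Gaussians. Hence $n\,\mathbf U^\top\bs\Sigma^{-1}\mathbf U$ is noncentral $\chi^2_d$ with noncentrality $n m^*$. This gives $\alpha(\gamma)$ as a noncentral chi-square CDF evaluated at $nR^2(\gamma)$, and for $d=1$ it reduces to the $\Phi$-difference of Lemma~\ref{lem:type1-sequential}, which serves as a consistency check.
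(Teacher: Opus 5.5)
Your completion of the square is the same route as the paper's proof: it also sets $\mathbf U=\mathbf{N_\tau}-(\mathbf z^*-\bs\mu)$, completes the square with $\mathbf C_2$, and pushes forward $\mathbf{N_\tau}\sim\cN(\mathbf 0,\bs\Sigma/n)$. Your algebra is correct and gives $\log\mathrm{LR}_\tau=\gamma_{\max}-\frac{n}{2(n-1)}\mathbf U^\top\bs\Sigma^{-1}\mathbf U$.

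The gap is your ``reconciling'' step. It cannot be carried out, because the displayed $R^2(\gamma)$ is wrong, not merely normalized differently. The quantity $\gamma_{\max}=\frac{m^*}{2}-\frac d2\log\frac{n-1}{n}$ is the maximum value of $\log\mathrm{LR}_\tau$, which does not depend on how you parametrize. So any correct radius must vanish exactly at $\gamma=\gamma_{\max}$, whereas the stated one vanishes at $\frac{n^2m^*}{2}-\frac d2\log\frac{n-1}{n}$. The statement also fixes both the law of $\mathbf U$ and the definition of $m^*$, so you have no normalization freedom left. Passing to $n\mathbf{N_\tau}$ or to $\mathbf L_\tau=\mathbf{N_\tau}/\tau$ changes the law of $\mathbf U$ and scales every term of the radius by the same factor. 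It cannot put an $n^2$ on the $m^*$ term alone. That factor would appear only if $m^*$ meant the Mahalanobis norm of the $\mathbf{H_1^\tau}$ mean shift $(\mathbf z^*-\bs\mu)/n$ of $\mathbf{N_\tau}$, which contradicts the statement's own definition.

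The paper's proof makes exactly this arithmetic slip when it simplifies $\gamma-c_0+\frac14\mathbf c_1^\top\mathbf C_2^{-1}\mathbf c_1$. In fact $\frac14\mathbf c_1^\top\mathbf C_2^{-1}\mathbf c_1=-\frac{n m^*}{2(n-1)}$, and $-c_0$ contributes $+\frac{m^*}{2(n-1)}$. The sum is $\gamma+\frac d2\log\frac{n-1}{n}-\frac{m^*}{2}$, not $-\frac{n^2m^*}{2}$.

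Two checks you already had in hand confirm this. First, only with $\frac{m^*}{2}$ does $R^2(\gamma)\ge 0$ coincide with $\gamma\le\gamma_{\max}$. Second, for $d=1$ the stated radius gives half-width $\sqrt{(n-1)(n^2m^*+\log c(n)-2\gamma)}$ instead of the $b(\gamma)$ of Lemma~\ref{lem:type1-sequential}; your radius reproduces $b(\gamma)$ exactly. For $n\ge 2$, $m^*>0$ and $\gamma\le\gamma_{\max}$, the stated squared radius exceeds the true one by $(n-1)(n^2-1)m^*/n>0$. Since $\mathbf U$ is nondegenerate Gaussian, the displayed probability is then strictly larger than $\alpha(\gamma)$.

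So rather than promising to ``verify that the constants agree'', you should conclude two things. The lemma holds with $R^2(\gamma)=\frac{2(n-1)}{n}\bigl[\frac{m^*}{2}-\gamma-\frac d2\log\frac{n-1}{n}\bigr]$, and the statement as displayed is false.
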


\begin{proof}
    With diagonal covariance, the log likelihood ratio sums over dimensions: $\log \mathrm{LR}_\tau = \sum_{j=1}^d \log \mathrm{LR}_{\tau,j}$.
    The sum can be written as $\log \mathrm{LR}_\tau = \mathbf{N_\tau}^\top \mathbf{C}_2 \mathbf{N_\tau} + \mathbf{c}_1^\top \mathbf{N_\tau} + c_0$ with diagonal $\mathbf{C}_2$.

    Substituting the coefficients from Corollary~\ref{cor:multivariate-quadratic-form} (adapted for diagonal $\bs\Sigma$), we complete the square. Let $\mathbf{M_\tau} = \mathbf{N_\tau} - (\mathbf{z}^* - \bs\mu)$. Under $\mathbf{H_0}$, $\mathbf{N_\tau} \sim \mathcal{N}(\mathbf{0}, \bs\Sigma/n)$, so $\mathbf{M_\tau} \sim \mathcal{N}(-(\mathbf{z}^* - \bs\mu), \bs\Sigma/n)$.
    
    The condition $\log \mathrm{LR}_\tau \geq \gamma$ transforms to:
    \begin{equation*}
        \mathbf{M_\tau}^\top \mathbf{C}_2 \mathbf{M_\tau} \geq \gamma - c_0 + \frac{1}{4}\mathbf{c}_1^\top \mathbf{C}_2^{-1} \mathbf{c}_1.
    \end{equation*}
    Substituting the specific values for $c_0, \mathbf{c}_1, \mathbf{C}_2$, the RHS simplifies to $\gamma + \frac{d}{2}\log\left(\frac{n-1}{n}\right) - \frac{n^2 m^*}{2}$.
    
    Since $\mathbf{C}_2 = -\frac{n}{2(n-1)}\bs\Sigma^{-1}$ is negative definite, the inequality reverses:
    \begin{equation*}
        \mathbf{M_\tau}^\top \bs\Sigma^{-1} \mathbf{M_\tau} \leq -\frac{2(n-1)}{n}\left[\gamma + \frac{d}{2}\log\left(\frac{n-1}{n}\right) - \frac{n^2 m^*}{2}\right].
    \end{equation*}
    Letting $\mathbf{U} = \mathbf{M_\tau}$, the result follows. The probability is the mass of a shifted Gaussian within an ellipsoid defined by the Mahalanobis distance.
\end{proof}

\begin{corollary}[Explicit Type I Error]
    \label{cor:multivariate-type1-explicit}
    For diagonal covariance $\bs\Sigma = \mathrm{diag}(\sigma_1^2, \ldots, \sigma_d^2)$, the Type I error is:
    \begin{equation*}
        \alpha(\gamma) = F_{\chi^2_d(n m^*)}\left(n \, R^2(\gamma)\right)
    \end{equation*}
    where $F_{\chi^2_d(\lambda)}$ is the CDF of the non-central chi-squared distribution with $d$ degrees of freedom and non-centrality parameter $\lambda = n m^*$, and $R^2(\gamma)$ is defined as in Lemma~\ref{lem:type1-sequential}.
\end{corollary}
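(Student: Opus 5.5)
The plan is to read the corollary directly off the preceding diagonal-case Type I lemma, which already writes $\alpha(\gamma)$ as the probability that a shifted Gaussian falls inside a Mahalanobis ellipsoid. Only one step remains: whitening that Gaussian so the ellipsoid becomes a Euclidean ball, and then recognising the law of the squared norm. Throughout, $R^2(\gamma)$ is the radius appearing in the statement, which is the quantity bounding the quadratic form in the Type I characterization. We work in the range $\gamma\le\gamma_{\max}$, where this radius is nonnegative; above $\gamma_{\max}$ the event is empty and $\alpha=0$, as in part (a) of \Cref{lem:type1-sequential}.

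\textbf{Step 1: reduce to a quadratic form.} From the preceding lemma,
\[
\alpha(\gamma)=\PP_0\bigl(\mathbf U^\top\bs\Sigma^{-1}\mathbf U\le R^2(\gamma)\bigr),
\qquad
\mathbf U\sim\cN\bigl(-(\mathbf z^*-\bs\mu),\,\bs\Sigma/n\bigr).
\]
This follows from completing the square in the quadratic form of \Cref{cor:multivariate-quadratic-form}, together with the null law $\mathbf N_\tau\sim\cN(\mathbf 0,\bs\Sigma/n)$.

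\textbf{Step 2: whiten.} Set $\mathbf W=\sqrt n\,\bs\Sigma^{-1/2}\mathbf U$, using the symmetric square root; for diagonal $\bs\Sigma$ this is coordinatewise scaling by $\sqrt n/\sigma_j$. Then
\[
\mathbf W\sim\cN\bigl(-\sqrt n\,\bs\Sigma^{-1/2}(\mathbf z^*-\bs\mu),\,I_d\bigr),
\qquad
\|\mathbf W\|^2=n\,\mathbf U^\top\bs\Sigma^{-1}\mathbf U .
\]
Hence the event of Step 1 is exactly $\{\|\mathbf W\|^2\le n R^2(\gamma)\}$.

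\textbf{Step 3: identify the distribution.} A Gaussian vector with identity covariance and mean $\mathbf v$ has squared norm distributed as $\chi^2_d(\|\mathbf v\|^2)$. Here
\[
\|\mathbf v\|^2=n(\mathbf z^*-\bs\mu)^\top\bs\Sigma^{-1}(\mathbf z^*-\bs\mu)=n m^*,
\]
so $\alpha(\gamma)=F_{\chi^2_d(nm^*)}\bigl(nR^2(\gamma)\bigr)$. The sign of the mean is irrelevant, because only its norm enters the noncentrality parameter.

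\textbf{Main obstacle.} The probabilistic content is elementary. The step that needs care is the bookkeeping of the scaling: the factor $n$ from the covariance $\bs\Sigma/n$ must land on the threshold, giving $nR^2$, and on the noncentrality, giving $nm^*$, and must not be absorbed into $R^2$ twice.

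\textbf{Sanity check.} As a check on the constants, specialise to $d=1$. There $\PP(\|\mathbf W\|^2\le s)=\PP(|Z+a|\le\sqrt s)$ with $Z\sim\cN(0,1)$ and $a=\sqrt{nm^*}$, which equals $\Phi(\sqrt s-a)-\Phi(-\sqrt s-a)$. This should reproduce the two-$\Phi$ expression of \Cref{lem:type1-sequential}(b), and the comparison pins down the threshold $nR^2(\gamma)$ consistently with that lemma's quantities $a$ and $b(\gamma)$.
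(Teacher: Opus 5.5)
Your proposal is correct and follows the paper's argument: the paper standardizes coordinatewise, $W_j=\sqrt n\,U_j/\sigma_j$, and identifies $\sum_j W_j^2=n\|\mathbf U\|^2_{\bs\Sigma^{-1}}$ as $\chi^2_d(nm^*)$ evaluated at $nR^2(\gamma)$, which is your Steps 2--3; using the symmetric square root only shows that the diagonal assumption is not needed. Your $d=1$ check is worth carrying out, because it forces $nR^2(\gamma)=b(\gamma)^2$, i.e.\ $R^2(\gamma)=\frac{n-1}{n}\bigl[m^*-2\gamma-d\log\frac{n-1}{n}\bigr]$, which is also what completing the square gives and what is consistent with $\gamma_{\max}$, so the factor $n^2$ in front of $m^*$ in the paper's diagonal-case lemma is a typo of exactly the scaling-bookkeeping kind you flagged.
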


\begin{proof}
    For diagonal $\bs\Sigma$, the squared Mahalanobis norm decomposes as $\|\mathbf{U}\|_{\bs\Sigma^{-1}}^2 = \sum_{j=1}^d U_j^2/\sigma_j^2$. Each component $U_j \sim \cN(-(z_j^* - \mu_j), \sigma_j^2/n)$, so the standardized variable $W_j = \sqrt{n} U_j/\sigma_j \sim \cN(-\sqrt{n}(z_j^* - \mu_j)/\sigma_j, 1)$.
    
    The sum $\sum_{j=1}^d W_j^2 = n \|\mathbf{U}\|_{\bs\Sigma^{-1}}^2$ follows a non-central chi-squared distribution with $d$ degrees of freedom and non-centrality parameter $\lambda = \sum_{j=1}^d n(z_j^* - \mu_j)^2/\sigma_j^2 = n m^*$.
\end{proof}

\section{Robustness to Distribution Shift}
\label{app:shift-robustness-proof}

We consider the setting where each batch may be drawn from a different distribution: $D_t \sim \cN(\mu_t, \sigma_t^2)$ for $t = 1, \ldots, T$. The target $z^* \in \RR$ and the insertion time $\tau$ are fixed. We test $\mathbf{H_0}$ versus $\mathbf{H_1^\tau}$.

\textbf{Notation.} Parameters $(\mu_t, \sigma_t^2)$ may vary with $t$, but only $(\mu_\tau, \sigma_\tau^2)$ at the insertion time affect the test.

\subsection{Main Result}
\label{app:proof-shift-robustness}

\begin{theorem}[Robustness to Distribution Shift]
    \label{thm:shift-robustness}
    The likelihood ratio test for $\mathbf{H_0}: z^* \notin D_\tau$ versus $\mathbf{H_1^\tau}: z^* \in D_\tau$ depends only on $(\mu_\tau, \sigma_\tau^2)$. The distributions of batches $D_t$ for $t \neq \tau$ do not affect the test.
\end{theorem}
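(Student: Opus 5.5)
We prove Theorem~\ref{thm:shift-robustness} by redoing the isolation argument of Lemma~\ref{lem:lr-simplification} with time-varying parameters, and then recomputing the single surviving factor as in Theorem~\ref{thm:log-lr}.

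\textbf{Step 1: factorize the likelihood.} Write the joint density of $(\hat\mu_1,\dots,\hat\mu_T)$ under either hypothesis by the chain rule. The recursion $\hat\mu_t = \frac{t-1}{t}\hat\mu_{t-1} + \frac{1}{t}\bar X_t$ is an invertible affine map between $(\hat\mu_1,\dots,\hat\mu_T)$ and $(\bar X_1,\dots,\bar X_T)$. The batch means are independent under both hypotheses, since the batches are drawn independently and the insertion only modifies $D_\tau$. Hence $p(\hat\mu_t\mid \hat\mu_{1:t-1},H) = p(\hat\mu_t \mid \hat\mu_{t-1},H)$, and this is the density of $\bar X_t$ evaluated at $t\hat\mu_t-(t-1)\hat\mu_{t-1}$, times the Jacobian $t$. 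The Jacobian is common to both hypotheses and cancels in the ratio.

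\textbf{Step 2: cancel the factors with $t\neq\tau$.} For $t\neq\tau$, $\bar X_t\sim\cN(\mu_t,\sigma_t^2/n)$ under both $\mathbf{H_0}$ and $\mathbf{H_1^\tau}$. These conditional factors are therefore identical and cancel, whatever the values of $(\mu_t,\sigma_t^2)$. This is exactly where the shift is neutralized: the parameters at other steps enter numerator and denominator identically. We conclude that $\mathrm{LR}_\tau$ equals the ratio of the densities of $\bar X_\tau$ under the two hypotheses, evaluated at the recovered value $\tau\hat\mu_\tau-(\tau-1)\hat\mu_{\tau-1}$.

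\textbf{Step 3: compute the surviving factor.} Under $\mathbf{H_0}$, $\bar X_\tau\sim\cN(\mu_\tau,\sigma_\tau^2/n)$. Under $\mathbf{H_1^\tau}$, $\bar X_\tau\sim\cN\bigl(\frac{(n-1)\mu_\tau+z^*}{n},\frac{(n-1)\sigma_\tau^2}{n^2}\bigr)$. Repeating the Gaussian computation of Theorem~\ref{thm:log-lr} with $(\mu,\sigma^2)$ replaced by $(\mu_\tau,\sigma_\tau^2)$ gives the same closed form, which involves only $\mu_\tau$, $\sigma_\tau^2$, $n$, $z^*$ and the observed $\bar X_\tau$. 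Thus the test statistic, and hence the Neyman--Pearson test, depends only on $(\mu_\tau,\sigma_\tau^2)$.

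\textbf{Step 4: the errors are unchanged.} Under each hypothesis, the law of $\bar X_\tau$ depends only on $(\mu_\tau,\sigma_\tau^2)$. The Type I and II errors are then those of Lemma~\ref{lem:type1-sequential} with $m^*=(z^*-\mu_\tau)^2/\sigma_\tau^2$, so the power is also unaffected by the shifts at other steps.

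The main point requiring care is Step 1, namely the Markov reduction of the conditional density. This should be handled by the change of variables to the independent batch means rather than by appealing informally to ``absorption'' into $\hat\mu_{t-1}$. Once that is stated cleanly, the remaining steps are routine.
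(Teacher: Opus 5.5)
Your proposal is correct and follows the paper's proof: a chain-rule factorization of the likelihood, cancellation of every factor with $t\neq\tau$ because $\bar X_t\sim\cN(\mu_t,\sigma_t^2/n)$ under both hypotheses, and the explicit Gaussian conditionals at $t=\tau$, which involve only $(\mu_\tau,\sigma_\tau^2)$. Your change of variables to the independent batch means (with the common Jacobian $t$ cancelling) makes the Markov reduction more rigorous than the paper's informal ``absorption'' wording. Your Step 4 adds the error and power consequence that the paper states only in the main text; neither changes the argument.
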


\begin{proof}
    \textbf{Likelihood ratio decomposition.}
    The cumulative empirical mean satisfies $\hat{\mu}_t = \frac{t-1}{t}\hat{\mu}_{t-1} + \frac{1}{t}\bar{X}_t$. The likelihood ratio decomposes as:
    \begin{equation*}
        \mathrm{LR} = \prod_{t=1}^{T} \frac{p(\hat{\mu}_t \mid \hat{\mu}_{t-1}, \mathbf{H_1^\tau})}{p(\hat{\mu}_t \mid \hat{\mu}_{t-1}, \mathbf{H_0})}.
    \end{equation*}

    \textbf{Terms for $t \neq \tau$ cancel.}
    For $t < \tau$, the target has not been inserted. Under both hypotheses, batch $D_t$ is drawn from $\mathcal{N}(\mu_t, \sigma_t^2)$. The conditional distributions of $\hat{\mu}_t$ given $\hat{\mu}_{t-1}$ are identical under both hypotheses, so the corresponding factor equals 1.

    For $t > \tau$, batch $D_t$ contains no target under either hypothesis. Although the effect of $z^*$ is present in $\hat{\mu}_{t-1}$ under $\mathbf{H_1^\tau}$, conditioning on $\hat{\mu}_{t-1}$ absorbs this effect. The batch $\bar{X}_t$ is drawn from $\mathcal{N}(\mu_t, \sigma_t^2/n)$ under both hypotheses, so the conditional distributions of $\hat{\mu}_t$ given $\hat{\mu}_{t-1}$ are identical. These factors also equal 1.

    \textbf{The transition at $t = \tau$.}
    The only factor that differs is at $t = \tau$, where the conditional distributions are:
    \begin{align*}
        \hat{\mu}_\tau \mid \hat{\mu}_{\tau-1}, \mathbf{H_0}      & \sim \cN\left(\frac{\tau-1}{\tau}\hat{\mu}_{\tau-1} + \frac{\mu_\tau}{\tau}, \frac{\sigma_\tau^2}{\tau^2 n}\right),                     \\
        \hat{\mu}_\tau \mid \hat{\mu}_{\tau-1}, \mathbf{H_1^\tau} & \sim \cN\left(\frac{\tau-1}{\tau}\hat{\mu}_{\tau-1} + \frac{(n-1)\mu_\tau + z^*}{\tau n}, \frac{(n-1)\sigma_\tau^2}{\tau^2 n^2}\right).
    \end{align*}
    These expressions involve only $(\mu_\tau, \sigma_\tau^2)$. The parameters $(\mu_t, \sigma_t^2)$ for $t \neq \tau$ do not appear in the likelihood ratio.
\end{proof}

\section{$\SeMIunif$}
\label{app:uniform-prior-proofs}

We derive the optimal test for $\mathbf{H_0}$ against $\widetilde{\mathbf{H_1}}$ where $\tau \sim \mathrm{Uniform}(\{1, \ldots, T\})$. As before, $T$ batches of size $n$ are drawn from $\cD = \cN(\mu, \sigma^2)$ with known parameters.

\textbf{Notation.} We write $\mathrm{LR}_t$ for the likelihood ratio at time $t$ (derived in Theorem~\ref{thm:log-lr}) and $\log \mathrm{LR}_t$ for its logarithm. The batch means $\bar{X}_1, \ldots, \bar{X}_T$ are independent under both hypotheses.

\subsection{Derivation of the Likelihood Ratio}
\label{app:uniform-lr-derivation}

\begin{theorem}[$\SeMIunif$ Likelihood Ratio]
    \label{thm:uniform-lr}
    Under $\widetilde{\mathbf{H_1}}$ with $\tau \sim \mathrm{Uniform}(\{1, \ldots, T\})$, the likelihood ratio between $\widetilde{\mathbf{H_1}}$ and $\mathbf{H_0}$ is
    \begin{equation*}
        \mathrm{LR}_{\mathrm{Unif}} = \frac{1}{T} \sum_{t=1}^{T} \mathrm{LR}_t.
    \end{equation*}
\end{theorem}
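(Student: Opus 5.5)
The plan is to write the density of the observations under the mixture alternative $\widetilde{\mathbf{H_1}}$ via the law of total probability over the latent insertion time, and then divide by the null density term by term.

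By definition of $\widetilde{\cP_1}$, we first draw $\tau \sim \cU(\{1,\dots,T\})$ independently of everything else, and then draw $D \sim \cP_1^{\tau}$. The observations $o_{1:T} = (\hat\mu_1,\dots,\hat\mu_T)$ are a deterministic function of $D$. Their density under $\widetilde{\mathbf{H_1}}$ is therefore the finite mixture
\begin{equation*}
    p(\hat\mu_1,\dots,\hat\mu_T \mid \widetilde{\mathbf{H_1}}) = \frac{1}{T}\sum_{t=1}^{T} p(\hat\mu_1,\dots,\hat\mu_T \mid \mathbf{H_1^t}).
\end{equation*}
This follows by conditioning on $\tau = t$ and using that the prior weight of each $t$ is $1/T$.

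Next, divide both sides by $p(\hat\mu_1,\dots,\hat\mu_T \mid \mathbf{H_0})$. This density is strictly positive everywhere, since under $\mathbf{H_0}$ the vector $(\hat\mu_1,\dots,\hat\mu_T)$ is an invertible linear image of the independent Gaussian batch means $\bar X_1,\dots,\bar X_T$, so it is a nondegenerate Gaussian. Linearity of the sum then gives
\begin{equation*}
    \mathrm{LR}_{\mathrm{Unif}} = \frac{1}{T}\sum_{t=1}^T \frac{p(\hat\mu_{1:T}\mid \mathbf{H_1^t})}{p(\hat\mu_{1:T}\mid \mathbf{H_0})}.
\end{equation*}

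Finally, apply Lemma~\ref{lem:lr-simplification} with $\tau = t$ to each summand. It shows that each ratio equals the single-transition ratio at $t$, which is exactly $\mathrm{LR}_t$ as computed in Theorem~\ref{thm:log-lr}. This yields the claim.

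The only point needing care is the first step: justifying that the mixture density is the average of the component densities. This requires that every $\mathbf{H_1^t}$ induces a density on the same space with respect to a common dominating measure. Here that measure is Lebesgue measure on $\RR^T$, since under each $\mathbf{H_1^t}$ the batch means are still independent nondegenerate Gaussians, with variance reduced to $(n-1)\sigma^2/n^2$ at step $t$ but still positive for $n\ge 2$. Given this, the argument is just Bayes' rule with a discrete latent variable.
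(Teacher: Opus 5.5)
Your proposal is correct and matches the paper's proof: condition on the latent insertion time to write the $\widetilde{\mathbf{H_1}}$ likelihood as the uniform mixture of the $\mathbf{H_1^t}$ likelihoods, then divide term by term by the null likelihood. The paper leaves implicit the common dominating measure, the positivity of the null density, and the appeal to Lemma~\ref{lem:lr-simplification} that identifies each summand with the single-transition $\mathrm{LR}_t$; your making these explicit is a harmless refinement.
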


\begin{proof}
    Under $\widetilde{\mathbf{H_1}}$, the insertion time satisfies $\bP(\tau = t \mid \widetilde{\mathbf{H_1}}) = 1/T$. The marginal likelihood of the observations is:
    \begin{equation*}
        p(\hat{\mu}_1, \ldots, \hat{\mu}_T \mid \widetilde{\mathbf{H_1}}) = \sum_{t=1}^{T} \bP(\tau = t) \cdot p(\hat{\mu}_1, \ldots, \hat{\mu}_T \mid \tau = t, \widetilde{\mathbf{H_1}}).
    \end{equation*}

    Conditioned on $\tau = t$, the data distribution matches $\mathbf{H_1^t}$:
    \begin{equation*}
        p(\hat{\mu}_1, \ldots, \hat{\mu}_T \mid \tau = t, \widetilde{\mathbf{H_1}}) = p(\hat{\mu}_1, \ldots, \hat{\mu}_T \mid \mathbf{H_1^t}).
    \end{equation*}

    Dividing by the null likelihood:
    \begin{align*}
        \mathrm{LR}_{\mathrm{Unif}} & = \frac{p(\hat{\mu}_1, \ldots, \hat{\mu}_T \mid \widetilde{\mathbf{H_1}})}{p(\hat{\mu}_1, \ldots, \hat{\mu}_T \mid \mathbf{H_0})} = \frac{1}{T} \sum_{t=1}^{T} \frac{p(\hat{\mu}_1, \ldots, \hat{\mu}_T \mid \mathbf{H_1^t})}{p(\hat{\mu}_1, \ldots, \hat{\mu}_T \mid \mathbf{H_0})} = \frac{1}{T} \sum_{t=1}^{T} \mathrm{LR}_t.
    \end{align*}
\end{proof}

Each $\mathrm{LR}_t$ depends on the data only through $\bar{X}_t$ (by the isolation property), and batch means are independent. The test statistic aggregates evidence across all time steps.

\subsection{Log-Sum-Exp Form}
\label{app:logsumexp-form}

\begin{corollary}[Log Likelihood Ratio]
    \label{cor:uniform-log-lr}
    The log likelihood ratio is
    \begin{equation*}
        \log \mathrm{LR}_{\mathrm{Unif}} = \mathrm{logsumexp}(\log \mathrm{LR}_1, \ldots, \log \mathrm{LR}_T) - \log T
    \end{equation*}
    where $\mathrm{logsumexp}(x_1, \ldots, x_T) = \log\left(\sum_{t=1}^T e^{x_t}\right)$.
\end{corollary}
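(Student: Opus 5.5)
The corollary follows from Theorem~\ref{thm:uniform-lr} by taking logarithms and rewriting each likelihood ratio as the exponential of its log. My plan has three steps.

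First, I will check that every $\mathrm{LR}_t$ is strictly positive, so that $\log \mathrm{LR}_t$ is finite and $\mathrm{LR}_t = \exp(\log \mathrm{LR}_t)$. This holds because, by Lemma~\ref{lem:lr-simplification}, $\mathrm{LR}_t$ is a ratio of two nondegenerate Gaussian conditional densities, and those densities are positive everywhere. Theorem~\ref{thm:log-lr} makes this explicit by giving $\log \mathrm{LR}_t$ as a finite quadratic in $\bar X_t$.

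Second, I will substitute $\mathrm{LR}_t = \exp(\log \mathrm{LR}_t)$ into $\mathrm{LR}_{\mathrm{Unif}} = \frac{1}{T}\sum_{t=1}^T \mathrm{LR}_t$ from Theorem~\ref{thm:uniform-lr}.

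Third, I will take the logarithm and use $\log(ab) = \log a + \log b$ with $a = 1/T$ and $b = \sum_t e^{\log \mathrm{LR}_t} > 0$. This gives
\[
\log \mathrm{LR}_{\mathrm{Unif}} = \log\Bigl(\sum_{t=1}^T e^{\log \mathrm{LR}_t}\Bigr) - \log T ,
\]
which is the stated $\mathrm{logsumexp}$ identity by the definition of $\mathrm{logsumexp}$.

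I do not expect any real obstacle. The only point needing care is the positivity argument in the first step, which is what makes the logarithms well defined.
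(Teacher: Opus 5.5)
Your proposal is correct and matches the paper's proof, which simply takes the logarithm of $\mathrm{LR}_{\mathrm{Unif}} = \frac{1}{T}\sum_{t=1}^{T} \mathrm{LR}_t$ from Theorem~\ref{thm:uniform-lr} and writes each $\mathrm{LR}_t$ as $e^{\log \mathrm{LR}_t}$. Your positivity check is an added detail the paper leaves implicit. It holds under the paper's standing assumption $n \geq 2$, which makes the $\mathbf{H_1^t}$ conditional variance nondegenerate.
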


\begin{proof}
    Taking the logarithm of Theorem~\ref{thm:uniform-lr}:
    \begin{equation*}
        \log \mathrm{LR}_{\mathrm{Unif}} = \log\left(\frac{1}{T} \sum_{t=1}^{T} \mathrm{LR}_t\right) = \log\left(\sum_{t=1}^{T} e^{\log \mathrm{LR}_t}\right) - \log T.
    \end{equation*}
\end{proof}

The log-sum-exp function is numerically stable: $\mathrm{logsumexp}(x_1, \ldots, x_T) = m + \log\left(\sum_{t=1}^T e^{x_t - m}\right)$ where $m = \max_t x_t$.

\subsection{Relationship to the GLR Test}
\label{app:uniform-vs-glr}

\begin{proposition}[Bounds]
    \label{prop:uniform-bounds}
    The log likelihood ratio satisfies
    \begin{equation*}
        \max_{t} \log \mathrm{LR}_t - \log T \leq \log \mathrm{LR}_{\mathrm{Unif}} \leq \max_{t} \log \mathrm{LR}_t.
    \end{equation*}
\end{proposition}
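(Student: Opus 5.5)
The plan is to work directly from the log-sum-exp representation of Corollary~\ref{cor:uniform-log-lr}, $\log \mathrm{LR}_{\mathrm{Unif}} = \log\bigl(\sum_{t=1}^T e^{x_t}\bigr) - \log T$ with $x_t \triangleq \log \mathrm{LR}_t$. Equivalently, by Theorem~\ref{thm:uniform-lr}, $\mathrm{LR}_{\mathrm{Unif}} = \frac{1}{T}\sum_{t} \mathrm{LR}_t$ is an arithmetic average of positive numbers. Both inequalities are then elementary comparisons between a sum of nonnegative terms and its largest term. The only properties used are that each $\mathrm{LR}_t$ is strictly positive and finite (both are ratios of nondegenerate Gaussian densities, cf.\ Theorem~\ref{thm:log-lr}) and that $\log$ is increasing.

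For the lower bound, let $t^\star \in \arg\max_t x_t$. Every summand $e^{x_t}$ is positive, so $\sum_{t} e^{x_t} \geq e^{x_{t^\star}}$. Taking logarithms gives $\mathrm{logsumexp}(x_1,\dots,x_T) \geq \max_t x_t$, and subtracting $\log T$ yields $\log\mathrm{LR}_{\mathrm{Unif}} \geq \max_t \log \mathrm{LR}_t - \log T$.

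For the upper bound, bound each term by the largest, $e^{x_t} \leq e^{x_{t^\star}}$, so $\sum_t e^{x_t} \leq T e^{\max_t x_t}$. Taking logarithms gives $\mathrm{logsumexp}(x_1,\dots,x_T) \leq \max_t x_t + \log T$, and subtracting $\log T$ gives $\log \mathrm{LR}_{\mathrm{Unif}} \leq \max_t \log\mathrm{LR}_t$. Put differently, an average never exceeds its maximum.

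There is no real obstacle here. The only point to check is that every $\mathrm{LR}_t$ lies in $(0,\infty)$, so that the logarithms are finite and $\arg\max$ is well defined; this follows from the explicit quadratic form of Theorem~\ref{thm:log-lr}. I would close by remarking on the interpretation: the statistic of $\SeMIunif$ and the statistic of $\SeMImax$ differ by at most $\log T$ pointwise.
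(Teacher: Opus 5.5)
Your proof is correct and is essentially the paper's argument: the paper bounds the average $\frac{1}{T}\sum_t \mathrm{LR}_t$ above by $\max_t \mathrm{LR}_t$ and below by $\frac{1}{T}\max_t \mathrm{LR}_t$, which is your log-sum-exp comparison before taking logarithms. You also invoke positivity where it is actually needed, namely the lower bound, where the other summands are dropped; the upper bound (an average never exceeds its maximum) holds for any reals.
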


\begin{proof}
    \textbf{Upper bound.} Since $\mathrm{LR}_t \geq 0$:
    \begin{equation*}
        \frac{1}{T} \sum_{t=1}^{T} \mathrm{LR}_t \leq \max_t \mathrm{LR}_t.
    \end{equation*}

    \textbf{Lower bound.} The average is at least the maximum divided by $T$:
    \begin{equation*}
        \frac{1}{T} \sum_{t=1}^{T} \mathrm{LR}_t \geq \frac{1}{T} \max_t \mathrm{LR}_t.
    \end{equation*}
\end{proof}

When one $\mathrm{LR}_{t^*}$ dominates, $\log \mathrm{LR}_{\mathrm{Unif}} \approx \log \mathrm{LR}_{t^*} - \log T$. When multiple batches contribute comparable evidence, the test accumulates their contributions.

\section{$\SeMImax$}
\label{app:unknown-tau-proofs}

Consider the case where $T$ batches of size $n$ are drawn from $\cD = \cN(\mu, \sigma^2)$ with known parameters $(\mu, \sigma^2)$, but the insertion time is unknown. The target $z^* \in \RR$ is fixed. We test:
\begin{align*}
    \mathbf{H_0} & : \text{All datapoints sampled i.i.d.\ from } \cD \\
    \overline{\mathbf{H_1}} & : \text{One sample in } D_\tau \text{ replaced by } z^* \text{ for some unknown } \tau \in \{1, \ldots, T\}
\end{align*}

\textbf{Notation.} We write $\bar{X}_t$ for the batch mean, $\hat{\mu}_t$ for the cumulative empirical mean, and $\log \mathrm{LR}_\tau$ for the log likelihood ratio at time $\tau$. The test statistic for $\SeMImax$ is $\max_{\tau} \log \mathrm{LR}_\tau$. Single-batch error rates are $\alpha_0(\gamma) = P_0(\log \mathrm{LR}_1 \geq \gamma)$ and $\beta_0(\gamma) = P_1(\log \mathrm{LR}_1 < \gamma)$.

\subsection{Independence of Log Likelihood Ratios under $\mathbf{H_0}$}
\label{app:proof-independence-h0}

\begin{theorem}[Independence under $\mathbf{H_0}$]
    \label{thm:independence-h0}
    Under $\mathbf{H_0}$, the statistics $\log \mathrm{LR}_1, \ldots, \log \mathrm{LR}_T$ are independent and identically distributed.
\end{theorem}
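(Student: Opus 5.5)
The statement follows once we write each $\log \mathrm{LR}_t$ as a fixed function of a single batch mean and check that these batch means are i.i.d.\ under $\mathbf{H_0}$. By Theorem~\ref{thm:log-lr}, applied with insertion time $t$ in place of $\tau$, we have $\log \mathrm{LR}_t = g(\bar{X}_t)$, where
\[
g(x) = -\tfrac{1}{2}\log\left(\tfrac{n-1}{n}\right) - \tfrac{n}{2(n-1)\sigma^2}(x-\mu)^2 + \tfrac{(z^*-\mu)n}{(n-1)\sigma^2}(x-\mu) - \tfrac{(z^*-\mu)^2}{2(n-1)\sigma^2}.
\]
The key point is that $g$ is the same deterministic map for every $t$. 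By the isolation property (Lemma~\ref{lem:lr-simplification}) and the computation in Theorem~\ref{thm:log-lr}, the factors of $\tau$ cancel, so $g$ depends only on $(n,\mu,\sigma^2,z^*)$ and not on $t$ or $T$. I would also check that $\bar{X}_t$ is itself a measurable function of the observations, via $\bar{X}_t = t\hat{\mu}_t - (t-1)\hat{\mu}_{t-1}$ with $\hat\mu_0 := 0$. This confirms that $\log \mathrm{LR}_t$ is a well-defined statistic of $(\hat\mu_1,\dots,\hat\mu_T)$.

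Next, under $\mathbf{H_0}$ all entries $X_{t,k}$ with $t\in[T]$ and $k\in[n]$ are i.i.d.\ $\cN(\mu,\sigma^2)$. The batch means $\bar{X}_1,\dots,\bar{X}_T$ are functions of disjoint blocks of these entries, so they are mutually independent. Each block is an $n$-tuple of i.i.d.\ $\cN(\mu,\sigma^2)$ variables, so the $\bar X_t$ share the common law $\cN(\mu,\sigma^2/n)$.

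Independence and identical distribution are both preserved when the same measurable map is applied coordinatewise. Applying $g$ to each $\bar X_t$ therefore shows that $\log \mathrm{LR}_1,\dots,\log \mathrm{LR}_T$ are i.i.d., with common law equal to that of $g(Y)$ for $Y\sim\cN(\mu,\sigma^2/n)$. In particular this common law is that of $\log \mathrm{LR}_1$, which justifies the single-batch error $\alpha_0(\gamma)$.

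The only step needing care is the first: verifying that the log-LR for insertion at $t$ really is the same function $g$ of $\bar X_t$ for every $t$. If $\log \mathrm{LR}_t$ instead depended on $t$ through a scaling of $\hat\mu_t-\hat\mu_{t-1}$, the identical-distribution claim would have to be rechecked. The cancellation of $\tau$ already shown in the proof of Theorem~\ref{thm:log-lr} settles this.
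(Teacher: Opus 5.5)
Your proposal is correct and follows essentially the same route as the paper. Both write $\log \mathrm{LR}_t$ as the same $t$-independent quadratic function of $\bar{X}_t$ (Theorem~\ref{thm:log-lr}) and then use that the batch means $\bar{X}_1,\dots,\bar{X}_T$ are i.i.d.\ $\cN(\mu,\sigma^2/n)$ under $\mathbf{H_0}$; your extra check that $\bar{X}_t = t\hat\mu_t - (t-1)\hat\mu_{t-1}$ is recoverable from the released outputs is a small clarifying addition the paper leaves implicit.
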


\begin{proof}
    From Theorem~\ref{thm:log-lr}, each $\log \mathrm{LR}_\tau$ depends on the data only through $\bar{X}_\tau - \mu$. The coefficients $c_0$, $c_1$, and $c_2$ in the quadratic form depend on $n$, $\sigma^2$, $\mu$, and $z^*$, but not on $\tau$.

    Under $\mathbf{H_0}$, each batch $D_\tau$ is sampled independently from $\mathcal{N}(\mu, \sigma^2)^n$. The batch means $\bar{X}_1, \ldots, \bar{X}_T$ are therefore independent random variables. Moreover, each $\bar{X}_\tau \sim \mathcal{N}(\mu, \sigma^2/n)$ for all $\tau \in \{1, \ldots, T\}$. Since the log likelihood ratio is a function of $\bar{X}_\tau - \mu$ with coefficients that do not depend on $\tau$, the statistics $\log \mathrm{LR}_1, \ldots, \log \mathrm{LR}_T$ have the same distribution.

    The statistics $\log \mathrm{LR}_\tau$ are functions of independent, identically distributed random variables (the batch means), and these functions are identical across $\tau$. Therefore, $\log \mathrm{LR}_1, \ldots, \log \mathrm{LR}_T$ are i.i.d.\ under $\mathbf{H_0}$.
\end{proof}

\subsection{Type I error}
\label{app:proof-glr-type1}

\begin{restatable}[$\SeMImax$ Type I Error]{corollary}{glrtypei}
	\label{cor:glr-type1}
	The Type I error of $\SeMImax$ with threshold $\gamma$ is
	\begin{equation*}
		\alpha_{\max}(\gamma) = 1 - [1 - \alpha_0(\gamma)]^T
	\end{equation*}
	where $\alpha_0(\gamma)$ is the Type I error of the single-batch test from Lemma~\ref{lem:type1-sequential}.
\end{restatable}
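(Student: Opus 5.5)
The plan is to write the rejection event of $\SeMImax$ as a union of single-batch rejection events and then use Theorem~\ref{thm:independence-h0} to compute its complement as a product. The test rejects $\mathbf{H_0}$ when $\max_{t}\log\mathrm{LR}_t \geq \gamma$. This event is the union over $t\in[T]$ of $\{\log\mathrm{LR}_t\geq\gamma\}$. So its complement under $\mathbf{H_0}$ is the intersection $\bigcap_{t=1}^T\{\log\mathrm{LR}_t<\gamma\}$, and
\[
\alpha_{\max}(\gamma) = 1-P_0\Big(\bigcap_{t=1}^{T}\{\log\mathrm{LR}_t<\gamma\}\Big).
\]

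Next I would invoke Theorem~\ref{thm:independence-h0}: under $\mathbf{H_0}$ the variables $\log\mathrm{LR}_1,\dots,\log\mathrm{LR}_T$ are i.i.d. Each is a fixed quadratic function of $\bar X_t-\mu$, and the batch means are independent and all distributed as $\cN(\mu,\sigma^2/n)$. By independence, the probability of the intersection factorizes into $\prod_t P_0(\log\mathrm{LR}_t<\gamma)$. By identical distribution, each factor equals $P_0(\log\mathrm{LR}_1<\gamma)=1-\alpha_0(\gamma)$. This gives $\alpha_{\max}(\gamma)=1-[1-\alpha_0(\gamma)]^T$.

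Finally, I would identify $\alpha_0$ with the closed form of Lemma~\ref{lem:type1-sequential}, since $\log\mathrm{LR}_1$ is exactly the $\SeMIstar$ statistic with $\tau=1$. That lemma is $\tau$-invariant, so the explicit expression applies directly, including $\alpha_0(\gamma)=0$ for $\gamma>\gamma_{\max}$, in which case $\alpha_{\max}(\gamma)=0$ as well.

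The only point needing care is the convention at the boundary, $\geq$ versus $>$. The statistic has a continuous distribution, so $P_0(\log\mathrm{LR}_t=\gamma)=0$ and the convention does not affect the result. There is no real obstacle; the substance is already in the independence theorem.
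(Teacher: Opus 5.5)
Your proposal is correct and matches the paper's proof essentially step for step. You write the complement of the rejection event as $\bigcap_t\{\log\mathrm{LR}_t<\gamma\}$, factorize it using the independence in Theorem~\ref{thm:independence-h0}, and use identical distribution to get $[1-\alpha_0(\gamma)]^T$; your remarks on the $\geq$ versus $>$ convention and the $\tau$-invariance of $\alpha_0$ are harmless refinements the paper leaves implicit.
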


\begin{proof}
    The $\SeMImax$ test rejects when $\max_{\tau=1,\ldots,T} \log \mathrm{LR}_\tau \geq \gamma$. The Type I error is:
    \begin{align*}
        \alpha_{\max}(\gamma) & = P_0(\SeMImax \geq \gamma)                                            \\
                              & = P_0\left(\max_{\tau=1,\ldots,T} \log \mathrm{LR}_\tau \geq \gamma\right) \\
                              & = 1 - P_0(\log \mathrm{LR}_\tau < \gamma \text{ for all } \tau).
    \end{align*}

    By Theorem~\ref{thm:independence-h0}, the statistics are independent under $\mathbf{H_0}$:
    \begin{equation*}
        P_0(\log \mathrm{LR}_\tau < \gamma \text{ for all } \tau) = \prod_{\tau=1}^T P_0(\log \mathrm{LR}_\tau < \gamma).
    \end{equation*}
    Since all $\log \mathrm{LR}_\tau$ have the same distribution under $\mathbf{H_0}$:
    \begin{equation*}
        \prod_{\tau=1}^T P_0(\log \mathrm{LR}_\tau < \gamma) = [P_0(\log \mathrm{LR}_1 < \gamma)]^T = [1 - \alpha_0(\gamma)]^T.
    \end{equation*}
    Therefore:
    \begin{equation*}
        \alpha_{\max}(\gamma) = 1 - [1 - \alpha_0(\gamma)]^T.
    \end{equation*}
\end{proof}
To achieve overall Type I error $\alpha$, set the threshold $\gamma^*$ such that $\alpha_0(\gamma^*) = 1 - (1-\alpha)^{1/T}$. For small $\alpha$ and moderate $T$, this approximates the Bonferroni correction $\alpha_0 \approx \alpha/T$.

\subsection{Type II error}
\label{app:proof-glr-type2}

\begin{restatable}[$\SeMImax$ Type II Error]{corollary}{glrtypeii}
	\label{cor:glr-type2}
	The Type II error of $\SeMImax$, given true insertion time $\tau^*$, is
	\begin{equation*}
		\beta_{\max}(\gamma \mid \tau^*) = \beta_0(\gamma) \cdot [1 - \alpha_0(\gamma)]^{T-1}
	\end{equation*}
	where $\beta_0(\gamma)$ is the Type II error from Lemma~\ref{lem:type1-sequential}. This does not depend on $\tau^*$.
\end{restatable}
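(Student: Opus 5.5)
The plan mirrors the proof of Corollary~\ref{cor:glr-type1}, but now under the alternative with true insertion time $\tau^*$. The $\SeMImax$ test fails to reject exactly when $\max_{\tau} \log \mathrm{LR}_\tau < \gamma$, that is, when $\log \mathrm{LR}_\tau < \gamma$ for every $\tau \in [T]$. So
\begin{equation*}
    \beta_{\max}(\gamma \mid \tau^*) = P_{1}^{\tau^*}\!\left(\log \mathrm{LR}_\tau < \gamma \ \text{for all } \tau\right).
\end{equation*}
I will factor this probability into a product over $\tau$. Each factor will then be identified either as a single-batch Type II error or as a single-batch acceptance probability under the null.

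\textbf{Independence under the alternative.} The first step replaces Theorem~\ref{thm:independence-h0}. By Theorem~\ref{thm:log-lr} and the isolation property (Lemma~\ref{lem:lr-simplification}), each $\log \mathrm{LR}_\tau$ is one fixed function $f$ of the batch mean $\bar{X}_\tau$, recovered as $\tau\hat\mu_\tau-(\tau-1)\hat\mu_{\tau-1}$. The coefficients of $f$ do not depend on $\tau$. Under $\mathbf{H_1^{\tau^*}}$ the batches $D_1,\dots,D_T$ are still mutually independent, because $z^*$ only replaces an entry inside $D_{\tau^*}$ and the uniform index $J$ is independent of the other batches. Hence $\bar{X}_1,\dots,\bar{X}_T$ are independent. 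Their laws are as follows: for $t\neq\tau^*$, $\bar{X}_t\sim\cN(\mu,\sigma^2/n)$, the same as under $\mathbf{H_0}$; for $t=\tau^*$, $\bar{X}_{\tau^*}\sim\cN\big(\frac{(n-1)\mu+z^*}{n},\frac{(n-1)\sigma^2}{n^2}\big)$, the same as under the single-batch alternative.

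\textbf{Factorization and identification.} By independence the probability splits into $T$ factors. The factor at $\tau^*$ equals $P(f(\bar X)<\gamma)$ with $\bar X$ distributed as an IN batch mean. That is exactly $\beta_0(\gamma)$, the single-batch Type II error of Lemma~\ref{lem:type1-sequential}; this uses that the lemma's errors are $\tau$-invariant. Each of the other $T-1$ factors is $P(f(\bar X)<\gamma)$ with $\bar X$ an OUT batch mean, which equals $1-\alpha_0(\gamma)$. This gives $\beta_0(\gamma)[1-\alpha_0(\gamma)]^{T-1}$. The expression has no dependence on $\tau^*$, because the factors are identical regardless of which index carries the target.

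\textbf{Main obstacle.} The only delicate point is justifying independence and the marginal laws under $\mathbf{H_1^{\tau^*}}$. The statistics are defined through the cumulative means $\hat\mu_t$, which are dependent. I would resolve this by rewriting everything in terms of the batch means via the invertible linear map $(\hat\mu_1,\dots,\hat\mu_T)\leftrightarrow(\bar X_1,\dots,\bar X_T)$, and then using the independence of the batches directly. A minor convention point also needs care: the strict versus non-strict inequality at the threshold. Since the statistics have continuous distributions, this is immaterial.
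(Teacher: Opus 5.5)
Your proposal is correct and follows essentially the same route as the paper's proof. Both arguments factor the acceptance event $\{\max_\tau \log \mathrm{LR}_\tau < \gamma\}$ using independence of the batches under $\mathbf{H_1^{\tau^*}}$, identify the $\tau^*$ factor as $\beta_0(\gamma)$, and identify the remaining $T-1$ factors as $1-\alpha_0(\gamma)$. Your explicit passage from the cumulative means to the batch means via the invertible linear map is a slightly more careful justification of the independence step that the paper simply asserts.
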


\begin{proof}
    Under $\mathbf{H_1^{\tau^*}}$, the target $z^*$ is in batch $\tau^*$. The Type II error is the probability of failing to reject $\mathbf{H_0}$.

    The test fails to reject when all log likelihood ratios are below the threshold:
    \begin{equation*}
        \beta_{\max}(\gamma \mid \tau^*) = P_1(\SeMImax < \gamma \mid \mathbf{H_1^{\tau^*}}) = P_1\left(\max_{\tau} \log \mathrm{LR}_\tau < \gamma\right).
    \end{equation*}
    Separating the true insertion time from others:
    \begin{equation*}
        P_1\left(\max_{\tau} \log \mathrm{LR}_\tau < \gamma\right) = P_1\left(\log \mathrm{LR}_{\tau^*} < \gamma \text{ and } \max_{\tau \neq \tau^*} \log \mathrm{LR}_\tau < \gamma\right).
    \end{equation*}

    Under $\mathbf{H_1^{\tau^*}}$, the statistic $\log \mathrm{LR}_{\tau^*}$ depends only on batch $D_{\tau^*}$, while $\log \mathrm{LR}_\tau$ for $\tau \neq \tau^*$ depends only on batch $D_\tau$. Since the batches are independent, these statistics are independent:
    \begin{equation*}
        P_1\left(\log \mathrm{LR}_{\tau^*} < \gamma \text{ and } \max_{\tau \neq \tau^*} \log \mathrm{LR}_\tau < \gamma\right) = P_1(\log \mathrm{LR}_{\tau^*} < \gamma) \cdot P_1\left(\max_{\tau \neq \tau^*} \log \mathrm{LR}_\tau < \gamma\right).
    \end{equation*}

    The first factor is $\beta_0(\gamma)$, the Type II error of the single-batch test. For batches $\tau \neq \tau^*$, no target is present, so $\log \mathrm{LR}_\tau$ has the null distribution. By the same argument as in Corollary~\ref{cor:glr-type1}:
    \begin{equation*}
        P_1\left(\max_{\tau \neq \tau^*} \log \mathrm{LR}_\tau < \gamma\right) = [1 - \alpha_0(\gamma)]^{T-1}.
    \end{equation*}
    Therefore:
    \begin{equation*}
        \beta_{\max}(\gamma \mid \tau^*) = \beta_0(\gamma) \cdot [1 - \alpha_0(\gamma)]^{T-1}.
    \end{equation*}
    The result does not depend on $\tau^*$ because both $\alpha_0(\gamma)$ and $\beta_0(\gamma)$ are independent of the insertion time.
\end{proof}
The Type II error decomposes as a product: $\beta_0(\gamma)$ is the probability that the test fails to detect the target in the correct batch, and $[1-\alpha_0(\gamma)]^{T-1}$ is the probability that none of the other batches trigger a false detection. Independence from $\tau^*$ follows from the identical distribution of batch means.

\section{SGD Analysis}
\label{app:sgd}

We consider the SGD setting where we observe model parameters $\theta_0, \theta_1, \ldots, \theta_T$ with $\theta_t \in \RR^d$. At each step $t$, the update is $\theta_t = \theta_{t-1} - \eta_t g_t$ where $\eta_t > 0$ is the learning rate and $g_t$ is the batch gradient on dataset $D_t$ of size $n_t$. The target $z^*$ lies in the data domain $\cX$ and the insertion time $\tau$ is known. We test:
\begin{align*}
    \mathbf{H_0}      & : \text{All samples in } D_1, \ldots, D_T \text{ are drawn i.i.d.\ from } \cD \\
    \mathbf{H_1^\tau} & : \text{One sample in } D_\tau \text{ is replaced by } z^*
\end{align*}

We assume that, conditioned on $\theta_{t-1}$, the batch gradient $g_t$ is Gaussian with mean $\mu_g(\theta_{t-1})$ and covariance $\Sigma_g(\theta_{t-1})/n_t$. This approximation is justified by the central limit theorem when the batch aggregates gradients from many data points.

\textbf{Notation.} The loss function on data point $x$ with parameter $\theta$ is denoted by $\ell(\theta; x)$. The batch gradient is $g_t = \frac{1}{n_t}\sum_{x \in D_t} \nabla_\theta \ell(\theta_{t-1}; x)$. We denote by $\mu_g(\theta)$ and $\Sigma_g(\theta)$ the mean and covariance of the batch gradient when scaled by $\sqrt{n_t}$, i.e., $g_t \mid \theta_{t-1} \sim \cN(\mu_g(\theta_{t-1}), \Sigma_g(\theta_{t-1})/n_t)$.

\subsection{Conditional Distributions}

\begin{lemma}[Parameter Distribution Under $\mathbf{H_0}$]
    \label{lem:sgd-h0}
    Under $\mathbf{H_0}$, the conditional distribution of $\theta_t$ given $\theta_{t-1}$ is:
    \begin{equation*}
        \theta_t \mid \theta_{t-1}, \mathbf{H_0} \sim \cN\left(\theta_{t-1} - \eta_t \mu_g(\theta_{t-1}), \frac{\eta_t^2 \Sigma_g(\theta_{t-1})}{n_t}\right).
    \end{equation*}
\end{lemma}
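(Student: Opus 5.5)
The claim is an affine pushforward of the Gaussian-gradient assumption, so the plan is a one-line transformation argument. I will condition on $\theta_{t-1}$ throughout. Under $\mathbf{H_0}$, every sample in $D_t$ is drawn i.i.d.\ from $\cD$ independently of the past. The modelling assumption stated at the start of this appendix then applies directly: conditionally on $\theta_{t-1}$, the batch gradient satisfies $g_t \sim \cN(\mu_g(\theta_{t-1}), \Sigma_g(\theta_{t-1})/n_t)$. Here $\theta_{t-1}$ is treated as a fixed vector, and the only randomness is the fresh batch $D_t$.

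Next I use the update rule $\theta_t = \theta_{t-1} - \eta_t g_t$. Given $\theta_{t-1}$, this is an affine map of $g_t$ with deterministic shift $\theta_{t-1}$ and linear part $-\eta_t I$; the learning rate $\eta_t$ is deterministic. The standard fact that $A g + b \sim \cN(A m + b, A S A^\top)$ for $g\sim\cN(m,S)$ then gives the conditional mean $\theta_{t-1} - \eta_t\mu_g(\theta_{t-1})$. It gives the conditional covariance $\eta_t^2\Sigma_g(\theta_{t-1})/n_t$, since $(-\eta_t I)S(-\eta_t I)^\top=\eta_t^2 S$.

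The only point that needs care is the conditioning. One must check that conditioning on $\theta_{t-1}$, rather than on the whole history $\theta_0,\dots,\theta_{t-1}$, loses nothing. Under $\mathbf{H_0}$, $D_t$ is independent of $D_1,\dots,D_{t-1}$, and hence of all earlier parameters. So the conditional law of $g_t$ given the history depends only on $\theta_{t-1}$, through the gradient evaluation point. This is the Markov property that the factorization in Lemma~\ref{lem:lr-simplification} and Theorem~\ref{thm:sgd-log-lr} later relies on, and I would state it explicitly in the proof.

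Beyond that there is no real obstacle. The result is a direct restatement of the Gaussianity assumption, with no approximation beyond the assumption itself.
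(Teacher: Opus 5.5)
Your proposal is correct and matches the paper's approach: the paper states this lemma without a separate proof, treating it as immediate from the Gaussian-gradient assumption and the update $\theta_t = \theta_{t-1} - \eta_t g_t$, exactly as its proof of the $\mathbf{H_1^\tau}$ counterpart (Lemma~\ref{lem:sgd-h1}) does. Your remark about conditioning on $\theta_{t-1}$ versus the full history is the same Markov argument the paper invokes in Theorem~\ref{thm:sgd-lr-simplification}.
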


\begin{lemma}[Parameter Distribution Under $\mathbf{H_1^\tau}$]
    \label{lem:sgd-h1}
    Under $\mathbf{H_1^\tau}$, the conditional distribution of $\theta_\tau$ given $\theta_{\tau-1}$ is:
    \begin{equation*}
        \theta_\tau \mid \theta_{\tau-1}, \mathbf{H_1^\tau} \sim \cN\left(\theta_{\tau-1} - \eta_\tau \cdot \frac{(n_\tau-1)\mu_g(\theta_{\tau-1}) + \nabla_\theta \ell(\theta_{\tau-1}; z^*)}{n_\tau}, \frac{\eta_\tau^2(n_\tau-1)\Sigma_g(\theta_{\tau-1})}{n_\tau^2}\right).
    \end{equation*}
    For $t \neq \tau$, the conditional distribution is identical to that under $\mathbf{H_0}$.
\end{lemma}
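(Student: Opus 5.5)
The plan is to argue exactly as in the proof of Lemma~\ref{lem:multivariate-conditional}: first pin down the distribution of the batch gradient at step $\tau$ under $\mathbf{H_1^\tau}$, then push it through the affine SGD update. Conditioned on $\theta_{\tau-1}$, the update reads
\begin{equation*}
\theta_\tau = \theta_{\tau-1} - \eta_\tau g_\tau, \qquad g_\tau = \frac{1}{n_\tau}\sum_{x\in D_\tau}\nabla_\theta \ell(\theta_{\tau-1};x).
\end{equation*}
Under $\mathbf{H_1^\tau}$, one entry of $D_\tau$, at uniformly random position $J$, is the deterministic point $z^*$. Since the batch gradient is a symmetric sum, the position $J$ plays no role. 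We therefore split
\begin{equation*}
g_\tau = \frac{n_\tau-1}{n_\tau}\,\tilde g + \frac{1}{n_\tau}\nabla_\theta\ell(\theta_{\tau-1};z^*),
\end{equation*}
where $\tilde g$ is the average gradient over the $n_\tau-1$ remaining points. These points are i.i.d.\ from $\cD$ and independent of $\theta_{\tau-1}$, since they are freshly sampled at step $\tau$.

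The key modelling step is to apply the Gaussian batch-gradient assumption to $\tilde g$ with batch size $n_\tau-1$:
\begin{equation*}
\tilde g\mid\theta_{\tau-1}\sim\cN\bigl(\mu_g(\theta_{\tau-1}),\,\Sigma_g(\theta_{\tau-1})/(n_\tau-1)\bigr).
\end{equation*}
This is consistent with the notation, which scales the covariance by the batch size, i.e.\ per-sample gradient mean $\mu_g$ and covariance $\Sigma_g$. I expect this to be the only delicate point. The assumption is stated for full batches of size $n_t$, so I would say explicitly that it is applied to the $(n_\tau-1)$-sample sub-batch, via the same CLT justification. Equivalently, one can read it as the per-sample gradients being treated as having mean $\mu_g$ and covariance $\Sigma_g$, so that averages of $m$ of them are $\cN(\mu_g,\Sigma_g/m)$.

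With that in place, $g_\tau$ is an affine image of a Gaussian. Its mean is $\bigl((n_\tau-1)\mu_g+\nabla_\theta\ell(\theta_{\tau-1};z^*)\bigr)/n_\tau$, and its covariance is
\begin{equation*}
\frac{(n_\tau-1)^2}{n_\tau^2}\cdot\frac{\Sigma_g}{n_\tau-1}=\frac{(n_\tau-1)\Sigma_g}{n_\tau^2}.
\end{equation*}
Applying $\theta_\tau=\theta_{\tau-1}-\eta_\tau g_\tau$ multiplies the mean by $-\eta_\tau$, shifts it by $\theta_{\tau-1}$, and multiplies the covariance by $\eta_\tau^2$. This gives the stated law of $\theta_\tau \mid \theta_{\tau-1}$.

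For $t\neq\tau$, the argument mirrors the cancellation in the proof of Lemma~\ref{lem:lr-simplification}. Under $\mathbf{H_1^\tau}$, the batch $D_t$ contains no target and is drawn i.i.d.\ from $\cD$ independently of the past. Any influence of $z^*$ on later steps enters only through $\theta_{t-1}$, on which we condition. Hence $g_t\mid\theta_{t-1}\sim\cN(\mu_g(\theta_{t-1}),\Sigma_g(\theta_{t-1})/n_t)$ exactly as under $\mathbf{H_0}$, and the conditional law matches Lemma~\ref{lem:sgd-h0}.
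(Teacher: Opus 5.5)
Your proposal is correct and follows the paper's argument: split the step-$\tau$ batch gradient into the $n_\tau-1$ fresh samples plus the fixed target gradient, apply the Gaussian-gradient assumption to the fresh part, and push the result through the affine update $\theta_\tau=\theta_{\tau-1}-\eta_\tau g_\tau$. The paper states the Gaussian step as the sum of $n_\tau-1$ i.i.d.\ Gaussian gradients having mean $(n_\tau-1)\mu_g$ and covariance $(n_\tau-1)\Sigma_g$, which is your sub-batch reading. Your remark that the assumption must be applied to an $(n_\tau-1)$-sample sub-batch makes explicit a step the paper leaves implicit, and your $t\neq\tau$ argument matches the cancellation the paper gives in the proof of Theorem~\ref{thm:sgd-lr-simplification}.
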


\begin{proof}
    \textbf{Batch gradient under $\mathbf{H_1^\tau}$.}
    Batch $D_\tau$ contains $(n_\tau - 1)$ samples from $\cD$ and the fixed target $z^*$. The batch gradient is:
    \begin{equation*}
        g_\tau = \frac{1}{n_\tau}\left[\sum_{i=1}^{n_\tau-1} \nabla_\theta \ell(\theta_{\tau-1}; x_i) + \nabla_\theta \ell(\theta_{\tau-1}; z^*)\right].
    \end{equation*}

    Under the Gaussian gradient assumption, the sum of $(n_\tau - 1)$ i.i.d.\ Gaussian gradients has mean $(n_\tau-1)\mu_g(\theta_{\tau-1})$ and covariance $(n_\tau-1)\Sigma_g(\theta_{\tau-1})$. Adding the fixed target gradient shifts the mean.

    The result follows from $\theta_\tau = \theta_{\tau-1} - \eta_\tau g_\tau$.
\end{proof}

\subsection{Likelihood Ratio}

\begin{theorem}[SGD Likelihood Ratio Simplification]
    \label{thm:sgd-lr-simplification}
    The likelihood ratio for testing $\mathbf{H_0}$ against $\mathbf{H_1^\tau}$ based on $(\theta_0, \ldots, \theta_T)$ satisfies:
    \begin{equation*}
        \frac{p(\theta_0, \ldots, \theta_T \mid \mathbf{H_1^\tau})}{p(\theta_0, \ldots, \theta_T \mid \mathbf{H_0})} = \frac{p(\theta_\tau \mid \theta_{\tau-1}, \mathbf{H_1^\tau})}{p(\theta_\tau \mid \theta_{\tau-1}, \mathbf{H_0})}.
    \end{equation*}
\end{theorem}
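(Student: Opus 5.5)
The plan is to mirror the proof of Lemma~\ref{lem:lr-simplification}, replacing the recursion for $\hat{\mu}_t$ by the SGD recursion $\theta_t = \theta_{t-1} - \eta_t g_t$.

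\textbf{Step 1: chain rule.} Write the joint density of $(\theta_0,\dots,\theta_T)$ under either hypothesis as
\[
p(\theta_0 \mid H)\prod_{t=1}^{T} p(\theta_t \mid \theta_0,\dots,\theta_{t-1}, H).
\]
The initialization $\theta_0$ is drawn independently of the data, so $p(\theta_0\mid\mathbf{H_1^\tau}) = p(\theta_0\mid\mathbf{H_0})$ and this factor cancels in the ratio.

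\textbf{Step 2: Markov structure.} Since $\theta_t = \theta_{t-1} - \eta_t g_t$, and $g_t$ is computed on the fresh batch $D_t$ at the point $\theta_{t-1}$, the update depends on the past only through $\theta_{t-1}$. Batches are drawn independently across steps, and the modelling assumption says $g_t \mid \theta_{t-1}$ is Gaussian with parameters depending only on $\theta_{t-1}$. Together these give
\[
p(\theta_t \mid \theta_{0:t-1}, H) = p(\theta_t \mid \theta_{t-1}, H)
\]
under both hypotheses. The one point to state carefully is that $D_t$ is independent of $D_1,\dots,D_{t-1}$, and hence of $\theta_{0:t-1}$.

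\textbf{Step 3: cancellation for $t\neq\tau$.} By Lemma~\ref{lem:sgd-h1}, for every $t\neq\tau$ the conditional density of $\theta_t$ given $\theta_{t-1}$ under $\mathbf{H_1^\tau}$ coincides with the one under $\mathbf{H_0}$ in Lemma~\ref{lem:sgd-h0}. There are two cases.
\begin{itemize}
\item For $t<\tau$, no target has been inserted yet.
\item For $t>\tau$, $D_t$ is a clean batch. The target's influence on the trajectory is entirely encoded in the conditioning value $\theta_{t-1}$, and the conditional law is a fixed function of that value, $\mathcal N(\theta_{t-1}-\eta_t\mu_g(\theta_{t-1}),\eta_t^2\Sigma_g(\theta_{t-1})/n_t)$.
\end{itemize}
So all factors with $t\neq\tau$ equal $1$, and only the factor at $t=\tau$ remains.

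\textbf{Main obstacle.} The delicate step is Step 2 for $t>\tau$. Under $\mathbf{H_1^\tau}$ the marginal law of $\theta_{t-1}$ differs from its law under $\mathbf{H_0}$, so one must argue that the \emph{conditional kernel} is the same function of $\theta_{t-1}$ under both hypotheses. This holds because $\mu_g$ and $\Sigma_g$ are defined as functions of the parameter through the population distribution $\cD$, not through the realized trajectory. I would state this explicitly as part of the Gaussian-gradient assumption: the kernel $\theta_{t-1}\mapsto\mathcal N(\cdot)$ is hypothesis-independent off step $\tau$. I would also note that $\Sigma_g(\theta_{t-1})$ is assumed positive definite, so that all the densities exist.
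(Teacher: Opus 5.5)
Your proposal is correct and follows the paper's proof essentially step for step: the chain rule, the Markov property coming from the update $\theta_t=\theta_{t-1}-\eta_t g_t$ on a fresh independent batch $D_t$, and the cancellation of every factor with $t\neq\tau$ because the conditional law of $\theta_t$ given $\theta_{t-1}$ is the same under both hypotheses. Your explicit remarks make precise three points the paper leaves implicit: $p(\theta_0)$ does not depend on the hypothesis, the kernel is a fixed function of $\theta_{t-1}$ through $\mu_g,\Sigma_g$ defined from $\cD$, and $\Sigma_g$ must be positive definite for the densities to exist.
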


\begin{proof}
    By the chain rule,
    \begin{equation*}
        \frac{p(\theta_0, \ldots, \theta_T \mid H)}{p(\theta_0)} = \prod_{t=1}^T p(\theta_t \mid \theta_0, \ldots, \theta_{t-1}, H).
    \end{equation*}
    The update $\theta_t = \theta_{t-1} - \eta_t g_t$ implies that $\theta_t$ depends on $(\theta_0, \ldots, \theta_{t-1})$ only through $\theta_{t-1}$, since $g_t$ depends on $\theta_{t-1}$ and the samples in $D_t$. Thus
    \begin{equation*}
        p(\theta_t \mid \theta_0, \ldots, \theta_{t-1}, H) = p(\theta_t \mid \theta_{t-1}, H).
    \end{equation*}
    The likelihood ratio becomes
    \begin{equation*}
        \mathrm{LR} = \prod_{t=1}^T \frac{p(\theta_t \mid \theta_{t-1}, \mathbf{H_1^\tau})}{p(\theta_t \mid \theta_{t-1}, \mathbf{H_0})}.
    \end{equation*}
    For $t < \tau$, the target has not been inserted, so both conditional distributions are identical and the ratio equals 1. For $t > \tau$, batch $D_t$ contains no target under either hypothesis, so conditioned on $\theta_{t-1}$ the distributions coincide and the ratio equals 1. Only the term at $t = \tau$ contributes.
\end{proof}

\begin{theorem}[SGD Log Likelihood Ratio]
    \label{thm:sgd-log-lr-app}
    Define $\delta_g = \nabla_\theta \ell(\theta_{\tau-1}; z^*) - \mu_g(\theta_{\tau-1})$ and $N = \theta_\tau - \theta_{\tau-1} + \eta_\tau \mu_g(\theta_{\tau-1})$. The log likelihood ratio is:
    \begin{align*}
        \log \mathrm{LR} & = -\frac{d}{2}\log\left(\frac{n_\tau-1}{n_\tau}\right) - \frac{n_\tau}{2(n_\tau-1)\eta_\tau^2}N^\top \Sigma_g^{-1} N \notag \\
                         & \quad - \frac{n_\tau}{(n_\tau-1)\eta_\tau}N^\top \Sigma_g^{-1}\delta_g - \frac{m^*}{2(n_\tau-1)}
    \end{align*}
    where $m^* = \delta_g^\top \Sigma_g^{-1} \delta_g$ is the Mahalanobis distance of the target gradient from the population mean. We denote this sequential test for SGD as $\SeMISGD$.
\end{theorem}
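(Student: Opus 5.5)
The plan is to treat this exactly as the multivariate empirical-mean computation of Theorem~\ref{thm:multivariate-log-lr-app}, with the batch mean replaced by the parameter increment. First, invoke Theorem~\ref{thm:sgd-lr-simplification}. It says the full likelihood ratio over $(\theta_0,\dots,\theta_T)$ reduces to the single transition ratio $p(\theta_\tau\mid\theta_{\tau-1},\mathbf{H_1^\tau})/p(\theta_\tau\mid\theta_{\tau-1},\mathbf{H_0})$. It therefore suffices to compute the log-ratio of the two Gaussian densities given by Lemmas~\ref{lem:sgd-h0} and~\ref{lem:sgd-h1}, conditionally on $\theta_{\tau-1}$. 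Throughout, $\mu_g,\Sigma_g$ and $g^*$ are evaluated at $\theta_{\tau-1}$ and treated as fixed, with $\Sigma_g$ assumed invertible.

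Next, recentre both conditionals at the $\mathbf{H_0}$ mean $\mathbf{m}_0=\theta_{\tau-1}-\eta_\tau\mu_g$, so that the centred variable is exactly $N=\theta_\tau-\mathbf{m}_0$. The covariances are $\mathbf{V}_0=\eta_\tau^2\Sigma_g/n_\tau$ and $\mathbf{V}_1=\tfrac{n_\tau-1}{n_\tau}\mathbf{V}_0$. Hence $\det\mathbf{V}_1/\det\mathbf{V}_0=\bigl(\tfrac{n_\tau-1}{n_\tau}\bigr)^d$, $\mathbf{V}_0^{-1}=n_\tau\Sigma_g^{-1}/\eta_\tau^2$ and $\mathbf{V}_1^{-1}=n_\tau^2\Sigma_g^{-1}/((n_\tau-1)\eta_\tau^2)$. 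The mean shift is
\[
\Delta=\mathbf{m}_1-\mathbf{m}_0=-\eta_\tau\,\frac{(n_\tau-1)\mu_g+g^*-n_\tau\mu_g}{n_\tau}=-\frac{\eta_\tau}{n_\tau}\,\delta_g .
\]
The negative sign here is what produces the minus sign on the cross term in the statement, in contrast with Theorem~\ref{thm:multivariate-log-lr}, where the update enters with a positive sign.

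Then write
\[
\log\mathrm{LR}=-\tfrac12\log\tfrac{\det\mathbf{V}_1}{\det\mathbf{V}_0}+\tfrac12 N^\top\mathbf{V}_0^{-1}N-\tfrac12(N-\Delta)^\top\mathbf{V}_1^{-1}(N-\Delta)
\]
and expand it into four pieces, each matching a term of the theorem:
\begin{itemize}
\item The determinant piece gives $-\tfrac d2\log\tfrac{n_\tau-1}{n_\tau}$.
\item The quadratic coefficient is $\tfrac12\bigl(\tfrac{n_\tau}{\eta_\tau^2}-\tfrac{n_\tau^2}{(n_\tau-1)\eta_\tau^2}\bigr)\Sigma_g^{-1}=-\tfrac{n_\tau}{2(n_\tau-1)\eta_\tau^2}\Sigma_g^{-1}$.
\item The cross term is $N^\top\mathbf{V}_1^{-1}\Delta=-\tfrac{n_\tau}{(n_\tau-1)\eta_\tau}N^\top\Sigma_g^{-1}\delta_g$.
\item The constant is $-\tfrac12\Delta^\top\mathbf{V}_1^{-1}\Delta=-\tfrac{m^*}{2(n_\tau-1)}$.
\end{itemize}

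There is no genuine obstacle, since the modelling content (Gaussian batch gradients, the Markov factorisation) is already packaged in the cited lemmas. The step needing care is bookkeeping of the $\eta_\tau$ and $n_\tau$ powers and of the sign of $\Delta$. To guard against errors there, I would cross-check against Theorem~\ref{thm:multivariate-log-lr-app} via the substitution $\bar{\mathbf{X}}_\tau-\bs\mu\leftrightarrow -N/\eta_\tau$, $\mathbf{z}^*-\bs\mu\leftrightarrow\delta_g$, $\bs\Sigma\leftrightarrow\Sigma_g$, which must reproduce the stated formula term by term.
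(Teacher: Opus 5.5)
Your proposal is correct and follows essentially the same route as the paper: restrict to the transition at $\tau$, recentre at $\mathbf{m}_0=\theta_{\tau-1}-\eta_\tau\mu_g$ so the centred variable is $N$, and expand the Gaussian log-density ratio using $\mathbf{V}_1=\frac{n_\tau-1}{n_\tau}\mathbf{V}_0$ and $\mathbf{m}_1-\mathbf{m}_0=-\eta_\tau\delta_g/n_\tau$. Your four coefficients, including the minus sign on the cross term, are right, and they spell out the bookkeeping that the paper's proof leaves to ``collecting terms''.
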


\begin{proof}
    \textbf{Conditional parameters.}
    Let $m_0 = \theta_{\tau-1} - \eta_\tau \mu_g(\theta_{\tau-1})$ and $m_1 = \theta_{\tau-1} - \eta_\tau \frac{(n_\tau-1)\mu_g(\theta_{\tau-1}) + \nabla_\theta \ell(\theta_{\tau-1}; z^*)}{n_\tau}$.
    The covariances are $V_0 = \frac{\eta_\tau^2 \Sigma_g}{n_\tau}$ and $V_1 = \frac{\eta_\tau^2 (n_\tau-1)\Sigma_g}{n_\tau^2}$.

    \textbf{Mean difference.}
    \begin{equation*}
        m_1 - m_0 = -\frac{\eta_\tau}{n_\tau}(\nabla_\theta \ell(\theta_{\tau-1}; z^*) - \mu_g(\theta_{\tau-1})) = -\frac{\eta_\tau \delta_g}{n_\tau}.
    \end{equation*}

    \textbf{Log likelihood ratio.}
    With $N = \theta_\tau - m_0$, expand the quadratic terms and substitute $V_0^{-1} = \frac{n_\tau}{\eta_\tau^2}\Sigma_g^{-1}$. Collecting terms yields the stated result.
\end{proof}

\begin{theorem}[Distribution of the Log Likelihood Ratio]
    \label{thm:sgd-log-lr-distribution}
    Under $\mathbf{H_0}$:
    \begin{equation*}
        N \mid \theta_{\tau-1}, \mathbf{H_0} \sim \mathcal{N}\left(0, \frac{\eta_\tau^2 \Sigma_g}{n_\tau}\right).
    \end{equation*}
    Under $\mathbf{H_1^\tau}$:
    \begin{equation*}
        N \mid \theta_{\tau-1}, \mathbf{H_1^\tau} \sim \mathcal{N}\left(-\frac{\eta_\tau \delta_g}{n_\tau}, \frac{\eta_\tau^2(n_\tau-1)\Sigma_g}{n_\tau^2}\right).
    \end{equation*}
\end{theorem}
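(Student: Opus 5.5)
The approach is to read off the conditional law of $N$ from the conditional laws of $\theta_\tau$ given $\theta_{\tau-1}$ in Lemma~\ref{lem:sgd-h0} and Lemma~\ref{lem:sgd-h1}. The key observation is that $N = \theta_\tau - \theta_{\tau-1} + \eta_\tau \mu_g(\theta_{\tau-1})$ is an affine function of $\theta_\tau$ once we condition on $\theta_{\tau-1}$. The translation $-\theta_{\tau-1} + \eta_\tau\mu_g(\theta_{\tau-1})$ is a deterministic function of the conditioning variable, and the linear part is the identity. Hence, under either hypothesis, $N$ is Gaussian with the same covariance as $\theta_\tau$, and its mean is shifted by that translation.

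\textbf{Under $\mathbf{H_0}$.} Lemma~\ref{lem:sgd-h0} at $t=\tau$ gives conditional mean $m_0 = \theta_{\tau-1} - \eta_\tau\mu_g$ and covariance $\eta_\tau^2\Sigma_g/n_\tau$. Since $N = \theta_\tau - m_0$, the conditional mean of $N$ is $0$ and the covariance is unchanged. Equivalently, one can write $N = -\eta_\tau(g_\tau - \mu_g)$ and use the Gaussian-gradient assumption directly.

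\textbf{Under $\mathbf{H_1^\tau}$.} Lemma~\ref{lem:sgd-h1} gives conditional mean $m_1$. Subtracting $m_0$ gives the mean difference $m_1 - m_0 = -\eta_\tau\delta_g/n_\tau$, which was already computed in the proof of Theorem~\ref{thm:sgd-log-lr-app}. The covariance is $\eta_\tau^2(n_\tau-1)\Sigma_g/n_\tau^2$, again unchanged by the translation.

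There is no real obstacle here. The only care needed is to note that $\mu_g$ and $\Sigma_g$ are evaluated at $\theta_{\tau-1}$, so conditioning on $\theta_{\tau-1}$ makes the shift deterministic. One should also check the sign convention: $N$ measures $\theta_\tau$ relative to its null mean, so the target pushes the update in the direction $-\eta_\tau\delta_g$.
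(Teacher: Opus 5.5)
Your proposal is correct and matches the paper's proof: the paper also writes $N = \theta_\tau - m_0$ with $m_0 = \theta_{\tau-1} - \eta_\tau \mu_g(\theta_{\tau-1})$, reads off $N \sim \mathcal{N}(0, V_0)$ under $\mathbf{H_0}$ and $N \sim \mathcal{N}(m_1 - m_0, V_1)$ under $\mathbf{H_1^\tau}$, and substitutes $m_1 - m_0 = -\eta_\tau\delta_g/n_\tau$ from the log-LR computation. Your added remarks, namely that the shift is deterministic given $\theta_{\tau-1}$ and that $N = -\eta_\tau(g_\tau - \mu_g)$, are accurate and make explicit the conditioning the paper leaves implicit.
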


\begin{proof}
    Under $\mathbf{H_0}$, $\theta_\tau \mid \theta_{\tau-1} \sim \mathcal{N}(m_0, V_0)$ with $m_0 = \theta_{\tau-1} - \eta_\tau \mu_g(\theta_{\tau-1})$. Since $N = \theta_\tau - m_0$, we have $N \sim \mathcal{N}(0, V_0)$.

    Under $\mathbf{H_1^\tau}$, $\theta_\tau \mid \theta_{\tau-1} \sim \mathcal{N}(m_1, V_1)$. Thus $N = \theta_\tau - m_0 \sim \mathcal{N}(m_1 - m_0, V_1)$.
    Recalling that $m_1 - m_0 = -\frac{\eta_\tau \delta_g}{n_\tau}$, we have the stated mean and covariance.
\end{proof}

\subsection{Application to Linear Regression}

For linear regression with squared loss $\ell(\theta; (x,y)) = \frac{1}{2}(y - \theta^\top x)^2$ and Gaussian covariates $x \sim \cN(0, \Sigma_x)$:

\begin{lemma}[Mean Gradient]
    \label{lem:linreg-mean-grad}
    Let $\Delta = \theta - \theta^*$. The mean gradient is $\mu_g(\theta) = \Sigma_x \Delta$.
\end{lemma}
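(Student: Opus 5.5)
The plan is to compute the population gradient $\mu_g(\theta) = \EE_{(x,y)}[\nabla_\theta \ell(\theta;(x,y))]$ directly. This requires fixing the data model, which the statement leaves implicit: I will take the standard well-specified model $y = x^\top\theta^* + \epsilon$, with $\epsilon$ zero-mean and independent of $x$. This assumption is what gives $\theta^*$ its meaning in $\Delta = \theta - \theta^*$.

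First, differentiate the squared loss:
\[
\nabla_\theta \ell(\theta;(x,y)) = -(y - \theta^\top x)\,x = (x^\top \theta - y)\,x .
\]
Second, substitute the model for $y$ to get
\[
x^\top\theta - y = x^\top(\theta-\theta^*) - \epsilon = x^\top\Delta - \epsilon ,
\]
so that $\nabla_\theta \ell = x x^\top \Delta - \epsilon x$.

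Third, take expectations term by term. Here $\Delta$ is deterministic, since we condition on $\theta = \theta_{t-1}$, and the fresh batch sample $(x,y)$ is independent of it. This gives
\[
\EE[x x^\top]\Delta - \EE[\epsilon x] = \Sigma_x \Delta - 0 .
\]
The first term uses $\EE[xx^\top] = \Sigma_x$, because $x$ is centered. The second uses independence of $\epsilon$ and $x$ together with $\EE[\epsilon]=0$; in fact $\EE[\epsilon\mid x]=0$ already suffices.

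Gaussianity of $x$ plays no role in this step; it would matter only for computing $\Sigma_g$ later, via fourth moments. There is no real obstacle beyond stating the data-generating assumption explicitly. If the paper instead intends $\theta^*$ to be the population risk minimizer without a well-specified model, the same conclusion follows from the normal equations $\EE[xy] = \Sigma_x\theta^*$. Then $\EE[(x^\top\theta - y)x] = \Sigma_x\theta - \Sigma_x\theta^* = \Sigma_x\Delta$, and I would present this version as the more general argument.
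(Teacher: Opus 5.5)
Your proposal is correct and follows the paper's proof: under the model $y = \theta^{*\top}x + \epsilon$ you write the gradient as $(\Delta^\top x - \epsilon)x$ and take expectations. The first term gives $\EE[xx^\top]\Delta = \Sigma_x\Delta$, and the noise term vanishes because $\epsilon$ is zero-mean and independent of $x$. Your normal-equations remark for a misspecified model is a valid, slightly more general variant that the paper does not need.
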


\begin{proof}
    Let $\nabla_\theta \ell(\theta; (x,y)) = (\theta^\top x - y)x = (\theta^\top x - \theta^{*\top}x - \epsilon)x = (\Delta^\top x - \epsilon)x$.
    Taking expectations:
    \begin{equation*}
        \mu_g(\theta) = \EE[(\Delta^\top x)x] - \EE[\epsilon]\EE[x] = \EE[xx^\top]\Delta = \Sigma_x \Delta.
    \end{equation*}
\end{proof}

\begin{lemma}[Gradient Covariance]
    \label{lem:linreg-cov-grad}
    The covariance of the per-sample gradient is:
    \begin{equation*}
        \Sigma_g(\theta) = (\sigma_\epsilon^2 + \Delta^\top \Sigma_x \Delta)\Sigma_x + \Sigma_x \Delta \Delta^\top \Sigma_x.
    \end{equation*}
\end{lemma}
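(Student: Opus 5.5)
We compute the covariance of the per-sample gradient as $\Sigma_g(\theta)=\EE[gg^\top]-\mu_g\mu_g^\top$. Here $g=(\Delta^\top x-\epsilon)x$, with $x\sim\cN(0,\Sigma_x)$ independent of the noise $\epsilon$, where $\EE[\epsilon]=0$ and $\EE[\epsilon^2]=\sigma_\epsilon^2$. We reuse the decomposition from Lemma~\ref{lem:linreg-mean-grad}, which already gives $\mu_g=\Sigma_x\Delta$. So only the second moment has to be computed.

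\textbf{Expanding the outer product.} We write
\begin{equation*}
gg^\top=(\Delta^\top x)^2xx^\top-2\epsilon(\Delta^\top x)xx^\top+\epsilon^2xx^\top .
\end{equation*}
The middle term has zero expectation, by independence of $\epsilon$ and $x$ together with $\EE[\epsilon]=0$. The last term has expectation $\sigma_\epsilon^2\Sigma_x$.

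\textbf{The fourth-moment term.} The main step is computing $\EE[(\Delta^\top x)^2xx^\top]=\EE[xx^\top\Delta\Delta^\top xx^\top]$. We use Isserlis' (Wick's) theorem for centered Gaussians: $\EE[x_ix_jx_kx_l]=\Sigma_{ij}\Sigma_{kl}+\Sigma_{ik}\Sigma_{jl}+\Sigma_{il}\Sigma_{jk}$, writing $\Sigma=\Sigma_x$ for short. Contracting the middle indices against $A=\Delta\Delta^\top$ gives the standard identity
\begin{equation*}
\EE[xx^\top Axx^\top]=\mathrm{tr}(A\Sigma_x)\Sigma_x+2\Sigma_xA\Sigma_x .
\end{equation*}
This requires $A$ to be symmetric, which holds here. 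With $A=\Delta\Delta^\top$, we get $(\Delta^\top\Sigma_x\Delta)\Sigma_x+2\Sigma_x\Delta\Delta^\top\Sigma_x$.

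\textbf{Assembling.} We subtract $\mu_g\mu_g^\top=\Sigma_x\Delta\Delta^\top\Sigma_x$. One copy of the rank-one term cancels, which leaves
\begin{equation*}
\Sigma_g(\theta)=(\sigma_\epsilon^2+\Delta^\top\Sigma_x\Delta)\Sigma_x+\Sigma_x\Delta\Delta^\top\Sigma_x ,
\end{equation*}
as claimed.

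The only delicate point is applying the Wick contraction correctly, in particular getting the factor $2$ on the rank-one term. We would sanity-check this in one dimension: with $\Sigma_x=s$, the formula gives $\EE[x^4]\Delta^2=3s^2\Delta^2$, which matches $s^2\Delta^2+2s^2\Delta^2$.
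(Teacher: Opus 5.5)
Your proposal is correct and follows the paper's proof essentially step for step. Like the paper, you drop the cross term using independence and $\EE[\epsilon]=0$, evaluate $\EE[(\Delta^\top x)^2xx^\top]=(\Delta^\top\Sigma_x\Delta)\Sigma_x+2\Sigma_x\Delta\Delta^\top\Sigma_x$ via Isserlis' theorem, and subtract $\mu_g\mu_g^\top=\Sigma_x\Delta\Delta^\top\Sigma_x$ (your explicit index contraction and one-dimensional check of the factor $2$ are details the paper leaves out).
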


\begin{proof}
    The second moment is $\EE[gg^\top] = \EE[(\Delta^\top x - \epsilon)^2 xx^\top]$. Since $\epsilon \perp x$ and $\EE[\epsilon]=0$:
    \begin{equation*}
        \EE[gg^\top] = \EE[(\Delta^\top x)^2 xx^\top] + \sigma_\epsilon^2 \Sigma_x.
    \end{equation*}
    Using Isserlis' theorem for the fourth moment of Gaussian $x$:
    \begin{equation*}
        \EE[(\Delta^\top x)^2 xx^\top] = (\Delta^\top \Sigma_x \Delta)\Sigma_x + 2\Sigma_x \Delta \Delta^\top \Sigma_x.
    \end{equation*}
    The covariance is $\Sigma_g = \EE[gg^\top] - \mu_g\mu_g^\top$. Subtracting $\mu_g\mu_g^\top = \Sigma_x \Delta \Delta^\top \Sigma_x$ yields the result.
\end{proof}

\begin{corollary}[Gradient Statistics at Optimum]
    At $\theta = \theta^*$, we have $\Delta = 0$, so $\mu_g(\theta^*) = 0$ and $\Sigma_g(\theta^*) = \sigma_\epsilon^2 \Sigma_x$.
\end{corollary}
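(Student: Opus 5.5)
The statement to prove is the corollary at the end: at $\theta=\theta^*$ we have $\mu_g(\theta^*)=0$ and $\Sigma_g(\theta^*)=\sigma_\epsilon^2\Sigma_x$. The plan is to specialise Lemma~\ref{lem:linreg-mean-grad} and Lemma~\ref{lem:linreg-cov-grad}, both of which are stated for a general $\theta$ with $\Delta=\theta-\theta^*$. Setting $\theta=\theta^*$ gives $\Delta=0$, and each quantity should then collapse by direct substitution.

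\emph{Mean.} From Lemma~\ref{lem:linreg-mean-grad}, $\mu_g(\theta^*)=\Sigma_x\cdot 0=0$.

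\emph{Covariance.} In Lemma~\ref{lem:linreg-cov-grad}, take $\Delta=0$:
\begin{itemize}
\item the scalar $\Delta^\top\Sigma_x\Delta$ vanishes;
\item the rank-one term $\Sigma_x\Delta\Delta^\top\Sigma_x$ vanishes;
\item only $\sigma_\epsilon^2\Sigma_x$ remains.
\end{itemize}

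As a consistency check, I would also verify this from first principles. At the optimum the per-sample gradient is $-\epsilon x$. Since $\epsilon\perp x$ and $\EE[\epsilon]=0$, its mean is zero. Its second moment is $\EE[\epsilon^2]\EE[xx^\top]=\sigma_\epsilon^2\Sigma_x$, which equals the covariance because the mean is zero.

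I expect no step to be a real obstacle. The only points needing care are that the noise model is $y=\theta^{*\top}x+\epsilon$ with independent, zero-mean, variance-$\sigma_\epsilon^2$ noise, and that $\Sigma_g$ denotes the per-sample gradient covariance, consistent with the paper's scaling convention.
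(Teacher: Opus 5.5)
Your proposal is correct and matches the paper's implicit argument: the corollary follows by substituting $\Delta = 0$ into Lemma~\ref{lem:linreg-mean-grad} and Lemma~\ref{lem:linreg-cov-grad}. Your first-principles check through the per-sample gradient $-\epsilon x$ is also valid, and it needs only independence of $\epsilon$ and $x$ together with $\EE[\epsilon]=0$, not the Isserlis fourth-moment computation.
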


\begin{lemma}[Target Gradient Deviation]
    At $\theta = \theta^*$, the target gradient is $\nabla_\theta \ell(\theta^*; (x^*, y^*)) = -\epsilon^* x^*$ where $\epsilon^*$ is the target residual. The deviation from the population mean is $\delta_g = -\epsilon^* x^*$.
\end{lemma}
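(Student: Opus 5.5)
We need to prove the final statement: the target gradient at $\theta^*$ is $-\epsilon^* x^*$, and $\delta_g = -\epsilon^* x^*$.

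The plan is a direct computation that reuses the gradient formula from the proof of Lemma~\ref{lem:linreg-mean-grad} and the corollary on gradient statistics at the optimum.

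\textbf{Step 1: the target gradient.} Write the target as $(x^*, y^*)$ and define its residual relative to the true parameter by $\epsilon^* \triangleq y^* - \theta^{*\top} x^*$, so that $y^* = \theta^{*\top}x^* + \epsilon^*$. This is the same decomposition used for population samples, but with no distributional assumption on $\epsilon^*$. For the squared loss $\ell(\theta;(x,y)) = \tfrac12 (y-\theta^\top x)^2$, the per-sample gradient is
\[
\nabla_\theta \ell(\theta;(x^*,y^*)) = (\theta^\top x^* - y^*)\,x^* = (\Delta^\top x^* - \epsilon^*)\,x^*, \qquad \Delta = \theta - \theta^*,
\]
exactly as in the proof of Lemma~\ref{lem:linreg-mean-grad}. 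Evaluating at $\theta = \theta^*$ gives $\Delta = 0$, so $\nabla_\theta \ell(\theta^*;(x^*,y^*)) = -\epsilon^* x^*$.

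\textbf{Step 2: the deviation.} By definition (Theorem~\ref{thm:sgd-log-lr}), $\delta_g = \nabla_\theta\ell(\theta;z^*) - \mu_g(\theta)$. The corollary on gradient statistics at the optimum, which follows from Lemma~\ref{lem:linreg-mean-grad} with $\Delta=0$, gives $\mu_g(\theta^*) = \Sigma_x \cdot 0 = 0$. Hence $\delta_g = -\epsilon^* x^* - 0 = -\epsilon^* x^*$.

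There is no real obstacle here. The only point needing care is a convention: the statement reads $\theta_{\tau-1} = \theta^*$. One should therefore state that $\epsilon^*$ is defined via $\theta^*$ (the population optimum) and not through a noise model, since the target is crafted rather than sampled. One might also remark that plugging in the corollary gives $m^* = (\epsilon^*)^2 \, x^{*\top}\Sigma_x^{-1}x^*/\sigma_\epsilon^2$. This shows that detectability scales with the squared target residual and the Mahalanobis norm of $x^*$ under $\Sigma_x$.
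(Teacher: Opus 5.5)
Your proof is correct and matches the paper's argument. Like the paper, you evaluate the squared-loss gradient $(\theta^{*\top}x^* - y^*)x^*$ at the optimum to get $-\epsilon^* x^*$, then use $\mu_g(\theta^*) = 0$ to conclude $\delta_g = -\epsilon^* x^*$; your explicit definition $\epsilon^* \triangleq y^* - \theta^{*\top}x^*$ is a useful clarification the paper leaves implicit.
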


\begin{proof}
    At optimum, $\nabla_\theta \ell = (\theta^{*\top}x^* - y^*)x^* = -\epsilon^* x^*$. Since $\mu_g(\theta^*) = 0$, $\delta_g = -\epsilon^* x^*$.
\end{proof}

\begin{theorem}[Detectability at Optimum]
    \label{thm:sgd-detectability}
    At $\theta = \theta^*$, the Mahalanobis distance of the target gradient is:
    \begin{equation*}
        m^* = \frac{(\epsilon^*)^2}{\sigma_\epsilon^2} \cdot (x^*)^\top \Sigma_x^{-1} x^*.
    \end{equation*}
    This factors as product of a \emph{label outlier score} $(\epsilon^*/\sigma_\epsilon)^2$ and a \emph{feature leverage score} $(x^*)^\top \Sigma_x^{-1} x^*.$
\end{theorem}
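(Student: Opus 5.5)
Since $m^* = \delta_g^\top \Sigma_g^{-1}\delta_g$ by the definition in Theorem~\ref{thm:sgd-log-lr-app}, the plan is to substitute the optimum-specific gradient statistics and simplify. The needed ingredients are the corollary on gradient statistics at the optimum, which gives $\mu_g(\theta^*)=0$ and $\Sigma_g(\theta^*)=\sigma_\epsilon^2\Sigma_x$, and the target-gradient lemma, which gives $\delta_g=-\epsilon^* x^*$.

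The key step is inverting the covariance. Because $\Sigma_x$ is positive definite and $\sigma_\epsilon^2>0$, we have $\Sigma_g^{-1} = \sigma_\epsilon^{-2}\Sigma_x^{-1}$. Substituting into the definition of $m^*$ gives
\begin{equation*}
m^* = (-\epsilon^* x^*)^\top \sigma_\epsilon^{-2}\Sigma_x^{-1}(-\epsilon^* x^*) = \frac{(\epsilon^*)^2}{\sigma_\epsilon^2}\,(x^*)^\top\Sigma_x^{-1}x^*,
\end{equation*}
using only that $\epsilon^*$ is a scalar, so the two sign flips cancel and the factor $(\epsilon^*)^2$ pulls out of the quadratic form.

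The factorization into a label outlier score and a feature leverage score is then immediate from this display. The only point needing care, which is not really an obstacle, is nondegeneracy: we need $\sigma_\epsilon^2>0$ and $\Sigma_x \succ 0$ so that $\Sigma_g(\theta^*)$ is invertible, as Theorem~\ref{thm:sgd-log-lr-app} implicitly requires. The residual $\epsilon^*$ is treated as fixed, being defined by the target point.
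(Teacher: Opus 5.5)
Your proposal is correct and is essentially the paper's own proof: substitute $\delta_g=-\epsilon^* x^*$ and $\Sigma_g(\theta^*)=\sigma_\epsilon^2\Sigma_x$ into $m^*=\delta_g^\top\Sigma_g^{-1}\delta_g$ and pull out the scalar factor. Your remark that $\sigma_\epsilon^2>0$ and $\Sigma_x\succ 0$ are needed for invertibility is a harmless addition that the paper leaves implicit.
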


\begin{proof}
    Substituting into $m^* = \delta_g^\top \Sigma_g(\theta^*)^{-1} \delta_g$:
    \begin{equation*}
        m^* = (-\epsilon^* x^*)^\top (\sigma_\epsilon^2 \Sigma_x)^{-1} (-\epsilon^* x^*) = \frac{(\epsilon^*)^2}{\sigma_\epsilon^2} (x^*)^\top \Sigma_x^{-1} x^*.
    \end{equation*}
\end{proof}
Samples with large residuals (relative to population noise) and unusual feature vectors (relative to population covariance) are most detectable.

\section{Additional Experiments}
\label{app:experiments}

This appendix provides additional experimental results supporting the main paper. Section~\ref{app:exp-disjoint} reports the empirical-mean illustrations under the disjoint-batch protocol that exactly matches the $\SeMIstar$ assumptions. Section~\ref{app:exp-unknown-tau} extends the unknown-$\tau$ comparison. Section~\ref{app:exp-sgd} validates $\SeMISGD$ on linear regression. Section~\ref{app:exp-multivariate} reports the multivariate Gaussian setting. Section~\ref{app:exp-dpsgd-roc} reports a complementary DP-SGD pretrain-then-finetune protocol on three datasets. Section~\ref{app:exp-tau-selection} quantifies the gap between averaging over $\tau$ and selecting the best $\tau$ post hoc.

\paragraph{Compute.} The main DP-SGD audit on Fashion-MNIST and the three pretrain-then-finetune audits ran on a single NVIDIA GeForce RTX 2080 Ti: approximately 48 hours for the main audit, and approximately 5 hours each for the Fashion-MNIST, CIFAR-10, and Purchase-100 fine-tuning runs. The empirical-mean illustrations (Apps.~\ref{app:exp-disjoint}--\ref{app:exp-unknown-tau}), the SGD validation on linear regression (App.~\ref{app:exp-sgd}), and the multivariate experiment (App.~\ref{app:exp-multivariate}) run on CPU in minutes.

\subsection{Disjoint-Batch Protocol}
\label{app:exp-disjoint}

The numerical illustrations in Section~\ref{sec:sequential_test} use a subsampled-with-replacement protocol that departs from the i.i.d.\ disjoint-batch assumption underlying $\SeMIstar$. Here we report the matching figures under the disjoint protocol that exactly satisfies the test assumptions: $T = 10$ independent batches of size $n = 10$ drawn i.i.d.\ from $\cN(0,1)$, target $z^*$ inserted at $\tau = 5$ with Mahalanobis distance $5$, $R = 50\,000$ runs.

\begin{figure}[h!]
    \centering
    \begin{subfigure}{0.49\textwidth}
        \centering
        \includegraphics[width=\linewidth]{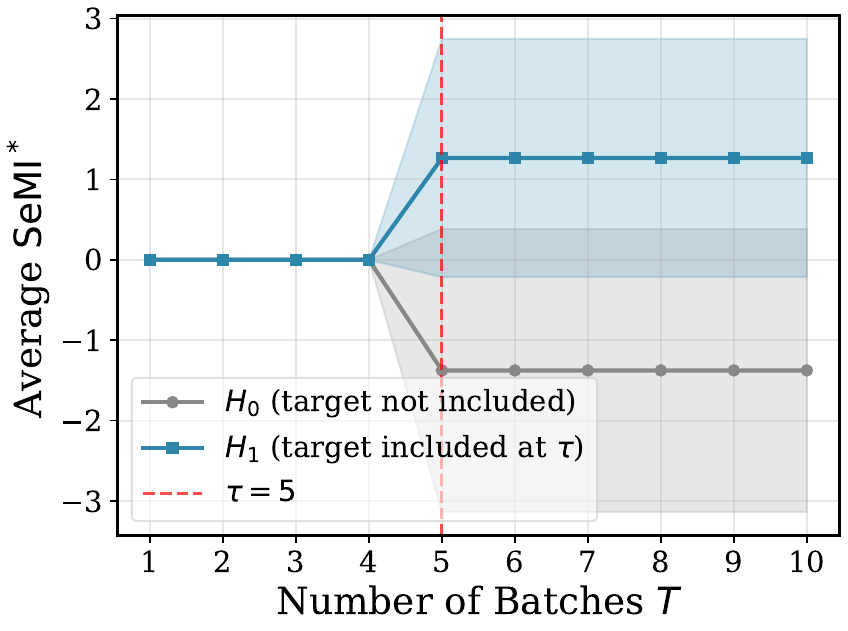}
        \caption{$\SeMIstar$ statistic}
    \end{subfigure}
    \hfill
    \begin{subfigure}{0.49\textwidth}
        \centering
        \includegraphics[width=\linewidth]{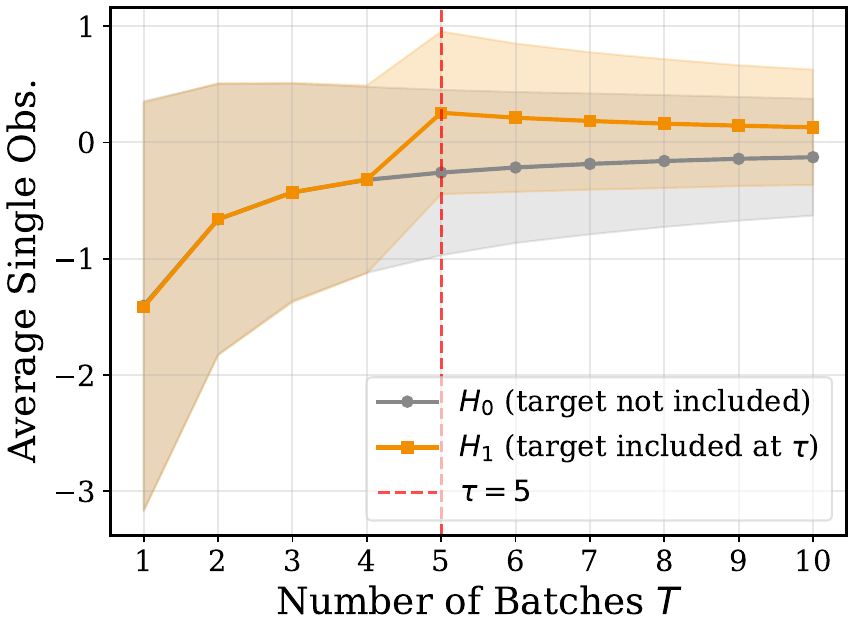}
        \caption{Final Obs.\ statistic}
    \end{subfigure}
    \caption{Log-likelihood ratio under the disjoint-batch protocol ($n = 10$, $\tau = 5$).}
    \label{fig:exp1_disjoint_lr}
\end{figure}

\begin{figure}[h!]
    \centering
    \begin{subfigure}{0.48\textwidth}
        \centering
        \includegraphics[width=\linewidth]{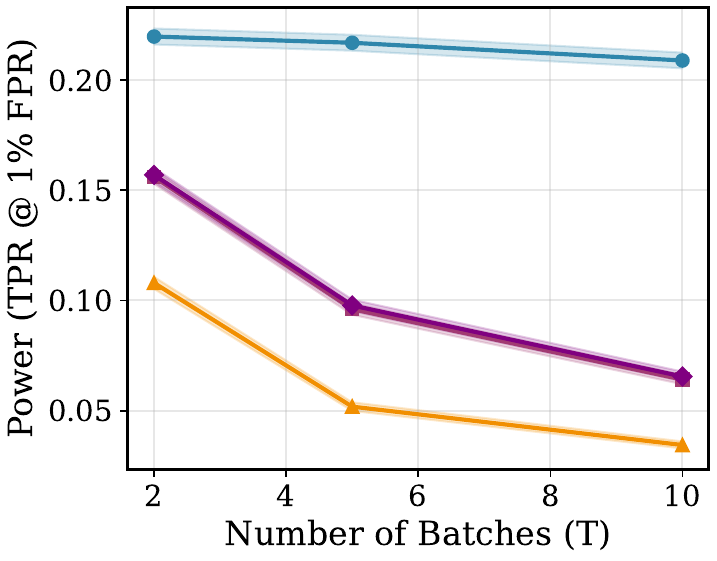}
        \caption{Power vs $T$}
    \end{subfigure}
    \hfill
    \begin{subfigure}{0.48\textwidth}
        \centering
        \includegraphics[width=\linewidth]{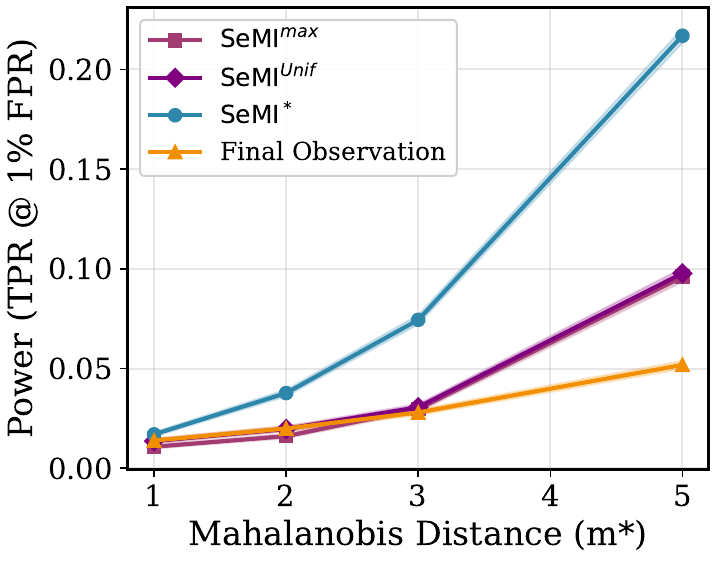}
        \caption{Power vs $z^\ast$ strength}
    \end{subfigure}
    \caption{Power comparison under the disjoint-batch protocol ($n = 10$, TPR at 1\% FPR). The qualitative behavior matches the subsampled-with-replacement protocol used in the main text.}
    \label{fig:exp2_disjoint_power}
\end{figure}

\subsection{Unknown Insertion Time}
\label{app:exp-unknown-tau}

This section provides additional results for the sequential tests when the insertion time $\tau$ is unknown. We use $n=10$, $\mu=0$, $\sigma=1$, and target $z^*$ with Mahalanobis distance 5. Power comparisons as a function of $T$ and signal strength appear in the main text (Figure~\ref{fig:exp2_advantage}).

Figure~\ref{fig:exp2_roc} shows ROC curves for different values of $T$. As $T$ increases, the gap between $\SeMIstar$ and the other methods widens, while $\SeMIunif$ and $\SeMImax$ maintain an advantage over Final Observation.

\begin{figure}[h!]
    \centering
    \begin{subfigure}{0.48\textwidth}
        \includegraphics[width=\linewidth]{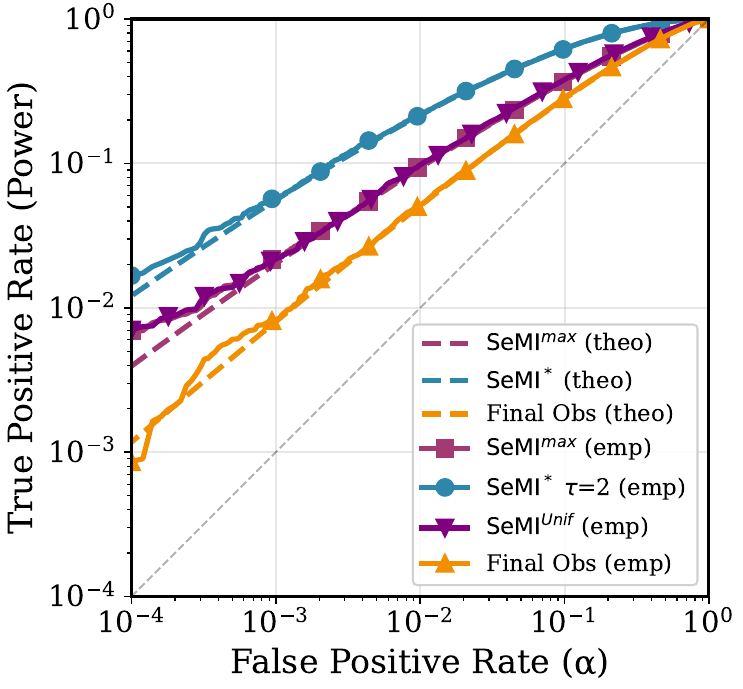}
        \caption{$T=5$}
    \end{subfigure}
    \begin{subfigure}{0.48\textwidth}
        \includegraphics[width=\linewidth]{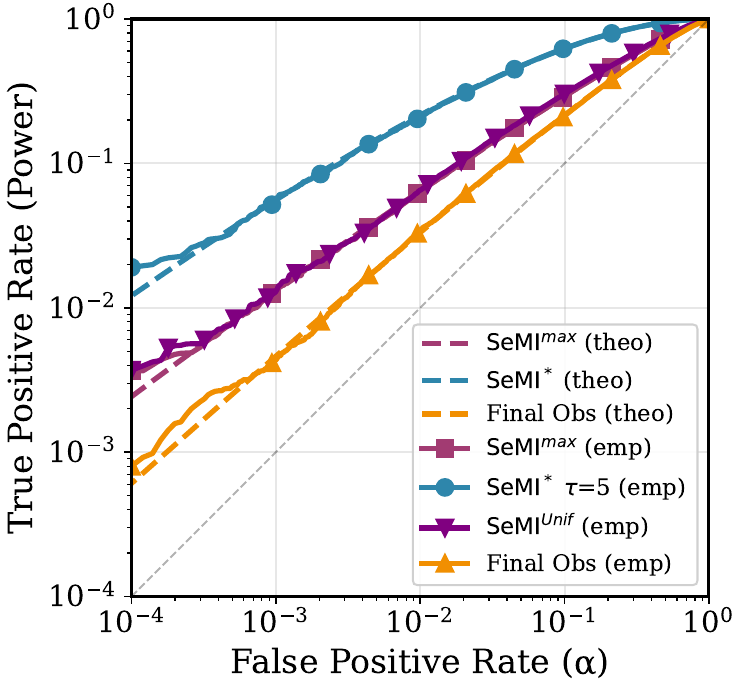}
        \caption{$T=10$}
    \end{subfigure}
    \caption{ROC curves comparing $\SeMIstar$, $\SeMIunif$, $\SeMImax$, and Final Observation for different $T$.}
    \label{fig:exp2_roc}
\end{figure}

\subsection{SGD Validation}
\label{app:exp-sgd}

We validate the extension to SGD (Section~\ref{sec:practical_attack}) using linear regression with $d=5$ features, $T=10$ updates, batch size $n=50$, and learning rate $\eta=0.05$.

Figure~\ref{fig:exp4_sgd}(a) shows ROC curves for insertion at $\tau=5$. $\SeMISGD$ outperforms the Delta and Back-Front heuristics from~\cite{jagielskiHowCombineMembershipInference2023}.

Figure~\ref{fig:exp4_sgd}(b) shows power as a function of insertion time $\tau$. The Back-Front statistic depends only on the endpoints $\theta_0$ and $\theta_T$, so its power is approximately invariant to where in the trajectory the target is inserted. The Delta statistic shows decreasing power as $\tau$ increases: early in training, gradient magnitudes are larger, producing stronger signals. $\SeMISGD$ maintains high power across all $\tau$.

\begin{figure}[h!]
    \centering
    \begin{subfigure}{0.48\textwidth}
        \includegraphics[width=\linewidth]{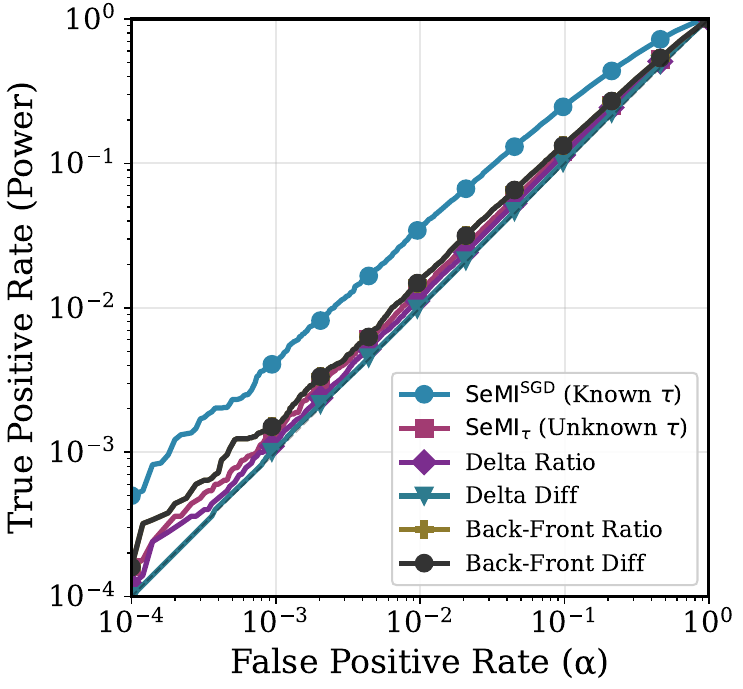}
        \caption{ROC Comparison ($\tau=5$)}
    \end{subfigure}
    \begin{subfigure}{0.48\textwidth}
        \includegraphics[width=\linewidth]{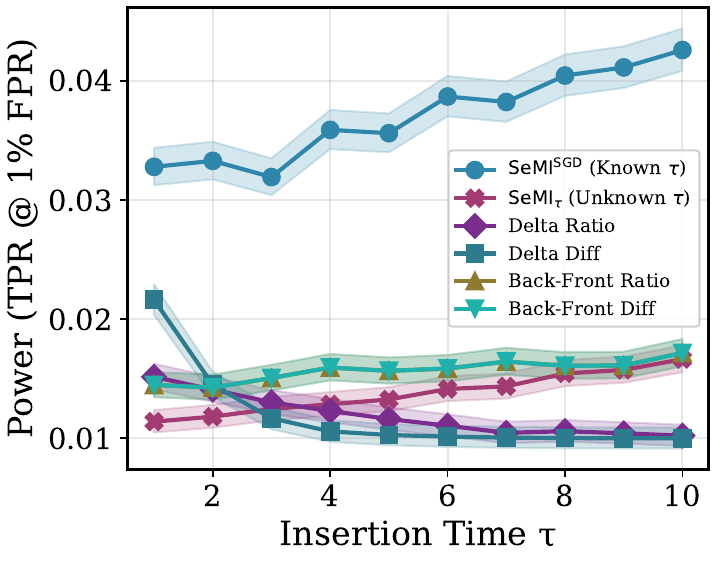}
        \caption{Power vs Insertion Time $\tau$}
    \end{subfigure}
    \caption{SGD validation on linear regression. (a) ROC curves. (b) Power vs $\tau$. $\SeMISGD$ maintains power across $\tau$ in this linear-regression setting; on the deep-model audit of Sec.~\ref{sec:experiments}, audit tightness instead grows with $\tau$ (Observation~4).}
    \label{fig:exp4_sgd}
\end{figure}

\subsection{Multivariate Setting}
\label{app:exp-multivariate}

We test the extension to $d$-dimensional Gaussian data (Appendix~\ref{app:multivariate}). Parameters: $n=10$, $T=5$, $\tau=3$, target $z^*$ with Mahalanobis distance 3 from the mean.

Figure~\ref{fig:exp7_multivariate}(a) displays the decision boundary in $d=2$. Figure~\ref{fig:exp7_multivariate}(b) shows that power remains constant as dimension $d$ increases from 2 to 50. For a fixed Mahalanobis distance, the test statistic depends on the projection onto the target direction, making power independent of ambient dimension.

\begin{figure}[h!]
    \centering
    \begin{subfigure}{0.48\textwidth}
        \includegraphics[width=\linewidth]{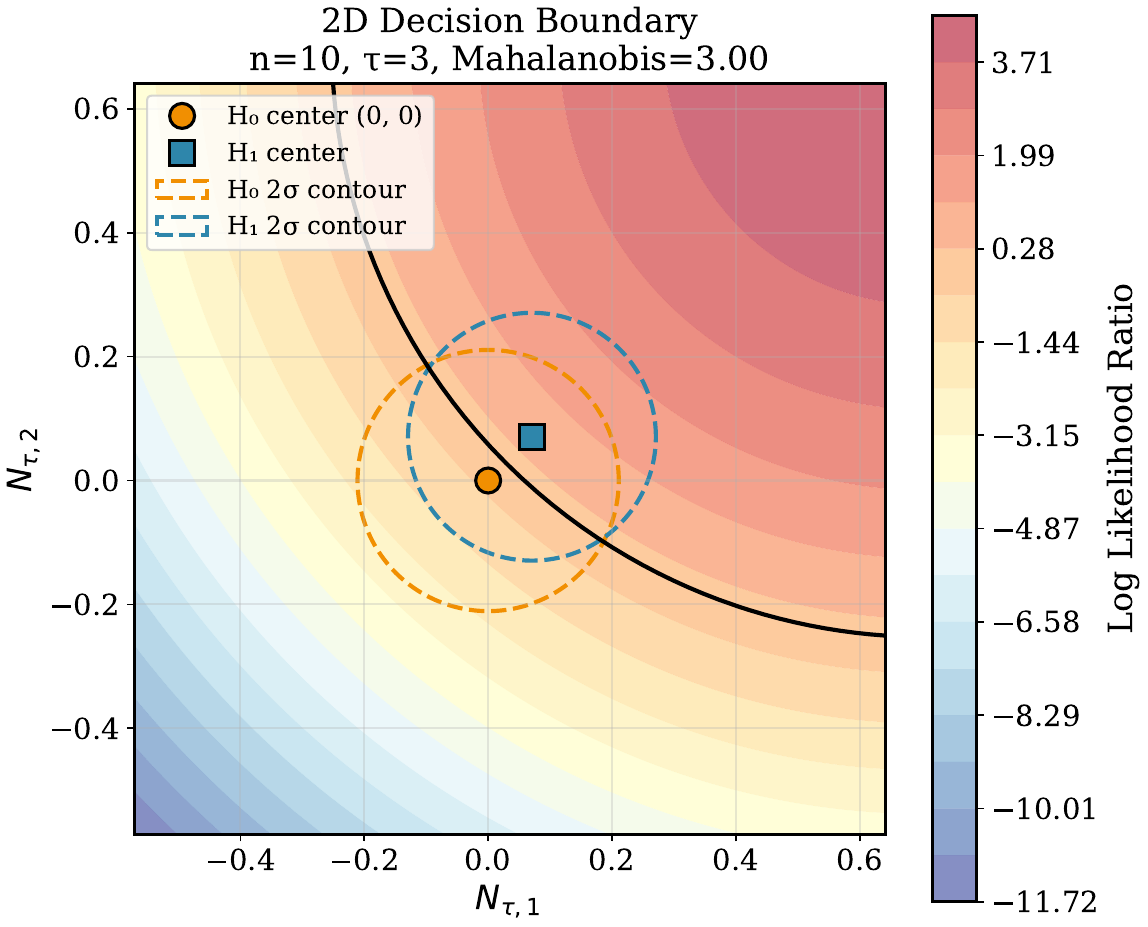}
        \caption{Decision Boundary ($d=2$)}
    \end{subfigure}
    \begin{subfigure}{0.48\textwidth}
        \includegraphics[width=\linewidth]{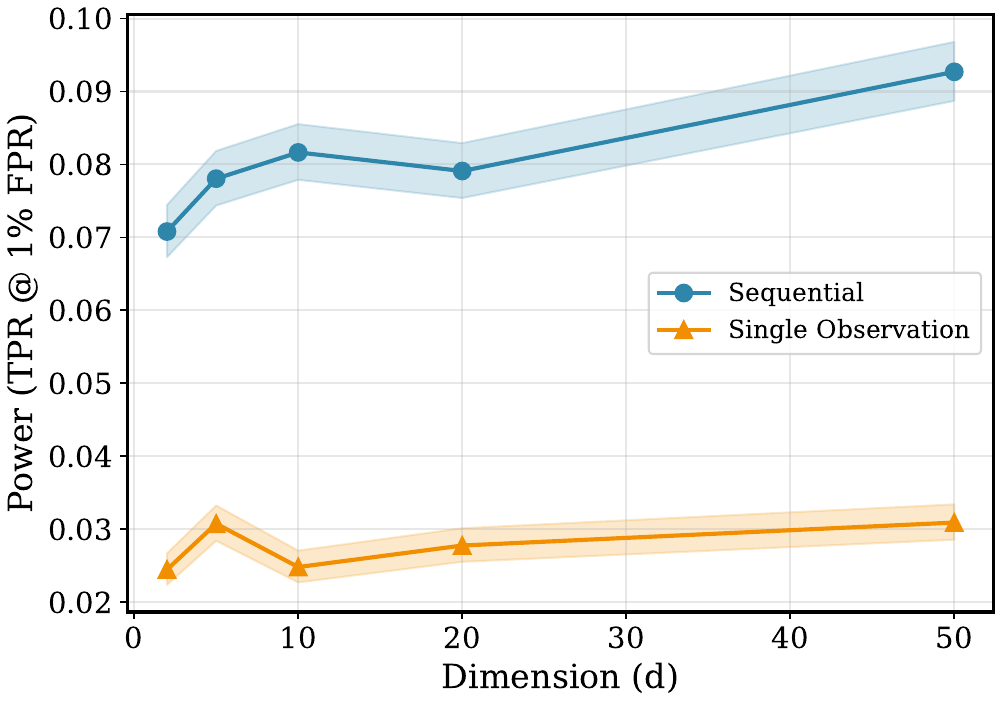}
        \caption{Power vs Dimension}
    \end{subfigure}
    \caption{Multivariate setting. (a) Decision boundary in 2D. (b) Power vs dimension $d$.}
    \label{fig:exp7_multivariate}
\end{figure}

\subsection{DP-SGD Fine-tuning Instantiation}
\label{app:exp-dpsgd-roc}

\textbf{Setup.} This is a separate protocol from the disjoint-phase audit of Section~\ref{sec:experiments}: instead of splitting one DP-SGD run into $T$ phases on disjoint subsets, we pretrain on a public split with SGD and then run $T$ DP-SGD updates on a private split, averaging audit bounds over $\tau \in \{1, \ldots, T\}$. We use batch size 64, $T = 10$ updates, and $\delta = 10^{-4}$. The auditor observes the parameter trajectory $(\theta_0, \ldots, \theta_T)$ and knows the insertion time $\tau$. For each audit, the target $z^*$ is selected from a candidate pool by taking the sample with the highest gradient Mahalanobis distance, estimated from a held-out reference set. We average results over $\tau \in \{1, \ldots, T\}$; Appendix~\ref{app:exp-tau-selection} shows that selecting the best $\tau$ post hoc yields tighter bounds, and how to choose $\tau$ a priori for a given architecture remains open. Three architectures are tested: multinomial logistic regression on Fashion-MNIST~\citep{xiao2017fashion}, a frozen pretrained VGG-16~\citep{simonyan2014very} with a single trainable layer on CIFAR-10~\citep{krizhevsky2009learning}, and a fully connected network with 128 hidden units on Purchase-100~\citep{shokri2017membership}.

\begin{figure}[h!]
    \centering
    \begin{subfigure}{0.32\textwidth}
        \includegraphics[width=\linewidth]{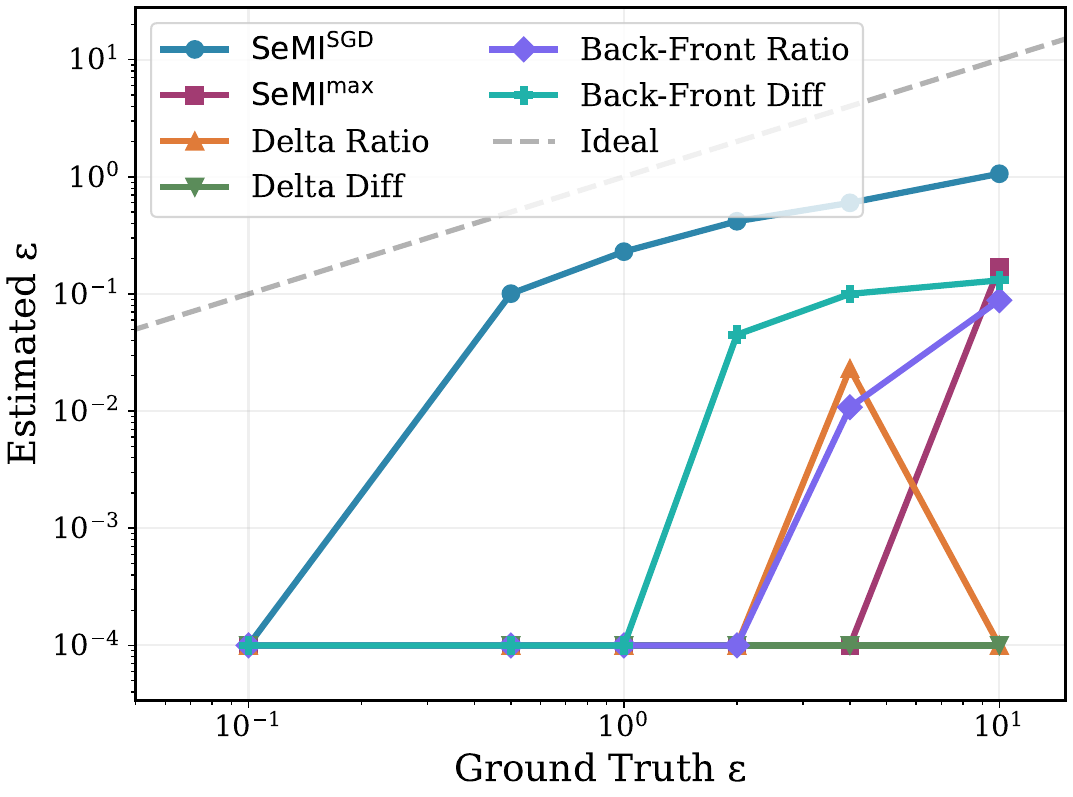}
        \caption{Fashion-MNIST}
    \end{subfigure}
    \hfill
    \begin{subfigure}{0.32\textwidth}
        \includegraphics[width=\linewidth]{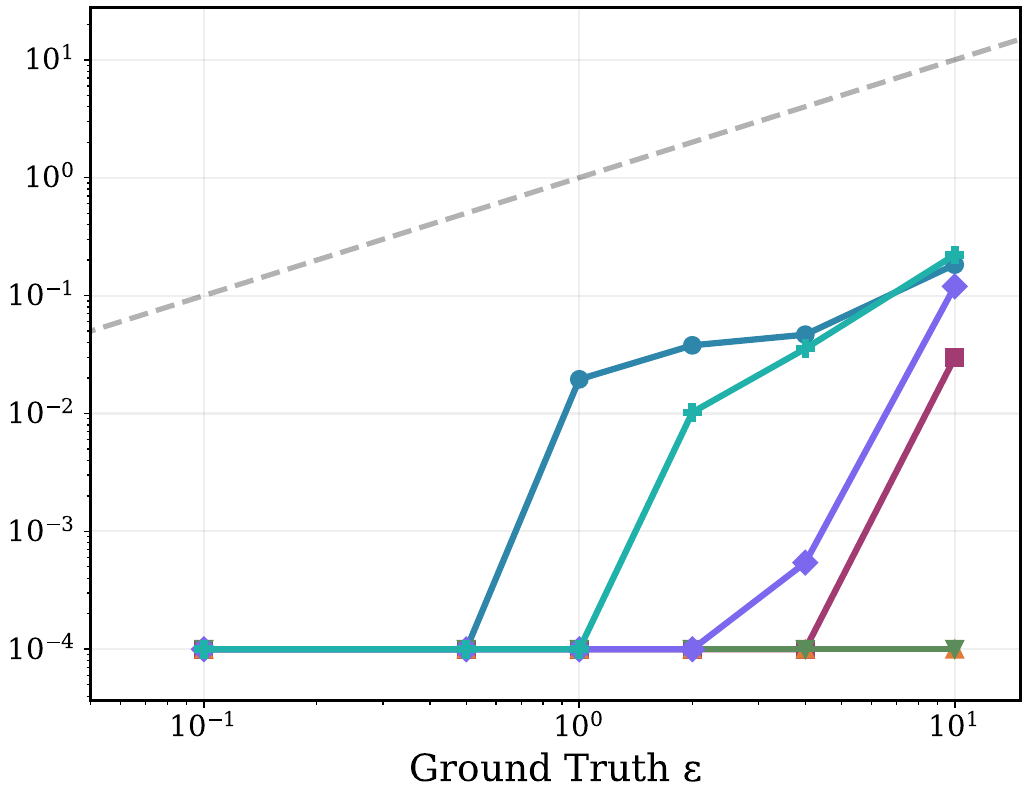}
        \caption{CIFAR-10}
    \end{subfigure}
    \hfill
    \begin{subfigure}{0.32\textwidth}
        \includegraphics[width=\linewidth]{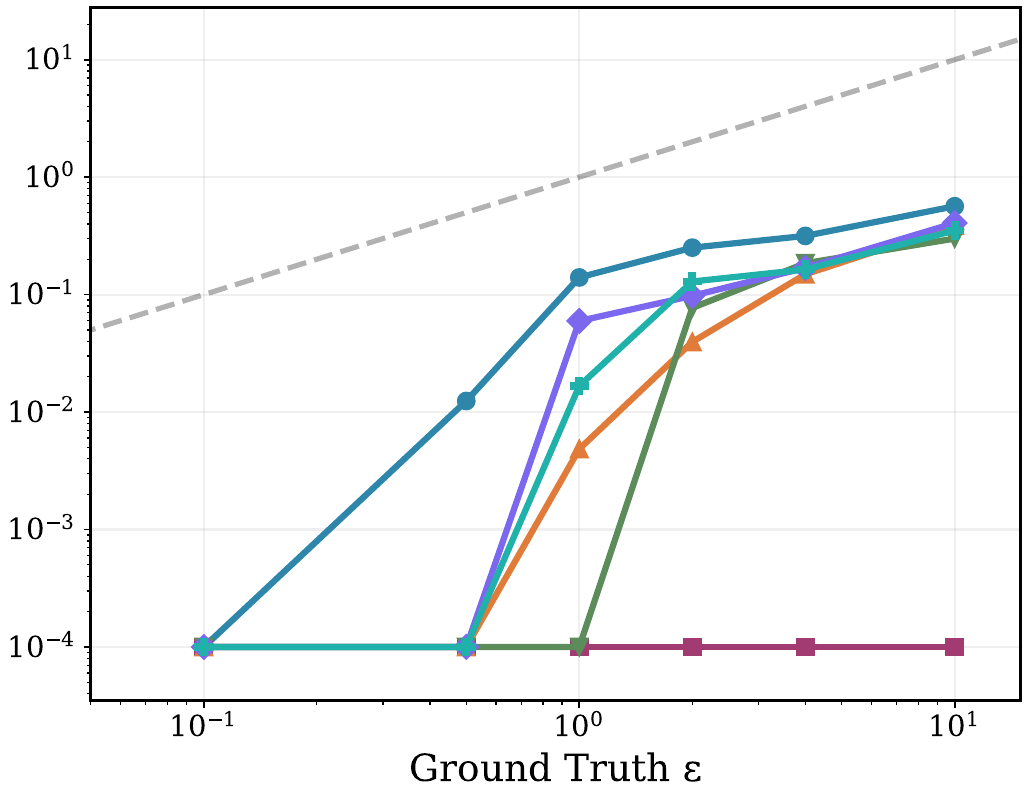}
        \caption{Purchase-100}
    \end{subfigure}
    \caption{Private fine-tuning with $T = 10$ DP-SGD updates and known $\tau$. Estimated $\varepsilon$ lower bound versus ground-truth $\varepsilon$, averaged over $\tau \in \{1, \ldots, T\}$. Closer to the diagonal is tighter. $\SeMISGD$ yields tighter lower bounds than loss-based heuristics across the privacy range.}
    \label{fig:exp14_epsilon}
\end{figure}

\textbf{$\varepsilon$ lower bounds.} Figure~\ref{fig:exp14_epsilon} shows the audit lower bound $\underline{\varepsilon}$ versus ground-truth $\varepsilon$ on the three datasets. On Fashion-MNIST, $\SeMISGD$ produces tighter lower bounds than every baseline across the privacy range. On CIFAR-10 and Purchase-100, $\SeMISGD$ matches or beats the baselines, with the clearest gains in low-privacy regimes.

\textbf{ROC curves.} Figure~\ref{fig:exp14_roc_fmnist} reports ROC curves on Fashion-MNIST at two privacy levels. Without DP noise, $\SeMISGD$ and $\SeMImax$ both achieve near-perfect discrimination (AUC $\approx 1.0$), while the heuristic baselines perform substantially worse (Back-Front AUC $\approx 0.80$, Delta AUC $\approx 0.65$). At $\varepsilon = 1.0$, all methods degrade due to the DP noise, but $\SeMISGD$ maintains an advantage (AUC $= 0.60$) over baselines (AUC $\approx 0.51$--$0.53$); the gap is most pronounced at low false positive rates, which drive audit tightness.

\begin{figure}[h!]
    \centering
    \begin{subfigure}{0.48\textwidth}
        \includegraphics[width=\linewidth]{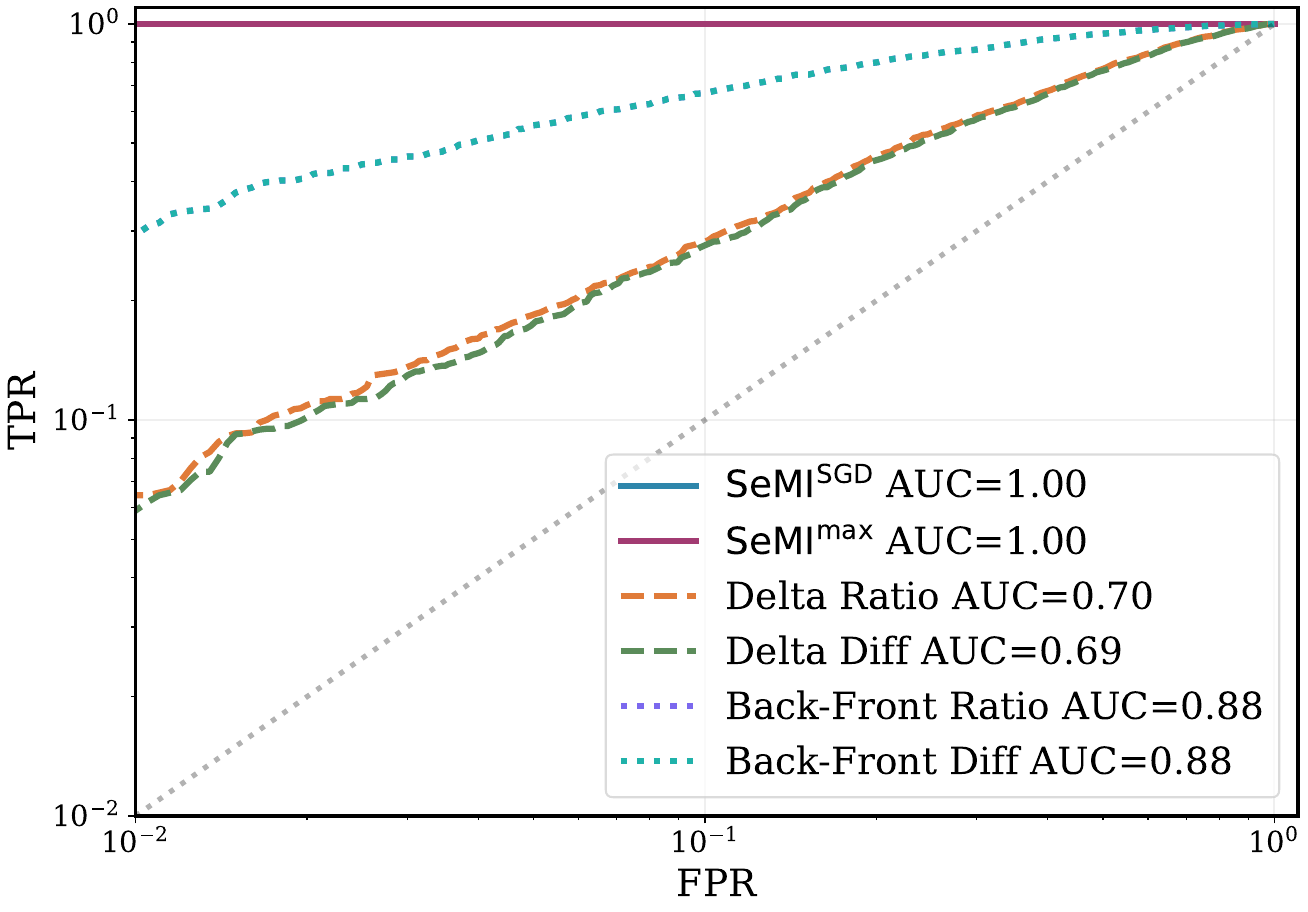}
        \caption{Non-private ($\varepsilon = \infty$)}
    \end{subfigure}
    \begin{subfigure}{0.48\textwidth}
        \includegraphics[width=\linewidth]{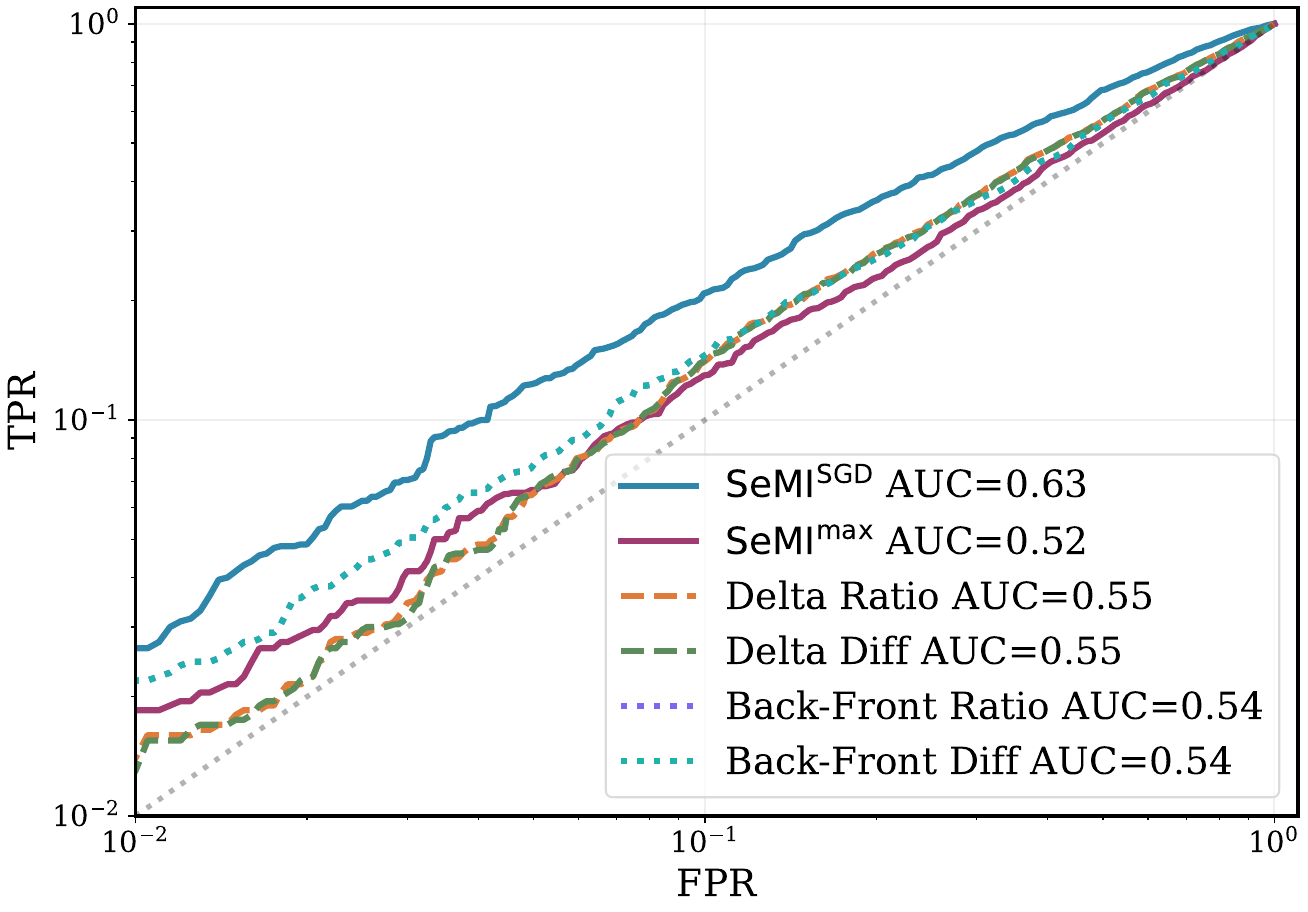}
        \caption{Private ($\varepsilon = 1.0$)}
    \end{subfigure}
    \caption{ROC curves for Fashion-MNIST pretrain-then-finetune with DP-SGD. (a) Without DP noise, $\SeMISGD$ achieves near-perfect discrimination. (b) With DP-SGD ($\varepsilon = 1.0$), $\SeMISGD$ maintains an advantage over heuristic baselines. Cut at FPR $= 0.01$.}
    \label{fig:exp14_roc_fmnist}
\end{figure}

\subsection{Effect of Insertion Time Selection}
\label{app:exp-tau-selection}

The main text reports privacy lower bounds averaged over insertion times $\tau \in \{1, \ldots, T\}$. Here we compare this averaging strategy against selecting the best $\tau$ post hoc, i.e., reporting the tightest bound across all insertion times.

Figure~\ref{fig:exp14_tau_selection} shows results for Fashion-MNIST. Solid lines show the averaged estimates; dashed lines show the maximum (best $\tau$) estimates. For $\SeMISGD$, selecting the best $\tau$ yields substantially tighter bounds, particularly at moderate privacy levels ($\varepsilon \in [0.5, 2]$). The gap between Average and Max indicates that some insertion times are more favorable for auditing than others.

This observation suggests that an auditor who can choose when to insert the target, or who audits at multiple insertion times and reports the best result, can obtain tighter privacy lower bounds.

\begin{figure}[h!]
    \centering
    \includegraphics[width=\textwidth]{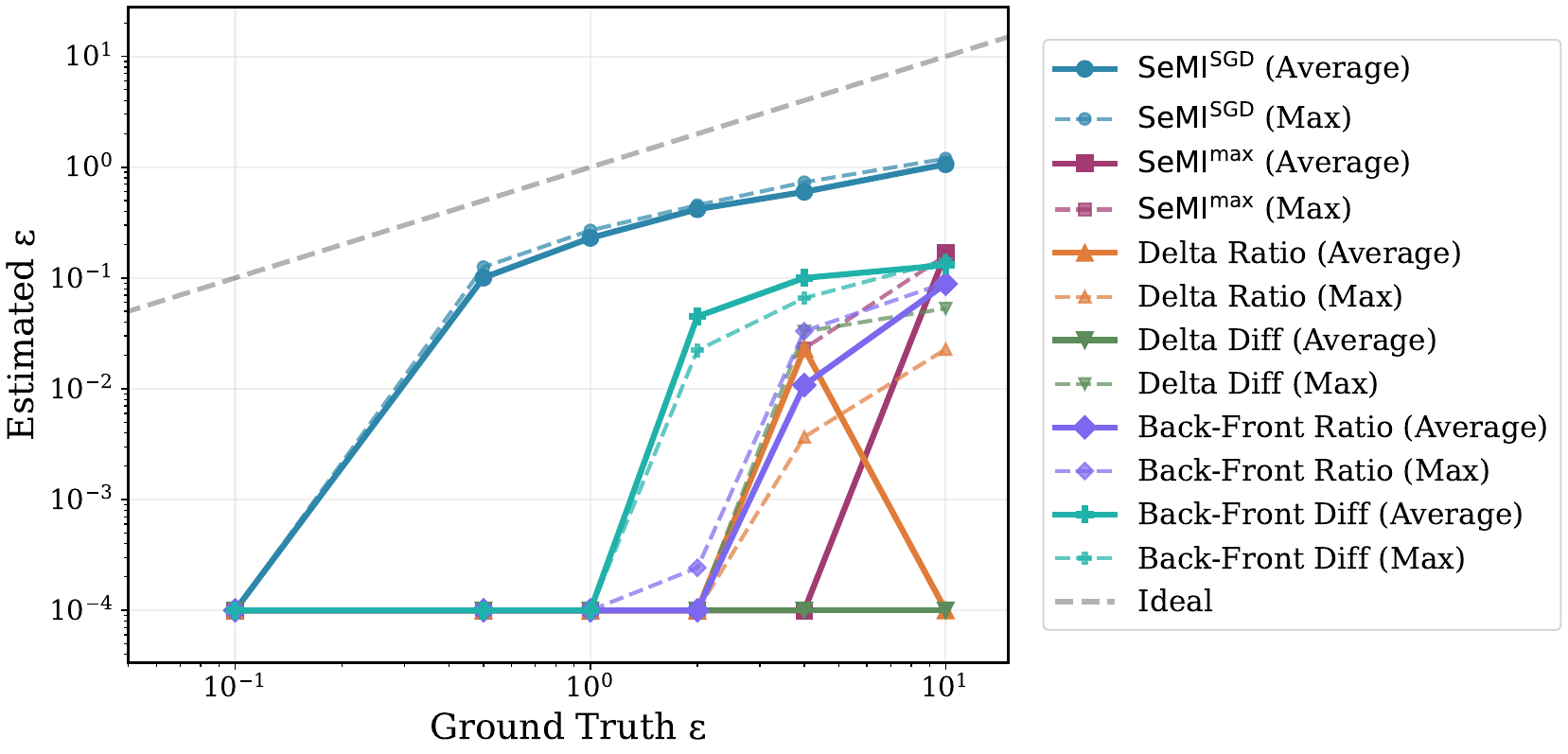}
    \caption{Effect of insertion time selection on Fashion-MNIST. Solid lines: averaged over $\tau$. Dashed lines: best $\tau$ (post hoc selection). Selecting the optimal insertion time yields tighter privacy lower bounds.}
    \label{fig:exp14_tau_selection}
\end{figure}



\end{document}